\newtheorem{remark}{Remark}
\title{Learning Mixture of Gaussians with Streaming Data}
\newcommand\est[2][]{{\boldsymbol{\mu}}_{#1}^{#2}}
\newcommand\kest[2]{{\boldsymbol{\mu}}_{#1}^{#2}}
\newcommand\true[1][]{\ensuremath{\boldsymbol{\mu}_{#1}^\star}}
\newcommand \ktrue[1]{\ensuremath{\boldsymbol{\mu}^\star_{#1}}}
\newcommand\sI{\ensuremath{\mathcal{I}}}
\newcommand\sN{\ensuremath{\mathcal{N}}}
\newcommand\bs[1]{\mathbf{#1}\xspace}
\newcommand{\poly}{\text{poly}}
\newcommand\R{\ensuremath{\mathbb{R}}} % Real numbers
\newcommand\eqdef{\ensuremath{\stackrel{\rm def}{=}}} % Equal by definition
\newcommand{\E}{\ensuremath{\mathbb{E}}} % Expectation
\newcommand\ie{\textit{i.e.}\ }
\newcommand{\prtrue}[1]{\widehat{\boldsymbol{\mu}}^*_{#1}}
\author{
  Aditi Raghunathan \\
  Stanford University \\
  \texttt{aditir@stanford.edu} \\
  \And
  Ravishankar Krishnaswamy \\
  Microsoft Research, India \\
  \texttt{rakri@microsoft.com} \\
  \And
  Prateek Jain \\
  Microsoft Research, India\\
  \texttt{prajain@microsoft.com}}
\begin{document}
% \nipsfinalcopy is no longer used

\maketitle

\begin{abstract}
In this paper, we study the problem of learning a mixture of Gaussians with streaming data: given a stream of $N$ points in $d$ dimensions generated by an unknown mixture of $k$ spherical Gaussians, the goal is to estimate the model parameters using a single pass over the data stream. We analyze a streaming version of the popular Lloyd's heuristic and show that the algorithm estimates all the unknown centers of the component Gaussians  accurately if they are sufficiently separated. Assuming each pair of centers are $C\sigma$ distant with $C=\Omega((k\log k)^{1/4}\sigma)$ and where $\sigma^2$ is the maximum variance of any Gaussian component, we show that asymptotically the algorithm estimates the centers optimally (up to certain constants); our center separation requirement matches the best known result for spherical Gaussians \citep{vempalawang}. For finite samples, we show that a bias term based on the initial estimate decreases at $O(1/{\rm poly}(N))$ rate while variance decreases at nearly optimal rate of $\sigma^2 d/N$.

Our analysis requires seeding the algorithm with a good initial estimate of the true cluster centers for which we provide an online PCA based clustering algorithm. Indeed, the asymptotic per-step time complexity of our algorithm is the optimal $d\cdot k$ while space complexity of our algorithm is $O(dk\log k)$.

In addition to the bias and variance terms which tend to $0$, the hard-thresholding based updates of streaming Lloyd's algorithm is agnostic to the data distribution and hence incurs an \emph{approximation error} that cannot be avoided. However, by using a streaming version of the classical \emph{(soft-thresholding-based)} EM method that exploits the Gaussian distribution explicitly, we show that for a mixture of two Gaussians the true means can be estimated consistently, with estimation error decreasing at nearly optimal rate, and tending to $0$ for $N\rightarrow \infty$.
\end{abstract}

\section{Introduction}
Clustering data into homogeneous clusters is a critical first step in any data analysis/exploration task and is used extensively to pre-process data, form features, remove outliers and visualize data. Due to the explosion in amount of data collected and processed, designing clustering algorithms that can handle \emph{large datasets} that do not fit in RAM is paramount to any big-data system. A common approach in such scenarios is to treat the entire dataset as a \emph{stream} of data, and then design algorithms which update the model after every few points from the data stream. In addition, there are several practical applications where the data itself is not available beforehand and is streaming in, for example in any typical online system like web-search.

For such a model, the algorithm of choice in practice is the so-called \emph{streaming $k$-means heuristic}. It is essentially a streaming version of the celebrated $k$-means algorithm or Lloyd's heuristic \cite{DudaHart2nd}. The basic $k$-means algorithm is designed for offline/batch data where each data point is assigned to the nearest centroid and the centroids are then updated based on the assigned points;  this process is iterated till the solution is locally optimal. The streaming version of the $k$-means algorithm assigns the new point from the stream to the closest centroid and only updates this centroid {\em immediately}. That is, unlike offline $k$-means which first assigns all the points to the respective centroids and then updates the centroids, the streaming algorithm updates the centroids after each point, making it much more space efficient. While  streaming $k$-means and its several variants are used heavily in practice, their properties such as solution quality, time complexity of convergence have not been studied widely. In this paper, we attempt to provide one of the first such theoretical studies of the streaming $k$-means heuristic. One of the big challenges in such a study is that even the (offline) $k$-means algorithm attempts to solve a non-convex NP-hard problem. Streaming data poses additional challenges due to large noise in each point that can deviate the solution significantly.

In the offline setting, clustering algorithms are typically studied under certain simplifying assumptions that bypasses the worst-case NP-hardness of these problems. One of the most widely studied setting is when the data is sampled from a \emph{mixture of well-separated Gaussians} \cite{dasgupta2007spectral,vempalawang,anandkumar}, which is also the generative assumption that we impose on the data. However, the online/streaming version of the $k$-means algorithm has not been studied in such settings. In this work, we design and study a variant of the popular online $k$-means algorithm where the data is streaming-in, we cannot store more than logarithmically many data points, and each data point is sampled from a mixture of well-separated spherical Gaussians. The goal of the algorithm is then to learn the means of each of the Gaussians; note that estimating other parameters like variance, and weight of each Gaussian in the mixture becomes simple once the true means are estimated accurately.

{\bf Our Results.} Our main contribution is the {\em first} bias-variance  bound for the problem of clustering with streaming data.
\iffalse Note that for standard streaming optimization problems like linear regression, the estimation error can be decomposed into a bias term which mainly depends on the error of the initial point, and the variance term which appears due to noise in the observation.
\fi
Assuming that the centers are separated by $C \sigma$ where $C = \Omega(\sqrt{\log k})$ and if we seed the algorithm with initial cluster centers that are $\leq C\sigma /20$ distance away from the true centers, then we show that the error in estimating the true centers can be decomposed into three terms and bound each one of them: {\bf (a)} the bias term, i.e., the term dependent on distance of true means to initial centers   decreases at a $1/{\rm poly}(N)$ rate, where $N$ is the number of data points observed so far, {\bf (b)} the variance term is bounded by $\sigma^2 \big (\frac{d \log N}{N} \big)$ where $\sigma$ is the standard deviation of each of the Gaussian, $d$ is the dimensionality of the data, and {\bf (c)} \emph{offline approximation error}: indeed, note that even the offline Lloyd's heuristic will have an approximation error due to its hard-thresholding nature. For example, even when $k=2$, and the centers are separated by $C \sigma$, around $\exp(-\frac{C^2}{8})$ fraction of points from the first Gaussian will be closer to the second center, and so the $k$-means heuristic will converge to centers that are at a squared distance of roughly $O(C^2)  \exp(-\frac{C^2}{8}) \sigma^2$ from the true means. We show that we essentially almost inherit this optimal offline error  in the streaming setting. 
Note that the above result holds at a center separation of $\Omega(\sqrt{\log k}\sigma)$ distance, which is substantially weaker than the currently best-known result of $\Omega (\sigma k^{1/4})$ for even the offline problem \cite{vempalawang}. However, as mentioned before, this only holds provided we have a good initialization. To this end, we show that when $C = \Omega(\sigma (k\log k)^{1/4})$, we can combine an \emph{online PCA} algorithm~\cite{HardtP14, jainjkns} with the batch $k$-means algorithm on a small seed sample of around $O(k \log k)$ points, to get such an initialization. Note that this separation requirement nearly matches the best-known result offline results~\cite{vempalawang}. 
Finally, we also study a soft-version of streaming $k$-means algorithm, which can also be viewed as the streaming version of the popular Expectation Maximization (EM) algorithm. We show that for mixture of two well-separated Gaussians, a variant of streaming EM algorithm recovers the above mentioned bias-variance bound but {\em without} the approximation error. That is, after observing infinite many samples, streaming EM converges to the true means and matches the corresponding offline results in \cite{balakrishnan2014statistical,daskalakis2016ten}; to the best of our knowledge this is also first such consistency result for the streaming mixture problem. However, the EM updates require that the data is sampled from mixture of Gaussians, while the updates of streaming Lloyd's algorithm are agnostic of the data distribution and hence same updates can be used to solve arbitrary mixture of sub-Gaussians as well.

{\bf Technical Challenges.} One key technical challenge in analyzing steaming $k$-means algorithm in comparison to the standard streaming regression style problems is that the offline problem itself is non-convex and moreover can only be solved approximately. Hence, a careful analysis is required to separate out the error we get in each iteration in terms of the bias, variance, and inherent approximation error terms. Moreover, due to the non-convexity, we are able to guarantee decrease in error only if each of our iterates lies in a small ball around the true mean. While this is initially true due to the initialization algorithm, our intermediate centers might escape these balls during our update. However, we show using a delicate martingale based argument that with high probability, our estimates stay within \emph{slightly larger balls} around the true means, which turns out to be sufficient for us.

{\bf Related Work.}
A closely related work to ours is an {\em independent} work by \cite{monteleoni} which studies a stochastic version of $k$-means for data points that satisfy a spectral variance condition which can be seen as a deterministic version of the mixture of distributions assumption. However, their method requires multiple passes over the data, thus doesn't fit directly in the streaming $k$-means setting. In particular, the above mentioned paper analyzes the stochastic $k$-means method only for highly accurate initial set of iterates which requires a large burn-in period of $t=O(N^2)$ and hence needs $O(N)$ passes over the data, where $N$ is the number of data points. Tensor methods \cite{anandkumar,hsukakade} can also be extended to cluster streaming data points sampled from a mixture distribution but these methods suffer from large sample/time complexity and might not provide reasonable results when the data distribution deviates from the assumed generative model.

In addition to the gaussian mixture model, clustering problems are also studied under other models such as data with small spectral variance \cite{kumar2010clustering}, stability of data \cite{balcan}, etc. It would be interesting to study the streaming versions in such models as well.

\textbf{Paper Outline.} We describe our models and problem setup in Section~\ref{sec:setup}. We then present our streaming $k$-means algorithm, and its proof overview in Sections~\ref{sec:algo} and~\ref{sec:analysis}. We then discuss the initialization procedure in Section~\ref{sec:init}. Finally we conclude with details of our streaming-EM algorithm in Section~\ref{sec:soft}. The full version of this paper appears in the supplementary material.

\section{Setup and Notation}
\label{sec:setup}
We assume that the data is drawn from a mixture of $k$
spherical Gaussians distributions, i.e.,
\begin{equation}\label{eq:xt}
\bs{x^t}\stackrel{i.i.d}{\sim} \sum_i w_i \sN(\ktrue{i}, \sigma^2I), \ktrue{i} \in \R^d~ \forall i = 1, 2, \hdots k
\end{equation}
where $\ktrue{i}\in \R^d$ is the mean of the $i$-th mixture component, mixture weights $w_i\geq 0$, and $\sum_i w_i=1$.
All the problem parameters (i.e., the true means, the variance $\sigma^2$ and the mixture weights) are unknown to the algorithm.
Using the standard streaming setup, where the $t^{th}$ sample $\bs{x^t} \in \R^d$ is drawn from
the data distribution, our goal is to produce an estimate $\hat{\bs{\mu}}_i$ of $\ktrue{i}$
for $i = 1, 2, \hdots k$ in a single pass over the data using bounded space.

{\bf Center Separation.} A suitable notion of signal to noise ratio for our problem turns out to be the ratio of minimum separation between the true centers and the maximum variance along any direction. We denote this ratio by $C =\min_{i, j} \frac{ \|\ktrue{i} - \ktrue{j} \|}{\sigma}$. For convenience, we also denote $\frac{\| \ktrue{i} - \ktrue{j} \|}{\sigma}$ by $C_{ij}$. Here and in the rest of the paper, $\| \bs{y} \|$ is the Euclidean norm of a vector $\bs{y}$. We use $\eta$ to denote the learning rate of the streaming updates and $\kest{i}{t}$ to denote the estimate of $\ktrue{i}$ at time $t$. 
For a cleaner presentation, we assume that all the mixture weights are $1/k$, but our results hold with general weights as long as an appropriate  center separation condition is satisfied. We omit these details in this presentation.

\section{Algorithm and Main Result}\label{sec:algo}
\label{sec:algorithm}
In this section, we describe our proposed streaming clustering algorithm and present our analysis of the algorithm. At a high level, we follow the approach of various recent results for (offline) mixture recovery algorithms \cite{vempalawang, kumar2010clustering}. That is, we initialize the algorithm with an SVD style operation which de-noises the data significantly and then apply Lloyd's heuristic (in an online manner). Note that the Lloyd's algorithm is agnostic to the underlying distribution and does not include distribution specific terms like variance etc.  Algorithm~\ref{alg:streaming} presents a pseudo-code of our algorithm. Note that for initialization we use the {\sf InitAlg} subroutine. 

Intuitively, the initialization algorithm first computes an online batch PCA in the for-loop. After this step, we perform an offline distance-based clustering on the projected subspace (akin to Vempala-Wang for the offline algorithm). This only uses few (roughly $k \log k$) samples since we only need estimates for centers within a suitable proximity from the true centers. The centers output are fed as the initial centers for the streaming update algorithm, which then, for each new sample, updates the current center which is closest to the sample, and iterates.

\begin{minipage}[t]{.5\textwidth}
	\begin{algorithm}[H]
		\caption{{\sf StreamKmeans}$(N, N_0)$}
		\label{alg:streaming}
		\begin{algorithmic}[1]
			\STATE {\bf Set} $\eta \leftarrow \frac{3k \log 3N}{N}$.
			\STATE {\bf Set} $\{\est[1]{0}, \dots, \est[k]{0}\}\leftarrow {\sf InitAlgo}(N_0)$.
			\FOR{$t = 1$ to $N$}
			\STATE Receive $\bs{x}^{t+N_0}$ given by the input stream
			\STATE $\bs{x}=\bs{x}^{t+N_0}$
			\STATE {\bf Let} $i_t = \arg \min_{i} \| \bs{x} - \est[i]{t-1} \|$.
			\STATE {\bf Set} $\est[i_t]{t} = (1-\eta) \est[i_t]{t-1} + \eta \bs{x}$
			\STATE {\bf Set} $\est[i]{t} = \est[i]{t-1}$ for $i \neq i_t$
			\ENDFOR
			\STATE Output: {$\est[1]{N}, \dots, \est[k]{N}$}
		\end{algorithmic}\vspace*{1.68cm}
	\end{algorithm}
\end{minipage}
\begin{minipage}[t]{.5\textwidth}
	\begin{algorithm}[H]
		\caption{{\sf InitAlg}$(N_0)$}
		\label{alg:init}
		\begin{algorithmic}
			\STATE $U\leftarrow$ random orthonormal matrix $\in \R^{d\times k}$
			\STATE $B=\Theta(d\log d)$, $S=0$
			\FOR{$t = 1$ to $N_0-k\log k$}
			\IF{$\text{mod}(t, B)=0$}
			\STATE $U \leftarrow QR(S\cdot U),\quad$ $S \leftarrow 0$
			\ENDIF
			\STATE Receive $\bs{x}^t$ as generated by the input stream
			\STATE $S=S+\bs{x}^t(\bs{x}^t)^T$
			\ENDFOR
			\STATE $X_{0}=[\bs{x}^{N_0-k\log k+1},  \dots, \bs{x}^{N_0}]$
			\STATE Form nearest neighbor graph using $U^TX_0$ and find connected components
			\STATE $[\bs{\nu}^0_1, \hdots, \bs{\nu}^0_k]\leftarrow $ mean of points in each component
			\STATE \textbf{Return:} $[\kest{1}{0}, \hdots, \kest{k}{0}]=[U\bs{\nu}^0_1, \hdots, U\bs{\nu}^0_k]$
		\end{algorithmic}
	\end{algorithm}
\end{minipage}

We now present our main result for the streaming clustering problem.
\begin{theorem}\label{thm:main}
	Let ${\bs{x^t}}$, $1\leq t\leq N+N_0$ be generated using a mixture of Gaussians \eqref{eq:xt} with $w_i=1/k$, $\forall i$. Let $N_0, N \geq O(1)k^3 d^3\log d$ and $C\geq \Omega((k\log k)^{1/4})$. Then, the mean estimates $(\est[1]{N}, \dots, \est[k]{N})$ output by Algorithm~\ref{alg:streaming} satisfies the following error bound:
	\[\E \left[ \sum_{i} \|\est[i]{N} - \true[i]\|^2 \right] \leq \underbrace{\frac{  \max_{i} \|\true[i]\|^2 }{\rm N^{\Omega(1)}}}_{{\rm bias}} +  O(k^3) \left( \underbrace{\sigma^2\frac{d \log N}{N} }_{\rm variance}+\underbrace{ \exp(-C^2/8) (C^2 + k) \sigma^2}_{\approx {\rm offline \,}k{\rm-means \, error}}  \right).\]

\end{theorem}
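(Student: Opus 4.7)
The plan is to first invoke the guarantee for \textsf{InitAlg} (which will be established separately in Section~\ref{sec:init}) to conclude that under the separation hypothesis $C \geq \Omega((k\log k)^{1/4})$, with high probability the initial estimates $\est[i]{0}$ lie within $\tfrac{C\sigma}{20}$ of the corresponding true centers $\true[i]$. This initialization puts us inside the basin where the hard-thresholding Lloyd step is contractive in expectation. With the initialization in hand, the rest of the argument is a recursive bias--variance analysis of the per-step update, coupled with a martingale argument ensuring the iterates stay inside the basin for all $N$ steps.

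The core step is an expansion of the squared error after one streaming update. Condition on the history and on the event that every $\est[i]{t-1}$ is close to $\true[i]$. Given the new point $\bs{x}$ drawn from component $j$, decompose $\bs{x} = \true[j] + \bs{g}$ with $\bs{g}\sim\sN(0,\sigma^2 I)$, and split into two cases according to whether the assigned index $i_t$ equals $j$ (correct) or not (misclassified). On the correct-assignment event, the update $\est[j]{t}=(1-\eta)\est[j]{t-1}+\eta\bs{x}$ gives the standard contraction
\[
\est[j]{t}-\true[j] \;=\; (1-\eta)(\est[j]{t-1}-\true[j]) + \eta\bs{g},
\]
while on the $i$-th coordinate when $i\neq i_t$ the error is unchanged. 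On the misclassified event (which occurs with probability at most $\exp(-C_{ij}^2/8)$ by a standard half-space probability bound for shifted Gaussians, using that $\|\est[i]{t-1}-\true[i]\|$ is small), the update moves the wrong center toward $\true[j]$, contributing an additive error on the order of $\eta^2(\|\true[j]-\true[i]\|^2+\sigma^2 d)$ multiplied by the misclassification probability. Taking expectations and summing across indices $i$, one obtains a one-step recursion of the form
\[
\E \sum_i \|\est[i]{t}-\true[i]\|^2 \;\leq\; (1-\tfrac{\eta}{k})\,\E \sum_i \|\est[i]{t-1}-\true[i]\|^2 \;+\; \eta^2\sigma^2 d \;+\; \eta\cdot O(k^2)\exp(-C^2/8)(C^2+k)\sigma^2,
\]
where the factor $1/k$ comes from the mixture weight and the $\eta^2\sigma^2 d$ from $\E\|\eta\bs{g}\|^2$.

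Unrolling this recursion for $N$ steps with $\eta=\tfrac{3k\log 3N}{N}$ yields the three terms in the theorem: the initial error is contracted by $(1-\eta/k)^N = \exp(-3\log 3N) = N^{-\Omega(1)}$, giving the bias term; the $\eta^2\sigma^2 d$ noise contributions telescope geometrically into $\tfrac{\eta\sigma^2 d}{1-(1-\eta/k)} \asymp \sigma^2 k\cdot\tfrac{d\log N}{N}$ (absorbed into the $O(k^3)$ prefactor), giving the variance term; and the misclassification contributions contribute the $\exp(-C^2/8)(C^2+k)\sigma^2$ offline-approximation term, again after telescoping.

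The main obstacle, and the step I would spend the most care on, is the inductive hypothesis that \emph{every} iterate $\est[i]{t}$ remains within the basin (say within radius $\tfrac{C\sigma}{10}$ of $\true[i]$) throughout the entire stream of $N$ steps; the one-step recursion above is only valid under this event, but the event is not deterministic because each step injects Gaussian noise on the order of $\eta\sigma\sqrt{d}$ and misclassification noise of magnitude $O(\eta C\sigma)$. The plan is to run a martingale concentration argument separately for each center $i$: the deviation $\est[i]{t}-\true[i]$ can be written as a sum of bounded martingale differences after subtracting the deterministic contraction, so Azuma/Freedman-type inequalities give that with probability at least $1-1/\mathrm{poly}(N)$ the deviation stays $O(\sigma\sqrt{\eta\log N}) = o(C\sigma)$ throughout. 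A union bound over $k$ centers and $N$ time steps closes the induction (this is where the lower bound $N_0,N\gtrsim k^3 d^3\log d$ enters, to make $\eta$ small enough that the per-step fluctuation is well below the basin radius). Combining the recursion with the high-probability basin-confinement event, and absorbing the low-probability complement into the bias and approximation terms, yields the claimed bound.
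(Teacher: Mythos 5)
Your overall strategy matches the paper's: seed with \textsf{InitAlg} under the $C\geq\Omega((k\log k)^{1/4})$ assumption to land within $\tfrac{C\sigma}{20}$ of each $\true[i]$, establish a one-step contraction of the form $E_{t+1}\leq(1-\Theta(\eta/k))E_t+\gamma$ under a basin condition $\sI_t$, unroll the recursion with $\eta=\Theta(k\log N/N)$ to get the bias/variance/approximation decomposition, and close the loop with a super-martingale argument showing $\sI_t$ holds for all $t\leq N$ w.h.p. That is exactly the structure of Theorems~\ref{thm:kthm}, \ref{thm:kmain}, \ref{thm:martingale} and Lemma~\ref{lem:per-cluster}.

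Two places where your sketch is imprecise in a way that would matter if taken literally. First, you attribute the misclassification contribution to an $\eta^2(\|\true[j]-\true[i]\|^2+\sigma^2 d)$ term times the misclassification probability. If that were the source, telescoping would give $\asymp \eta\exp(-C^2/8)(C^2+d)\sigma^2$, which vanishes as $N\to\infty$ and contradicts the non-vanishing offline approximation error in the theorem. The non-vanishing term actually comes from the \emph{cross} term $2\eta(1-\eta)\E\langle\est[i]{t}-\true[i],\,g_i^t(\bs{x})(\bs{x}-\true[i])\rangle$ (the $T_2$ of Lemma~\ref{lem:per-cluster}), which carries a factor $\eta(1-\eta)$, and bounding it requires the subspace-projection observation (the event $g_i^t(\bs{x})=1$ depends only on a $(k+1)$-dimensional subspace) that turns the naive $(C^2+d)$ into the stated $(C^2+k)$. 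Your displayed recursion has the right $\eta$ scaling, so this reads more like loose prose than a broken proof, but the explanation preceding it does not support the recursion you then write down. Second, the one-step increments $e_{t+1}^i-e_t^i$ are \emph{not} bounded; they are sub-Gaussian (the contribution from $\eta\|\bs{z}\|$ can be arbitrarily large), so plain Azuma does not apply. The paper uses a sub-Gaussian variant (Shamir 2011), and additionally needs the bookkeeping of starting a fresh super-martingale sequence each time an error crosses $\tfrac{C\sigma}{20}$ from below (with a union bound over $Nk$ such sequences). Your plan to ``run a martingale concentration argument separately for each center'' captures the spirit but omits both of these technical necessities. Finally, recall that $E_t$ in the recursion is conditioned on $\sI_t$ whereas the theorem wants an unconditional expectation, so one must relate $E_{t+1}$ to $\overline E_{t+1}$ via $\Pr[\sI_{t+1}]$; the paper handles this explicitly and your sketch gestures at it only via ``absorbing the low-probability complement.''
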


Our error bound consists of three key terms: {\em bias, variance, and offline $k$-means error}, with bias and variance being standard statistical error terms: (i) bias is dependent on the initial estimation error and goes down at $N^\zeta$ rate where $\zeta>0$ is a large constant; (ii) variance error is the error due to noise in each observation ${\bs{x^t}}$ and goes down at nearly optimal rate of $\approx \sigma^2 \frac{d}{N}$ albeit with an extra $\log N$ term as well as worse dependence on $k$; and (iii) an offline $k$-means error, which is the error that even the offline Lloyds' algorithm would incur for a given center separation $C$. Note that while sampling from the mixture distribution, $\approx \exp(-C^2/8)$ fraction of data-points can be closer to the true means of other clusters rather than their own mean. Hence, in general it is not possible to assign back those points to that cluster and hence will lead to estimation error for the hard assignment based Lloyd's heuristic. See Figure~\ref{fig:kerror} for an illustration. This error can however be avoided by performing soft updates, which is discussed in Section~\ref{sec:soft}.

{\bf Remarks.} Even if the weights $w_i\neq 1/k$, our algorithm remains the same and our analysis goes through with simple modifications. Secondly, our proofs also follow when the gaussians have different $\sigma_i$, as long as the necessary conditions are satisfied with $\sigma=\max_i \sigma_i$.

{\bf Time, space, and sample complexity}: Our algorithm has nearly optimal time complexity of $O(d\cdot k)$ per iteration; the initialization algorithm requires about $O(d^4 k^3)$ time. Space complexity of our algorithm is $O(d k \cdot \log k)$ which is also nearly optimal. Finally, the sample complexity is $O(d^3 k^3)$, which is a loose upper bound and can be significantly improved by a more careful analysis.

{\bf Analysis Overview.}
The proof of Theorem ~\ref{thm:main} essentially follows from the two theorems stated below: a) update analysis \emph{given a good initialization}; b) {\sf InitAlg} analysis for showing such an initialization.
\begin{theorem}[Streaming Update]
	\label{thm:kthm}
	Let $\bs{x^t}$, $1\leq t\leq N+N_0$ be generated using a mixture of Gaussians \eqref{eq:xt} with $w_i=1/k$, $\forall i$, and $N = \Omega(k^3 d^3 \log kd)$. Also, let the center-separation $C \geq \Omega(\sqrt{\log k})$, and also suppose our initial centers $\est[i]{0}$ are such that for all $1 \leq i \leq k$, $\| \est[i]{0} - \true[i] \| \leq \frac{C \sigma}{20}$.
	
	Then, the streaming update of {\sf StreamKmeans}$(N, N_0)$ , i.e, Steps 3-8 of Algorithm~\ref{alg:streaming} satisfies: \[\E \left[ \sum_{i} \|\est[i]{N} - \true[i]\|^2 \right] \leq {\frac{\max_{i} \|\true[i]\|^2 }{\rm N^{\Omega(1)}}} +  O(k^3) \left( { \exp(-C^2/8) (C^2 + k) \sigma^2} + {\frac{ \log N}{N} d \sigma^2} \right).\]
\end{theorem}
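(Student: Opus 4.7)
The natural approach is a stochastic-approximation analysis of the per-cluster error vectors $\bs{e}_i^t \defeq \est[i]{t} - \true[i]$. Writing the (unknown) component index that generated $\bs{x}^t$ as $j_t^\star$ (uniform on $[k]$) and $\bs{x}^t = \true[j_t^\star] + \sigma \bs{g}^t$ with $\bs{g}^t \sim \sN(\bzero, I)$ independent of the past, the algorithm's update can be rewritten as
\begin{equation*}
\bs{e}_{i_t}^t = (1-\eta)\,\bs{e}_{i_t}^{t-1} + \eta\bigl(\true[j_t^\star] - \true[i_t]\bigr) + \eta\sigma\bs{g}^t, \qquad \bs{e}_i^t = \bs{e}_i^{t-1} \text{ for } i \neq i_t.
\end{equation*}
This cleanly separates the dynamics of each cluster into three effects: a contractive bias decay, a zero-mean isotropic noise, and a biased drift that is nonzero only on the misclassification event $\{i_t \neq j_t^\star\}$. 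The whole analysis revolves around bounding each of these in expectation.

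A first observation is that as long as every current iterate satisfies $\|\bs{e}_i^{t-1}\| \leq C\sigma/10$, a standard Gaussian tail analysis of the nearest-center rule gives $\Pr[i_t \neq j_t^\star] \leq k \exp(-C^2/8)$, and on the misclassification event $\|\true[j_t^\star] - \true[i_t]\| = O(C\sigma)$. Maintaining this control, however, requires preserving the \emph{invariant} that no iterate escapes the ball of radius $C\sigma/10$ around its target throughout all $N$ steps --- the most delicate point. I would introduce a stopping time $\tau \defeq \inf\{t : \max_i \|\bs{e}_i^t\| > C\sigma/10\}$ and, working with the stopped process, construct a supermartingale
\begin{equation*}
Y_t \defeq \sum_i \|\bs{e}_i^{t\wedge\tau}\|^2 - \alpha \cdot (t\wedge\tau),
\end{equation*}
where the drift compensator $\alpha$ absorbs the $O(\eta^2\sigma^2 d)$ per-step variance contribution and the $O(\eta^2 \exp(-C^2/8)(C^2+k)\sigma^2)$ worst-case misclassification drift. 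A sub-Gaussian concentration argument of Freedman/Azuma type, driven by the light-tailed $\eta\sigma\bs{g}^t$ increments, then shows $\Pr[\tau \leq N] \leq 1/\mathrm{poly}(N)$. The key subtlety is that a naive per-step union bound on the $\exp(-C^2/8)$ misclassification probability would not give a polynomially small failure bound; the biased drift must enter the analysis only through the compensator, so that only the Gaussian noise needs genuine concentration.

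On the good event $\{\tau > N\}$, I would unroll the recursion cluster by cluster. Let $T_i \defeq \#\{t : i_t = i\}$; Binomial concentration gives $T_i \geq N/(2k)$ with high probability. The bias component contracts as $(1-\eta)^{T_i}\|\bs{e}_i^0\|^2 \leq \exp(-\eta T_i)\|\bs{e}_i^0\|^2$, which with $\eta = 3k\log(3N)/N$ and $T_i \geq N/(2k)$ is at most $\|\true[i]\|^2 / N^{\Omega(1)}$. The zero-mean Gaussian contribution, being a weighted sum of independent noises, telescopes via the geometric series to $O(\eta\sigma^2 d) = O(k\sigma^2 d \log N / N)$ per cluster. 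The misclassification drift has per-step magnitude at most $\eta \cdot O(C\sigma)$, occurring with probability $O(\exp(-C^2/8))$, so the same unrolling yields a steady-state squared error of $O(\exp(-C^2/8)(C^2 + k)\sigma^2)$ per cluster --- the additive $k$ accounting for the $d$-dimensional Gaussian tail of the misclassified sample. Summing the three contributions over the $k$ clusters gives the three terms in the theorem with the stated $O(k^3)$ prefactor.

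Finally, on the bad event $\{\tau \leq N\}$ I would use the crude bound $\|\bs{e}_i^t\|^2 \leq 2\|\true[i]\|^2 + O(\sigma^2 d \log N)$, valid because each iterate is a convex combination of sub-Gaussian samples; multiplied by the $1/\mathrm{poly}(N)$ probability of this event, its contribution is absorbed into the bias term. The principal obstacle throughout is the martingale argument of the second step: streaming Lloyd's is not stochastic gradient descent for any fixed convex objective, the misclassification bias itself depends on the current iterates, and the contraction rate $\eta$ is tiny, so drift-control and concentration must be carried out in a single self-bootstrapping argument on the stopped trajectory.
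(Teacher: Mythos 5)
Your proposal captures exactly the same architecture as the paper: condition on a proximity invariant (a stopping time $\tau$, equivalently the paper's event $\sI_t$), prove a per-step expected contraction with an additive drift whose size splits into variance and misclassification terms, control the invariant by an Azuma-type supermartingale argument, and absorb the bad event into the bias term with a crude bound. The three terms and their rates come out the same. A few implementation differences are worth noting, since the paper's choices sidestep issues you would hit if you carried your plan out literally.

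First, the paper does \emph{not} run a martingale on the squared errors $\sum_i \|\bs{e}_i^{t\wedge\tau}\|^2$ as you suggest. Instead it tracks the unsquared per-cluster errors $e^i_t = \|\est[i]{t}-\true[i]\|$ (Appendix~\ref{app:martin}), precisely because the one-step increments $e^i_{t+1}-e^i_t$ have \emph{sub-Gaussian} tails, whereas the squared-norm increments contain an $\eta^2\sigma^2\|\bs{g}^t\|^2$ chi-square term that is only sub-exponential. Your phrase ``sub-Gaussian concentration of Freedman/Azuma type'' would not apply cleanly to your $Y_t$; you would either need a Bernstein/sub-exponential martingale inequality, or switch to unsquared norms as the paper does (using Shamir's variant of Azuma for sub-Gaussian differences, and Jensen's inequality to get the supermartingale drift on $e^i_t$ from the drift on $\widetilde E^i_t$). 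The paper also starts a fresh stopped series each time $e^i_t$ crosses $C\sigma/20$ from below and union-bounds over the $O(Nk)$ such series, rather than running one global stopped process; that bookkeeping is what makes the ``only show decrease when the error is already large'' step rigorous, since the per-step drift in Lemma~\ref{lem:per-cluster} is only guaranteed negative when $\widetilde E^i_t \gtrsim C^2\sigma^2/400$.

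Second, your unrolling ``cluster by cluster via $T_i$'' is a legitimate but different route from the paper's. The paper avoids any concentration on $T_i$ by directly iterating $E_{t+1} \leq (1-\eta/4k)E_t + \gamma$ on the conditional expectation $E_t = \E[\sum_i \|\bs{e}_i^t\|^2 \mid \sI_t]$, where the $1/k$ loss in the contraction rate comes from Lemma~\ref{lem:high-prob} ($\Pr[g^t_i(\bs{x})=1] \geq 1/2k$) rather than from counting visits. Your route works too (using $\Pr[i_t=i\mid\mathcal F_{t-1}]\geq 1/2k$ and an Azuma bound on $T_i$, since the $i_t$ are not i.i.d.), but it's an extra moving part the paper deliberately avoids.

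Third, your claim that on the misclassification event $\|\true[j_t^\star]-\true[i_t]\| = O(C\sigma)$ is a simplification: far-apart cluster pairs can have $C_{ij} \gg C$. This is harmless because the misclassification probability to cluster $i$ from cluster $j$ is $\exp(-\Omega(C_{ij}^2))$ rather than $\exp(-\Omega(C^2))$, and the paper's Lemmas~\ref{lem:ibeatsj}--\ref{cor:ibeatsj} and the two-case analysis in Lemma~\ref{lem:ip-if-iwins-on-j} are exactly how one makes the pair-by-pair bookkeeping go through, charging pairs whose iterates are far from centers to $\widetilde V_t/k^5$ and near pairs to the $\exp(-C^2/8)(C^2+k)\sigma^2$ offline term. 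You would need this kind of case split for your steady-state drift bound to be correct.
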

Note that our streaming update analysis requires only $C=\Omega(\sqrt{\log k})$ separation but needs appropriate initialization that is guaranteed by the below result.
\begin{theorem}[Initialization]
	\label{thm:initialization}
	Let $\bs{x^t}$, $1\leq t\leq N_0$ be generated using a mixture of Gaussians \eqref{eq:xt} with $w_i=1/k$, $\forall i$. Let $\kest{1}{0}, \kest{2}{0}, \hdots \kest{k}{0}$ be the output of Algorithm~\ref{alg:init}.
	If $C = \Omega \Big ( (k\log k)^{1/4}\Big )$
	and $N_0= \Omega \Big ( d^3 k^3 \log dk \Big)$, then w.p. $\geq 1-1/poly(k)$, we have $\max_i\| \kest{i}{0} - \ktrue{i} \| \leq \frac{C}{20}\sigma$
\end{theorem}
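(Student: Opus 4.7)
The plan is to separately analyze the two stages of Algorithm~\ref{alg:init}: (a) the online PCA loop, which produces a subspace $U \in \R^{d \times k}$ close to the top-$k$ eigenspace $U^\star$ of the population covariance $\Sigma = \sigma^2 I + \tfrac{1}{k}\sum_i \ktrue{i}(\ktrue{i})^T$; and (b) the nearest-neighbor clustering of the projected batch $U^T X_0$, followed by averaging and lifting back via $U$. A convenient structural fact is that $U$ is computed from the first $N_0 - k\log k$ samples while the clustering is performed on the last $k\log k$ samples, so $U$ and $X_0$ are independent; conditioned on $U$, each $U^T \bs{\xi}^t$ is an isotropic Gaussian in $\R^k$.

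For stage (a) I first establish the spectral gap. Since $\Sigma = \sigma^2 I + M$ with $M = \tfrac{1}{k}\sum_i \ktrue{i}(\ktrue{i})^T$ of rank at most $k$, one has $\lambda_{k+1}(\Sigma) = \sigma^2$ and $\lambda_k(\Sigma) - \lambda_{k+1}(\Sigma) = \lambda_k(M) = \tfrac{1}{k}\sigma_k(A)^2$, where $A = [\ktrue{1}\mid\cdots\mid\ktrue{k}]$. Using the pairwise separation $\|\ktrue{i} - \ktrue{j}\| \geq C\sigma$ and a linear-algebra argument on the Gram matrix $A^T A$, I would lower-bound $\sigma_k(A)^2 = \Omega(C^2 \sigma^2)$, giving a gap of $\Omega(C^2 \sigma^2 / k)$. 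I would then invoke the convergence guarantees for the streaming block power method of~\cite{HardtP14, jainjkns}: with block length $B = \Theta(d\log d)$ and $N_0 = \Omega(d^3 k^3 \log dk)$ samples, we obtain $\sin\theta(U, U^\star) \leq \epsilon$ for any target $\epsilon = 1/\mathrm{poly}(k,d)$ with probability $\geq 1 - 1/\mathrm{poly}(k)$.

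For stage (b) I would control two things in the projected space. Writing $\bs{y}^t = U^T\bs{x}^t = U^T\ktrue{i} + U^T\bs{\xi}^t$, the within-cluster squared distance $\|\bs{y}^t - \bs{y}^s\|^2$ is a scaled chi-square centered at $2\sigma^2 k$ with Gaussian-type deviation $O(\sigma^2\sqrt{k\log k})$ after a union bound over all $O((k\log k)^2)$ pairs. The cross-cluster squared distance equals $\|U^T(\ktrue{i} - \ktrue{j})\|^2 + 2\sigma^2 k + 2\langle U^T(\ktrue{i} - \ktrue{j}), U^T(\bs{\xi}^t - \bs{\xi}^s)\rangle + \mathrm{(chi\text{-}square\ noise)}$, where the cross term is mean-zero Gaussian of standard deviation $O(\sigma \|\ktrue{i} - \ktrue{j}\|)$. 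Combined with $\|U^T(\ktrue{i} - \ktrue{j})\| \geq \|\ktrue{i} - \ktrue{j}\|(1 - \sin\theta(U,U^\star))$, the difference between between-cluster and within-cluster squared distances is at least $\|\ktrue{i} - \ktrue{j}\|^2 - O(\sigma\|\ktrue{i} - \ktrue{j}\|\sqrt{\log k}) - O(\sigma^2\sqrt{k\log k})$, which is positive precisely under the assumption $C \geq \Omega((k\log k)^{1/4})$. I would then pick any NN threshold in this gap to argue that every connected component coincides with a true cluster, and a Chernoff bound ensures $|S_i| = \Theta(\log k)$ per cluster so that each component is also internally connected.

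Finally, the output $\kest{i}{0} = U U^T \bar{\bs{x}}_{S_i}$ decomposes as $U U^T \ktrue{i} + U U^T \bar{\bs{\xi}}_{S_i}$ when $S_i$ recovers exactly the $i$-th cluster; the first summand differs from $\ktrue{i}$ by at most $\sin\theta(U, U^\star)\|\ktrue{i}\|$, while the second is an empirical Gaussian mean projected to a $k$-dimensional subspace, of norm $O(\sigma\sqrt{k/|S_i|})$ with high probability. Chaining these error terms with the stage (a) bound on $\sin\theta$ yields $\max_i\|\kest{i}{0} - \ktrue{i}\| \leq C\sigma/20$. The main obstacle is the joint tuning of the NN radius and the PCA accuracy $\epsilon$: the former must balance $\sqrt{\log k}$-scale within-cluster tails against $C$-scale cross-cluster drift, while the latter must be small enough that the $\sin\theta(U,U^\star)\|\ktrue{i}\|$ term is absorbed into $C\sigma/20$. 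The independence of $U$ and $X_0$ is what makes this juggling tractable, since it converts all projected noise into standard isotropic $k$-dimensional Gaussians amenable to sharp tail bounds.
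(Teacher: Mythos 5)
Your proposal follows the same two-stage structure as the paper's proof: online PCA to obtain $U$, NN clustering on the projected batch, then lift the per-cluster means back via $U$. The paper likewise invokes \cite{HardtP14} for the PCA guarantee, likewise decomposes within- and between-cluster squared distances after projection using $\chi^2$ tails and Gaussian cross-terms, and likewise concludes by averaging each recovered cluster and combining the projection error with the averaged-noise error.

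The two places you diverge are worth flagging. First, you make the spectral-gap computation explicit ($\lambda_k(\Sigma)-\lambda_{k+1}(\Sigma)=\sigma_k(A)^2/k=\Omega(C^2\sigma^2/k)$); the paper never states this, and it is in fact needed to invoke the streaming PCA guarantee, so this is a genuine improvement in rigor. Second, you phrase the PCA error as $\sin\theta(U,U^\star)\cdot\|\ktrue{i}\|$, whereas the paper directly asserts an absolute bound $\|UU^T\ktrue{i}-\ktrue{i}\|\leq 0.1\sigma$ from \cite{HardtP14}. Since $\ktrue{i}$ lies in the top-$k$ eigenspace of $\Sigma$, these are equivalent up to a $\|\ktrue{i}\|$ factor; your formulation makes the implicit dependence on the scale of the means visible, which the paper leaves to the reader. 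Relatedly, your bound on the projected averaged noise, $\|U^T\bar{\bs{\xi}}_{S_i}\|=O(\sigma\sqrt{k/|S_i|})$, is the correct scaling for the norm of a $k$-dimensional Gaussian average, whereas the paper's display \eqref{eq:init3} claims $\sigma\sqrt{\log k/|S_i|}$, which seems to drop the $\sqrt{k}$ from the ambient dimension of the projected space. With $|S_i|=\Theta(\log k)$ points per cluster, this distinction matters: your (sharper, more careful) bound yields $O(\sigma\sqrt{k/\log k})$, which is not obviously below $C\sigma/20=\Theta(\sigma(k\log k)^{1/4})$ for all $k$. To close this gap one would either need to take more than $k\log k$ seed points in $X_0$ or tighten the separation requirement; the paper's stated constants appear not to account for this, so your version of the argument actually surfaces a real issue rather than introduces one.

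Beyond that, your sketch matches the paper's structure closely: the NN threshold argument, the union bound over $O((k\log k)^2)$ pairs, and the Chernoff bound on $|S_i|$ all appear in the paper in essentially the same form. The remaining gap in your write-up is that the final chaining step is asserted ("yields $\max_i\|\kest{i}{0}-\ktrue{i}\|\leq C\sigma/20$") without working out the constants; given the discrepancy noted above, carrying this through carefully is exactly where the argument requires the most care.
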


\section{Streaming Update Analysis} \label{sec:analysis}
At a high level our analysis shows that at each step of the streaming updates, the error decreases on average. However, due to the non-convexity of the objective function we can show such a decrease only if the current estimates of our centers lie in a small ball around the true centers of the gaussians. Indeed, while the initialization provides us with such centers, due to the added noise in each step, our iterates may occasionally fall outside these balls. To overcome this, we use a careful Martingale based argument to show that, with high probability, the candidate centers maintained by our algorithm lie in \emph{slightly larger balls} around the true centers \emph{in every iteration} of the algorithm. We therefore divide our proof in two parts: a) first we show in Section~\ref{sec:single} that the error decreases in expectation, assuming that the current estimates lie in a reasonable neighborhood around the true centers; and b) in Section~\ref{sec:mart}) we show using a martingale analysis that with high probability, each iterate satisfies the required neighborhood condition if the initialization is good enough.

We formalize the required condition for our per-iteration error analysis below:
\begin{definition}
	\label{def:It}
	We say that a sample path of the algorithm satisfies the condition $\sI_t$ at time $t$ if $\max_{i} \| \est[i]{t'} - \true[i] \| \leq \frac{C \sigma}{10}$ holds for all $0 \leq t' \leq t$. 
\end{definition}

\iffalse
\subsection{Progress condition}
\label{sec:progress}
At time $t$, our estimates
satisfy the progress condition if $\forall i, \| \kest{t}{i} - \ktrue{i} \| \leq \frac{C}{10} \sigma$,
where $C \sigma$ is the minimal true inter-cluster separation as defined
in Section ~\ref{sec:setup}. For convenience of notation, we define $\sI_t = 1$
if $\forall i, \| \kest{t}{i} - \ktrue{i} \| \leq \frac{C}{10} \sigma$.

We now state the main theorem for a single iteration assuming our progress
condition, $\sI_t = 1$ and provide a proof sketch.
\fi

\subsection{Error Reduction in Single Iteration} \label{sec:single}
Assume that for a given $t$, our estimates satisfy $\sI_t$, and let $\omega$ denote the current sample path of our algorithm. Let the current errors for each cluster be denoted by $\widetilde{E}^i_{t} =  \| \est[i]{t} - \true[i] \|^2$; Now, let $\widehat{E}^i_{t+1} = \E_{{\bf{x}^{t+N_0+1}}} \left[ \| \est[i]{t+1} - \true[i] \|^2 \, | \omega \right]$ by the \emph{expected error} of our new centers \emph{after a single streaming update for the next sample}. Let $\widetilde{V}_{t} = \max_{i} \widetilde{E}^i_{t}$ to be the maximum cluster error at time $t$. Finally, let $E^i_t = \E \left[ \| \est[i]{t} - \true[i] \|^2 \, | \, \sI_t \right]$ be the expected error conditioned on $\sI_t$, and let $E_t = \sum_i E^i_t$. Our main lemma toward showing Theorem~\ref{thm:kthm} is the following.

	\begin{lemma} \label{lem:per-cluster}
		If $\sI_t$ holds and $C \geq \Omega(\sqrt{\log k})$, then for all $i$, we have
		\begin{align*}
		\widehat{E}^{i}_{t+1} \leq& (1-\frac{\eta}{2k})  \widetilde{E}^i_t +  \frac{\eta}{k^5}  \widetilde{V}_t +
		O(1) \eta^2  d \sigma^2 + O(k) \eta (1-\eta) \exp(-C^2/8) (C^2+k) \sigma^2 \, .
		\end{align*}
	\end{lemma}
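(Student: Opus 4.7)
The plan is to begin from the update rule $\est[i]{t+1} = \est[i]{t} + \eta \1[i_t = i](\bx - \est[i]{t})$ and expand $\|\est[i]{t+1} - \true[i]\|^2$ about $\true[i]$, taking expectation over the fresh sample $\bx$ conditional on $\omega$, to obtain
\[
\widehat{E}^i_{t+1} \;=\; (1 - 2\eta\, p_i)\,\widetilde{E}^i_t \;+\; 2\eta\, T_1 \;+\; \eta^2\, T_2,
\]
where $p_i := \Pr[i_t = i \mid \omega]$, $T_1 := \E[\1[i_t=i]\,\inner{\est[i]{t} - \true[i]}{\bx - \true[i]}]$, and $T_2 := \E[\1[i_t=i]\,\|\bx - \est[i]{t}\|^2]$. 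Each of these expectations splits along the mixture components as $\tfrac{1}{k}\sum_j (\cdot)_j$. The strategy is to show that the $j=i$ branch supplies the main contraction and the $O(\eta^2 d \sigma^2)$ variance, while each $j \neq i$ branch contributes only the exponentially small ``offline $k$-means'' error.

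For $j=i$, write $\bx = \true[i] + \bg$ with $\bg \sim \sN(0, \sigma^2 I)$. Under $\sI_t$, the event $\{i_t \neq i\}$ is contained in a union of half-spaces $\{\inner{\bv_{j'}}{\bg} > c_{j'}\}$ (one per rival $j'$) with $c_{j'} \geq \Omega(C\sigma)$, since the perpendicular bisector of $\est[i]{t}$ and $\est[j']{t}$ lies at distance $\geq (4/10)C\sigma - C\sigma/10$ from $\true[i]$. Hence $\Pr_{\bx \sim \sN(\true[i],\sigma^2 I)}[i_t=i] \geq 1 - (k-1)\exp(-\Omega(C^2))$, which after the $1/k$ averaging yields a $-\tfrac{2\eta}{k}\widetilde{E}^i_t$ contraction -- of which I only need half to recover the stated $(1-\eta/(2k))$ factor. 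For the $T_1$ piece at $j=i$ I use $\E[\bg] = 0$ to rewrite it as $-\E[\1[i_t \neq i]\inner{\est[i]{t}-\true[i]}{\bg}]$ and bound each half-space summand via the Gaussian identity $\E[\1[\inner{\bv}{\bg}>c]\inner{\bu}{\bg}] = \sigma\,\inner{\bu}{\bv}\,\exp(-c^2/(2\sigma^2))/\sqrt{2\pi}$ for unit $\bv$, which aggregates to $O(k\,\|\est[i]{t}-\true[i]\|\,\sigma\,\exp(-\Omega(C^2)))$. The $T_2$ piece at $j=i$ is handled crudely by $\E\|\bx-\est[i]{t}\|^2 \leq 2d\sigma^2 + 2\widetilde{E}^i_t$, which after the $\eta^2/k$ prefactor accounts for the variance term.

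For each $j\neq i$, the same half-space calculation applied to $\bx\sim\sN(\ktrue{j},\sigma^2 I)$ yields $p_{i,j}:=\Pr[i_t=i] \leq \exp(-\Omega(C_{ij}^2))$. I would then split $\bx - \true[i] = (\ktrue{j}-\true[i]) + \bg$ and likewise for $\bx-\est[i]{t}$; the deterministic parts contribute $p_{i,j}\|\est[i]{t}-\true[i]\| C_{ij}\sigma$ to $T_1$ and $p_{i,j}(C_{ij}^2+d)\sigma^2$ to $T_2$, while the Gaussian parts are controlled again by the half-space identity. The crucial algebraic fact is that $C_{ij}^2\exp(-\Omega(C_{ij}^2)) = O(\exp(-\Omega(C^2)))$ uniformly in $C_{ij}\geq C$ (since $y\mapsto y\,e^{-y}$ is bounded), so an arbitrarily large inter-cluster separation does not blow up the cross-contamination bound. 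Summing over the $k-1$ wrong components produces the claimed $O(k)\eta\exp(-C^2/8)(C^2+k)\sigma^2$ term. The $\tfrac{\eta}{k^5}\widetilde{V}_t$ term arises because one of the cross-contamination integrals is most conveniently majorized by $\exp(-\Omega(C^2))\widetilde{V}_t$, which beats $\widetilde{V}_t/k^5$ under the hypothesis $C\geq\Omega(\sqrt{\log k})$ with a large enough constant (so that $k^6\exp(-\Omega(C^2))\leq 1$).

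The main technical obstacle is obtaining the sharp exponent in the Gaussian-half-space computations driving $T_1$: a naive Cauchy--Schwarz step gives only $\exp(-\Omega(C^2)/2)$ and loses a factor of two in the exponent, destroying the match with the claimed $\exp(-C^2/8)$ offline-error rate. The fix is to decompose the test direction $\bu := (\est[i]{t}-\true[i])/\|\est[i]{t}-\true[i]\|$ along and perpendicular to the half-space normal $\bv$: the perpendicular component has mean zero conditional on the half-space indicator by independence of Gaussian coordinates, and the parallel component delivers the sharp exponent directly from the univariate Mills-ratio computation displayed above. Once the three buckets -- main contraction, $j=i$ Gaussian tail corrections, and $j\neq i$ cross-contamination -- are assembled and simplified using $\widetilde{E}^i_t\leq\widetilde{V}_t$ where convenient, the stated inequality follows by routine collection of terms.
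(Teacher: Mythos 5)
Your high-level skeleton is right and in fact mirrors the paper's: the same algebraic expansion of $\|\est[i]{t+1}-\true[i]\|^2$, a contraction factor from lower-bounding $\Pr[i_t=i]$, the $\eta^2 d\sigma^2$ variance from the $j=i$ branch, and a sum over $j\neq i$ for the cross-contamination. The projection-onto-the-half-space idea for computing the conditional Gaussian moments also matches the paper's $\bs{z}_{\Pi}$ argument. However there is a genuine gap in how you obtain the stated $\exp(-C^2/8)$ constant, and your proposed ``Mills-ratio fix'' attacks the wrong problem.

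Your diagnosis is that a naive Cauchy--Schwarz costs a factor of two in the exponent, and that projecting the test direction onto the half-space normal recovers the sharp univariate tail. That is true as far as it goes, but it only gives you $\exp(-c_{j'}^2/(2\sigma^2))$ for whatever threshold $c_{j'}$ the half-space actually has. Under $\sI_t$ (candidate errors up to $C\sigma/10$), the bisector between $\est[i]{t}$ and $\est[j]{t}$ is displaced and the relevant threshold degrades: a direct computation shows the normalized threshold can drop to roughly $C_{ij}\sigma/3$, giving a misclassification probability only of order $\exp(-C_{ij}^2/18)$, not $\exp(-C_{ij}^2/8)$. For large $C$ this is \emph{exponentially} larger than the claimed bound, so it cannot be hidden in the $(C^2+k)$ polynomial factor, and it cannot in general be charged to $\frac{\eta}{k^5}\widetilde{V}_t$ either, because $\widetilde{V}_t$ may be arbitrarily small (even zero) while the misclassification probability stays at $\exp(-C^2/18)$.

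The paper resolves this with a two-case split on the magnitude of $\max(\|\est[i]{t}-\true[i]\|,\|\est[j]{t}-\true[j]\|)$ relative to $\sigma/C_{ij}$, captured by Lemma~\ref{cor:ibeatsj}: when the candidate errors are below $\sigma/C_{ij}$ the bisector is close enough to the true one to yield the sharp $\exp(-C_{ij}^2/8)$; when they exceed $\sigma/C_{ij}$, one converts $\sigma^2 \leq C_{ij}^2\widetilde{V}_t$ and charges the \emph{entire} cross-contamination term (including the $(C_{ij}^2+k)\sigma^2$ factor) to $\frac{\widetilde{V}_t}{k^5}$, with $\exp(-\Omega(C_{ij}^2))\cdot \mathrm{poly}(C_{ij},k)\le k^{-5}$ following from $C=\Omega(\sqrt{\log k})$. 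Without this dichotomy your argument proves a version of the lemma with a strictly worse exponent constant, which in turn would not recover the advertised offline $k$-means error rate in Theorem~\ref{thm:kthm}. You should insert this case analysis (and a refined analogue of Lemma~\ref{lem:ibeatsj} for the small-error regime) before the final ``routine collection of terms.''
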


Now, using the above lemma (along with Theorem~\ref{thm:martingale}), we can get the following theorem.

\begin{theorem}
\label{thm:kmain}
Let $\gamma=O(k) {\eta^2 d}\sigma^2  + O(k^2) \eta (1-\eta) {\exp(-C^2/8) (C^2+k) \sigma^2 }$. Then if $C \geq \Omega(\sqrt{\log k})$, for all $t$, we have $E_{t+1} \leq (1 - \frac{\eta}{4k}) E_t + \gamma$. It follows that $E_N \leq (1-\frac{\eta}{4k})^N E_0+\frac{4k}{\eta}\gamma$.
\end{theorem}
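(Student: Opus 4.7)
The plan is to derive the one-step recursion $E_{t+1} \leq (1 - \frac{\eta}{4k}) E_t + \gamma$ directly from Lemma~\ref{lem:per-cluster} by summing over clusters and then taking expectations, using Theorem~\ref{thm:martingale} to convert between the conditioning events $\sI_t$ and $\sI_{t+1}$; the stated bound on $E_N$ then follows from a routine geometric unrolling.

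First I would fix any sample path $\omega$ on which $\sI_t$ holds and sum Lemma~\ref{lem:per-cluster} over $i = 1,\ldots,k$. Since $\widetilde{V}_t = \max_j \widetilde{E}^j_t \leq \sum_j \widetilde{E}^j_t$ and summing the $\frac{\eta}{k^5} \widetilde{V}_t$ term over the $k$ clusters contributes an extra factor of $k$, this yields
\begin{align*}
\sum_i \widehat{E}^i_{t+1} \leq \left(1 - \tfrac{\eta}{2k} + \tfrac{\eta}{k^4}\right) \sum_i \widetilde{E}^i_t + O(k)\, \eta^2 d\sigma^2 + O(k^2)\, \eta (1-\eta) \exp(-C^2/8)(C^2+k)\,\sigma^2.
\end{align*}
For any $k \geq 2$ the prefactor satisfies $1 - \frac{\eta}{2k} + \frac{\eta}{k^4} \leq 1 - \frac{\eta}{4k}$, and the additive terms equal the $\gamma$ in the statement.

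Next I would move to expectations in order to relate $\sum_i \widehat{E}^i_{t+1}$ to $E_{t+1} = \sum_i \E[\widetilde{E}^i_{t+1} \mid \sI_{t+1}]$. Since $\sI_{t+1} \subseteq \sI_t$ and the indicator $\mathbf{1}_{\sI_t}$ is measurable with respect to the $\sigma$-algebra generated by the samples through step $t+N_0$, the tower property gives
\begin{align*}
\E\!\left[\widetilde{E}^i_{t+1}\,\mathbf{1}_{\sI_{t+1}}\right] \;\leq\; \E\!\left[\widetilde{E}^i_{t+1}\,\mathbf{1}_{\sI_t}\right] \;=\; \E\!\left[\widehat{E}^i_{t+1}\,\mathbf{1}_{\sI_t}\right].
\end{align*}
Plugging the summed per-cluster bound into this inequality and dividing by $\Pr[\sI_{t+1}]$ yields the desired one-step recursion, modulo the multiplicative factor $\Pr[\sI_t]/\Pr[\sI_{t+1}]$.

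The main obstacle is controlling this ratio, and this is where Theorem~\ref{thm:martingale} is needed: it guarantees $\Pr[\neg \sI_t] \leq 1/\poly(N)$ uniformly for all $t \leq N$, so $\Pr[\sI_t]/\Pr[\sI_{t+1}] = 1 + O(1/\poly(N))$. Since $E_t$ is a priori bounded by a polynomial in $N$ and $\max_i \|\true[i]\|^2$ along the good paths, this multiplicative correction only contributes a lower-order additive term that can be absorbed into $\gamma$. Hence the clean recursion $E_{t+1} \leq (1 - \frac{\eta}{4k}) E_t + \gamma$ holds for every $t$, and unrolling gives
\begin{align*}
E_N \;\leq\; \left(1 - \tfrac{\eta}{4k}\right)^N E_0 + \gamma \sum_{j=0}^{N-1}\left(1 - \tfrac{\eta}{4k}\right)^j \;\leq\; \left(1 - \tfrac{\eta}{4k}\right)^N E_0 + \tfrac{4k}{\eta}\, \gamma,
\end{align*}
which is exactly the claimed bound.
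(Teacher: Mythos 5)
Your proposal is correct and follows essentially the same approach as the paper: sum Lemma~\ref{lem:per-cluster} over clusters, bound $\widetilde{V}_t$ by $\sum_i \widetilde{E}^i_t$, pass from conditioning on $\sI_t$ to conditioning on $\sI_{t+1}$ via the nesting $\sI_{t+1}\subseteq\sI_t$ together with Theorem~\ref{thm:martingale}'s bound $\Pr[\sI_{t+1}]\geq 1-1/\poly(N)$, and unroll the resulting linear recursion. The only superficial difference is that you sum first and then take expectations while the paper takes expectations per cluster and then sums, and you absorb the $\Pr[\sI_t]/\Pr[\sI_{t+1}]$ correction into $\gamma$ using an a priori bound on the errors rather than into the contraction coefficient, but both routes land on the same recursion.
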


\begin{proof}
	Let $\overline{E}^{i}_{t+1} = \E \left[\|\est[i]{t+1} - \true[i] \|^2  \, \Big{|} \sI_{t} \right]$ to be the average over all sample paths of $\widetilde{E}^i_{t+1}$ \emph{conditioned} on $\sI_t$. Recall that $E_{t+1}$ is very similar, except the conditioning is on $\sI_{t+1}$.  With this notation, let us take expectation over all sample paths where $\sI_t$ is satisfied, and use Lemma~\ref{lem:per-cluster} to get
	\begin{align*}
	\overline{E}^{i}_{t+1} \leq& (1-\frac{\eta}{2k})  {E}^i_t +  \frac{\eta}{k^5} E_t +
	O(1) \eta^2  d \sigma^2 + O(k) \eta (1-\eta) \exp(-C^2/8) (C^2 + k) \sigma^2 \, .
	\end{align*}
	And so, summing over all $i$ we will get
	\begin{align*}
	\overline{E}_{t+1} \leq& (1-\frac{\eta}{3k})  {E}_t +
	O(k) \eta^2  d \sigma^2 + O(k^2) \eta (1-\eta) \exp(-C^2/8) (C^2 + k) \sigma^2 \, .
	\end{align*}
	Finally note that $E_{t+1}$ and $\overline{E}_{t+1}$ are almost equal because of the following reasoning: $E_{t+1} \Pr \left[ \sI_{t+1} \right] \leq  \overline{E}_{t+1} \Pr \left[ \sI_{t} \right]$, and so $E_{t+1} \leq  \overline{E}_{t+1} (1+\frac{1}{N^2})$ since $\Pr \left[ \sI_{t+1} \right] \geq 1 - 1/N^5$ by our martingale Theorem~\ref{thm:martingale} proved in the following section.
\end{proof}

\begin{proof}[Proof of Theorem~\ref{thm:kthm}]
	From Theorem~\ref{thm:martingale} we know that the probability of $\sI_N$ being satisfied is $1 - 1/N^5$, and in this case, we can use Theorem~\ref{thm:kmain} to get the desired error bound. In case $\sI_N$ fails, then the maximum possible error is roughly $\max_{i,j} \|\ktrue{i} - \ktrue{j}\|^2 \cdot N$ (when all our samples are sent to the same cluster), which contributes a negligible amount to the bias term. %overall error.
\end{proof}

\begin{proof}[Proof sketch of Lemma~\ref{lem:per-cluster}]
	In all calculations in this proof, we first assume that the candidate centers satisfy $\sI_t$, and all expectations and probabilities are only over the new sample ${\bs{x}^{t+N_0+1}}$, which we denote by $\bs{x}$ after omitting the superscript.
Now recall our update rule: $\est[i]{t+1} = (1-\eta) \est[i]{t} + \eta {\bf{x}}$ if $\est[i]{t}$ is the closest center for the new sample ${\bf{x}}$; the other centers are unchanged. To simplify notations, let: \begin{equation}\label{eq:gt}
g_i^t({\bs{x}}) = 1 \text{ iff }i=\arg\min_j\|{\bf{x}}-\est[j]{t}\|,\ \ g_i^t({\bf{x}}) =0 \text{ otherwise}.\end{equation}
	By definition, we have for all $i$,
	\[
	\label{eqn:update_g}
	\est[i]{t+1} = (1-\eta) \est[i]{t} + \eta \left( g_i^t({\bs{x}}) {\bs{x}} + (1-g_i^t({\bs{x}})) \est[i]{t} \right) = \est[i]{t} + \eta g_i^t({\bs{x}}) (\bs{x} - \est[i]{t}).
	\]

Our proof relies on the following simple yet crucial lemmas. The first bounds the failure probability of a sample being closest to an incorrect cluster center among our candidates. The second shows that if the candidate centers are sufficiently close to the true centers, then the failure probability of mis-classifying a point to a wrong center is (upto constant factors) the probability of mis-classification even in the optimal solution (with true centers). Finally the third lemma shows that the probability of  $g^t_i({\bf{x}})=1$ for each $i$ is lower-bounded. Complete details and proofs appear in Appendix~\ref{app:update}.
\begin{lemma}
\label{lem:ibeatsj}
Suppose condition $\sI_t$ holds. For any $i$, $j \neq i$, let ${\bf{x}} \sim {\rm Cl}(j)$ denote a random point from cluster $j$. Then
$\Pr \left[ \|{\bf{x}} - \est[i]{t} \| \leq \|{\bf{x}} - \est[j]{t} \| \right] \leq \exp (-\Omega(C_{ij}))$.
\end{lemma}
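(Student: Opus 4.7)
The plan is to reduce the classification event to a one-dimensional Gaussian tail bound along the direction joining the two candidate centers, then use the neighbourhood condition $\sI_t$ to argue that the ``signed margin'' from $\true[j]$ to the perpendicular bisector of $\est[i]{t}$ and $\est[j]{t}$ is of order $C_{ij}\sigma$.

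\textbf{Step 1 (Reduction to a halfspace event).} First I would rewrite the inequality $\|\bs{x}-\est[i]{t}\|^2 \leq \|\bs{x}-\est[j]{t}\|^2$ as the linear inequality
\[
(\est[j]{t}-\est[i]{t})^\top\Big(\bs{x}-\tfrac12(\est[i]{t}+\est[j]{t})\Big)\le 0,
\]
so that the event in question is ``$\bs{x}$ lies in the halfspace of points closer to $\est[i]{t}$.'' Writing $\bs{x}=\true[j]+\sigma\bs{z}$ with $\bs{z}\sim\sN(\bzero,\bI)$ and setting $\bs{v}:=\est[j]{t}-\est[i]{t}$, the event becomes
\[
\sigma\,\bs{v}^\top\bs{z} \;\le\; -\bs{v}^\top\Big(\true[j]-\tfrac12(\est[i]{t}+\est[j]{t})\Big) \;=:\; -D.
\]

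\textbf{Step 2 (Geometry of $\bs{v}$ and $D$).} Using $\sI_t$, each $\est[\ell]{t}$ is within $C\sigma/10\leq C_{ij}\sigma/10$ of $\true[\ell]$, so the triangle inequality gives
\[
\tfrac{4}{5}C_{ij}\sigma \;\le\; \|\bs{v}\| \;\le\; \tfrac{6}{5}C_{ij}\sigma .
\]
Next I would decompose $\true[j]-\tfrac12(\est[i]{t}+\est[j]{t}) = (\true[j]-\est[j]{t})+\tfrac12\bs{v}$ so that
\[
D \;=\; \tfrac12\|\bs{v}\|^2 + \bs{v}^\top(\true[j]-\est[j]{t}).
\]
By Cauchy--Schwarz the second term is at least $-\|\bs{v}\|\cdot C\sigma/10 \ge -\tfrac{1}{10}\|\bs{v}\|\,C_{ij}\sigma$, and combining with $\|\bs{v}\|\ge \tfrac{4}{5}C_{ij}\sigma$ this yields a clean lower bound of the form $D \ge c\,C_{ij}^2\sigma^2$ for an absolute constant $c>0$ (a straightforward arithmetic gives $c=\tfrac15$).

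\textbf{Step 3 (Gaussian tail bound).} Since $\bs{v}^\top\bs{z}\sim\sN(0,\|\bs{v}\|^2)$, the event in Step 1 has probability
\[
\Pr\!\left[\sN(0,1)\le -\frac{D}{\sigma\|\bs{v}\|}\right].
\]
From Step 2, $D/(\sigma\|\bs{v}\|)\ge c\,C_{ij}^2\sigma^2/(\tfrac{6}{5}C_{ij}\sigma^2)=\Omega(C_{ij})$. The standard Gaussian tail bound $\Pr[\sN(0,1)\le -t]\le \exp(-t^2/2)$ then gives a bound of $\exp(-\Omega(C_{ij}^2))$, which in particular implies the claimed $\exp(-\Omega(C_{ij}))$.

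\textbf{Anticipated difficulty.} The only subtle point is the bookkeeping in Step 2: one must be careful that the perturbation term $\bs{v}^\top(\true[j]-\est[j]{t})$ cannot cancel the $\tfrac12\|\bs{v}\|^2$ term. This is exactly where the quantitative choice of the radius $C\sigma/10$ in $\sI_t$ is used --- it is small enough compared to the separation that the midpoint of $\est[i]{t}$ and $\est[j]{t}$ remains on the correct side of $\true[j]$ by a margin of order $C_{ij}\sigma$. Once this geometric estimate is in hand, the tail bound is immediate.
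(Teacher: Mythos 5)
Your proof is correct and follows the same strategy as the paper: rewrite the classification event as a one-dimensional halfspace condition, use $\sI_t$ to bound the margin by $\Omega(C_{ij}^2\sigma^2)$ and the relevant variance by $O(C_{ij}^2\sigma^4)$, and apply the Gaussian tail bound. Both arguments in fact yield the stronger $\exp(-\Omega(C_{ij}^2))$; your perpendicular-bisector bookkeeping via $\bs{v}$ and $D$ is a slightly tidier way of organizing the same computation that the paper carries out by directly expanding the squared norms.
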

\begin{lemma}
\label{cor:ibeatsj}
Suppose $\max (\| \est[i]{t} - \true[i] \|, \| \est[i]{t} - \true[i] \|) \leq  \sigma/C_{ij}$. For any $i$, $j \neq i$, let $x \sim {\rm Cl}(j)$ denote a random point from cluster $j$.  Then
$\Pr \left[ \|{\bf{x}} - \est[i]{t} \| \leq \|{\bs{x}} - \est[j]{t} \| \right] \leq O(1) \exp (-C_{ij}^2/8)$.
\end{lemma}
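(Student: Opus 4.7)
The plan is to reduce the mis-classification event to a one-dimensional Gaussian tail probability. Squaring and canceling the $\|\bs{x}\|^2$ terms shows that the event $\{\|\bs{x} - \est[i]{t}\| \leq \|\bs{x} - \est[j]{t}\|\}$ is equivalent to the half-space condition $(\bs{x} - \bs{m}) \cdot \bs{u} \leq 0$, where $\bs{u} := \est[j]{t} - \est[i]{t}$ and $\bs{m} := \tfrac{1}{2}(\est[i]{t} + \est[j]{t})$ is the midpoint. Since $\bs{x} \sim \sN(\true[j], \sigma^2 I)$, I will write $\bs{x} = \true[j] + \sigma \bs{z}$ with $\bs{z} \sim \sN(0, I)$, so that, letting $\hat{\bs{u}} := \bs{u}/\|\bs{u}\|$, the projection $Z := \bs{z} \cdot \hat{\bs{u}}$ is a standard Gaussian and the event rewrites as $\{\sigma Z \leq -(\true[j] - \bs{m}) \cdot \hat{\bs{u}}\}$.

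Next, I will lower-bound $(\true[j] - \bs{m}) \cdot \hat{\bs{u}}$ using the closeness assumption, which I read as $\max(\|\est[i]{t} - \true[i]\|, \|\est[j]{t} - \true[j]\|) \leq \sigma/C_{ij}$ (correcting what appears to be a typo in the statement). Let $\bs{v} := \true[j] - \true[i]$, so $\|\bs{v}\| = C_{ij}\sigma$, and write $\est[i]{t} = \true[i] + \delta_i$, $\est[j]{t} = \true[j] + \delta_j$ with $\|\delta_i\|, \|\delta_j\| \leq \sigma/C_{ij}$. Direct expansion yields
\begin{equation*}
(\true[j] - \bs{m}) \cdot \bs{u} \;=\; \tfrac{1}{2}\|\bs{v}\|^2 \;-\; \bs{v} \cdot \delta_i \;-\; \tfrac{1}{2}\bigl(\|\delta_j\|^2 - \|\delta_i\|^2\bigr),
\end{equation*}
so by Cauchy--Schwarz and the size bound on $\delta_i,\delta_j$, $(\true[j] - \bs{m}) \cdot \bs{u} \geq \tfrac{1}{2}C_{ij}^2\sigma^2 - \sigma^2 - \tfrac{\sigma^2}{2C_{ij}^2}$. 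Since $\|\bs{u}\| \leq \|\bs{v}\| + \|\delta_i\| + \|\delta_j\| \leq C_{ij}\sigma + 2\sigma/C_{ij}$, dividing gives $(\true[j] - \bs{m}) \cdot \hat{\bs{u}}/\sigma \geq C_{ij}/2 - O(1/C_{ij})$, once $C_{ij}$ exceeds a small absolute constant (which holds throughout since $C_{ij} \geq C = \Omega(\sqrt{\log k})$ in the regime of interest).

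Finally, the standard Gaussian tail bound $\Pr[Z \leq -t] \leq \exp(-t^2/2)$ applied with $t = C_{ij}/2 - O(1/C_{ij})$ yields a failure probability of $\exp\bigl(-\tfrac{1}{2}(C_{ij}/2 - O(1/C_{ij}))^2\bigr) \leq O(1)\exp(-C_{ij}^2/8)$, where the $O(1)$ prefactor absorbs the cross-term $\exp(O(1))$ produced when expanding the square. The only subtle step is in this last bookkeeping: one must ensure the perturbations $\delta_i, \delta_j$ degrade only the constant prefactor rather than the $C_{ij}^2/8$ exponent itself, and this is precisely what the tighter radius $\sigma/C_{ij}$ in the hypothesis delivers, as opposed to the much looser radius $C\sigma/10$ underlying Lemma~\ref{lem:ibeatsj}. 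No deeper obstacle arises; the lemma is essentially a quantitative sharpening of its predecessor, gaining the optimal constant in the exponent from the stronger closeness assumption.
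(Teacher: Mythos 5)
Your proof is correct and matches the paper's approach: both reduce the mis-classification event to a one-dimensional Gaussian tail along $\bs{u} = \est[j]{t} - \est[i]{t}$ (which is exactly the paper's vector $(\true[j]-\true[i]) + (\true[i]-\est[i]{t}) - (\true[j]-\est[j]{t})$), lower-bound the signed distance from $\true[j]$ to the decision boundary by roughly $\tfrac{1}{2}C_{ij}\sigma$ using the tighter $\sigma/C_{ij}$ perturbation radius, and conclude via the standard Gaussian tail. Your midpoint-half-space parametrization is a minor cosmetic variant of the paper's direct expansion, and you correctly note the typo in the hypothesis (the second term should involve $\est[j]{t}$ and $\true[j]$).
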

\begin{lemma}\label{lem:high-prob}
	If $\sI_t$ holds and $C = \Omega(\sqrt{\log k})$, then for all $i$, then $ \Pr \left[  g^t_i(\bs{x}) = 1 \right]  \geq \frac{1}{2k}$.
\end{lemma}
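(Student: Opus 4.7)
The plan is to lower bound $\Pr[g_i^t(\bs{x}) = 1]$ by first conditioning on the latent mixture component that produced $\bs{x}$. Since each mixture weight is $1/k$,
\[
\Pr\!\left[g_i^t(\bs{x}) = 1\right] \;\geq\; \Pr[\bs{x}\sim {\rm Cl}(i)] \cdot \Pr\!\left[g_i^t(\bs{x}) = 1 \,\big|\, \bs{x}\sim {\rm Cl}(i)\right] \;=\; \frac{1}{k}\cdot\Pr\!\left[\est[i]{t}\text{ is closest to }\bs{x} \,\big|\, \bs{x}\sim {\rm Cl}(i)\right],
\]
so it suffices to argue that a sample drawn from cluster $i$ is correctly classified by the current centers with probability at least $1/2$.

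For this conditional event I would take a union bound over the $k-1$ competing centers. Given $\bs{x}\sim {\rm Cl}(i)$, misclassification requires that some $j\neq i$ satisfies $\|\bs{x}-\est[j]{t}\|\leq \|\bs{x}-\est[i]{t}\|$. Applying Lemma~\ref{lem:ibeatsj} with the roles of $i$ and $j$ swapped, which is legitimate because $\sI_t$ treats all centers symmetrically, each such event has probability at most $\exp(-\Omega(C_{ij}^2))$. Using $C_{ij}\geq C$ and union bounding,
\[
\Pr\!\left[\text{misclassify} \,\big|\, \bs{x}\sim {\rm Cl}(i)\right] \;\leq\; (k-1)\exp(-\Omega(C^2)) \;\leq\; \tfrac12,
\]
provided the hidden constant in the hypothesis $C=\Omega(\sqrt{\log k})$ is taken large enough. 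Multiplying by the $1/k$ factor from conditioning yields the claimed $\Pr[g_i^t(\bs{x})=1]\geq 1/(2k)$.

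The one delicate step is obtaining the sub-Gaussian (exponent $C_{ij}^2$) form required above. Under $\sI_t$ we only know $\|\est[i]{t}-\true[i]\|\leq C\sigma/10$, which is too loose to directly invoke the sharper Lemma~\ref{cor:ibeatsj} (which needs $O(\sigma/C_{ij})$ closeness). However, the separating hyperplane between $\est[i]{t}$ and $\est[j]{t}$ is shifted from the perpendicular bisector of $\{\true[i],\true[j]\}$ by at most $O(C\sigma/10)$, so the margin of $\true[i]$ from this hyperplane is at least $(C_{ij}/2 - O(C/10))\sigma = \Omega(C_{ij}\sigma)$. Projecting the Gaussian at $\true[i]$ onto the hyperplane's normal direction and applying a one-dimensional Gaussian tail bound then yields $\exp(-\Omega(C_{ij}^2))$, which is precisely what powers the $\sqrt{\log k}$ separation threshold. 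Once this tail bound is in hand, the rest of the argument is routine bookkeeping.
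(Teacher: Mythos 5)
Your proof is correct and takes essentially the same approach as the paper: condition on $\bs{x}\sim{\rm Cl}(i)$ (probability $1/k$), union bound the $k-1$ misclassification events via Lemma~\ref{lem:ibeatsj} with $i,j$ swapped, and use $C=\Omega(\sqrt{\log k})$ to drive the conditional failure probability below $1/2$. Your remark about needing the exponent $C_{ij}^2$ (rather than $C_{ij}$ as written in the statement of Lemma~\ref{lem:ibeatsj}) is also well-placed: that statement drops a square, but its proof and its subsequent uses do yield $\exp(-\Omega(C_{ij}^2))$, which is exactly what makes the $\sqrt{\log k}$ separation sufficient here.
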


And so, equipped with the above notations and lemmas, we have
	\begin{align*}
	\widehat{E}^{i}_{t+1} &= \E_{\bs{x}} \left[\|\est[i]{t+1} - \true[i] \|^2  \right] \\
	&= (1-\eta)^2  \|\est[i]{t} - \true[i] \|^2  + \eta^2  \E \left[ \| g_i^t(\bs{x}) (\bs{x} - \true[i]) + (1-g_i^t(\bs{x})) (\est[i]{t} -  \true[i]) \|^2 \right] \\
	&+ 2 \eta (1-\eta) \E \left[ \Big{\langle} \est[i]{t} - \true[i] , \left( g_i^t(\bs{x}) (\bs{x} - \true[i])  \right. \right.+ \left. \left. (1-g_i^t(\bs{x})) (\est[i]{t} - \true[i]) \right) \Big{\rangle} \right] \\
	&\leq (1-\frac{\eta}{2k})  \widetilde{E}^i_t +
	\eta^2  \underbrace{\E \left[ \| g_i^t(\bs{x}) (\bs{x} - \true[i])  \|^2 \right]}_{T_1} + 2 \eta (1-\eta) \underbrace{\E \left[ \Big{\langle} \est[i]{t} - \true[i] , \left( g_i^t(\bs{x}) (\bs{x} - \true[i]) \right) \Big{\rangle} \right]}_{T_2}
	\end{align*}

	The last inequality holds because of the following line of reasoning: (i) firstly, the cross term in the second squared norm evaluates to $0$ due to the product $g_i^t(\bs{x}) (1 - g_i^t(\bs{x}))$, (ii) $\eta^2 \E \left[ (1- g_i^t(\bs{x})) \| \est[i]{t} - \true[i] \| ^2 \right] \leq  \eta^2 \widetilde{E}^i_t$, (iii) $2 \eta  (1-\eta) \E \left[ \langle \est[i]{t} - \true[i] , (1 - g_i^t(\bs{x})) (\est[i]{t} - \true[i])  \rangle \right]$ $\leq 2 \eta (1-\eta) \widetilde{E}^i_t \Pr \left[ g_i^t(\bs{x}) = 0 \right]$ $\leq 2 \eta (1-\eta) \widetilde{E}^i_t (1 - 1/2k)$ by Lemma~\ref{lem:high-prob}, and finally (iv) by collecting terms with coefficient $\widetilde{E}^i_t$.

The proof then roughly proceeds as follows: suppose in an ideal case, $g_i^t(\bs{x})$ is $1$ for all points $\bs{x}$ generated from cluster $i$, and $0$ otherwise. Then, if $\bs{x}$ is a random sample from cluster $i$, $T_1$ would be $d \sigma^2$, and $T_2$ would be $0$. Of course, the difficulty is that $g_i^t(\bs{x})$ is not always as well-behaved, and so the bulk of the analysis is in carefully using Lemmas~\ref{lem:ibeatsj}and~\ref{cor:ibeatsj}, and appropriately ``charging'' the various error terms we get to the current error $\widetilde{E}^i_t$, the variance, and the offline approximation error.
\end{proof}

\subsection{Ensuring Proximity Condition Via Super-Martingales}
\label{sec:mart}
In the previous section, we saw that condition $\sI_t = 1$
is sufficient to ensure that error reduces at time step $t+1$.
Our next key result is shows that $\sI_N = 1$ is satisfied with high probability.

\begin{theorem}
\label{thm:martingale}
Suppose $\max_i \| \kest{i}{0} - \ktrue{i} \| \leq \frac{C}{20}\sigma$,
then $\sI_N = 1$ w.p $\geq 1 -  (\frac{1}{{\rm poly }(N)} )$.
\end{theorem}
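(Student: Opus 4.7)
The plan is a stopping-time argument combined with a Freedman-type martingale concentration for the squared-error processes $\widetilde{E}^i_t = \|\est[i]{t} - \true[i]\|^2$. Introduce the stopping time $\tau = \inf\{t \leq N : \sI_t \text{ fails}\}$, so that the goal reduces to showing $\Pr[\tau \leq N] \leq 1/\mathrm{poly}(N)$. The gap between the initialization radius $C\sigma/20$ and the radius $C\sigma/10$ required by $\sI$ gives a budget of order $C^2\sigma^2$ in squared error; I will show this exceeds the total random fluctuation of $\widetilde{E}^i_t$ over $N$ steps. As a first step, a union-bounded Gaussian tail argument gives, with probability $\geq 1 - 1/N^5$, that every incoming sample $\bs{x}^{t+N_0}$ satisfies $\|\bs{x}^{t+N_0} - \ktrue{j}\| = O(\sigma\sqrt{d\log N})$ for the index $j$ of the component that generated it. Conditioning on this clean-sample event and on $t < \tau$ (so $\sI_t$ holds), each increment $\est[i]{t+1} - \est[i]{t} = \eta g_i^t(\bs{x})(\bs{x} - \est[i]{t})$ has norm $O(\eta(C\sigma + \sigma\sqrt{d\log N}))$.

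For $t < \tau$ write $\widetilde{E}^i_{t+1} - \widetilde{E}^i_t = (\E[\widetilde{E}^i_{t+1}\mid\mathcal{F}_t] - \widetilde{E}^i_t) + \xi^i_{t+1}$, extend $\xi^i$ to $0$ past $\tau$, and let $M^i_T = \sum_{s\leq T}\xi^i_s$. Lemma~\ref{lem:per-cluster} provides the per-step negative drift $\E[\widetilde{E}^i_{t+1}\mid\mathcal{F}_t] - \widetilde{E}^i_t \leq -\tfrac{\eta}{2k}\widetilde{E}^i_t + \tfrac{\eta}{k^5}\widetilde{V}_t + \gamma$ with $\gamma = O(\eta^2 d\sigma^2 + k\eta e^{-C^2/8}(C^2+k)\sigma^2)$; the $\widetilde{V}_t$-coupling is benign while $t < \tau$. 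From the identity $\widetilde{E}^i_{t+1} - \widetilde{E}^i_t = 2\eta g_i^t\langle\est[i]{t}-\true[i],\bs{x}-\est[i]{t}\rangle + \eta^2 g_i^t\|\bs{x}-\est[i]{t}\|^2$ together with $\|\est[i]{t}-\true[i]\| \leq C\sigma/10$ and the clean-sample bound, one gets a deterministic per-step jump $|\xi^i_{t+1}| \leq R := O(\eta C\sigma^2\sqrt{d\log N})$ and a conditional variance $\E[(\xi^i_{t+1})^2\mid\mathcal{F}_t] \leq V_1 := O(\eta^2 C^2\sigma^4 d)$. Apply Freedman's inequality to the stopped martingale $M^i$ with total variance $NV_1 = O(k^2 C^2\sigma^4 d\log^2 N/N)$ and deviation threshold $\Delta = c\,C^2\sigma^2$ for a small constant $c$: under the theorem's sample hypothesis $N \geq \Omega(k^3 d^3 \log d)$, both $\Delta^2/(NV_1)$ and $\Delta/R$ are polynomially large in $N$, so $\Pr[\max_{T\leq N}|M^i_T| \geq \Delta] \leq 1/N^{10}$, and a union bound over $i \in [k]$ costs only a factor of $k$.

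Combining on the complementary event, iterating the one-step inequality gives, for every $t \leq \tau$ and every $i$, a bound of the form $\widetilde{E}^i_t \leq (1-\eta/(3k))^t\widetilde{E}^i_0 + O(k\gamma/\eta) + \max_{s\leq t}|M^i_s|$, where the slightly weakened contraction factor absorbs the $\widetilde{V}_t$-coupling (this can also be obtained cleanly by running the argument on the sum $\sum_i \widetilde{E}^i_t$, which satisfies a self-contained recursion). Each summand lies well below $(C\sigma/10)^2$: the initial term is $\leq (C\sigma/20)^2$; the drift residual is $k\gamma/\eta = O(kd\sigma^2\log N/N) + O(k^2 e^{-C^2/8}(C^2+k)\sigma^2) \ll C^2\sigma^2$, using $C = \Omega(\sqrt{\log k})$ with a sufficiently large constant to make $k^2 e^{-C^2/8}$ at most $1/k$; and the martingale deviation is $\leq \Delta = c\,C^2\sigma^2$. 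Their sum stays below $(C\sigma/10)^2$, contradicting $t = \tau$ unless $\tau > N$; hence $\sI_N$ holds with probability $\geq 1 - 1/\mathrm{poly}(N)$. The main obstacle is the bootstrap: the one-step contraction is only valid while $\sI_t$ does, so direct iteration is circular; the stopping-time trick breaks this, and Freedman (rather than Azuma) is essential because only the sharper variance scaling $V_1 \sim \eta^2 d$, as opposed to the jump scaling $R \sim \eta\sqrt{d\log N}$, delivers $1/\mathrm{poly}(N)$ concentration at the learning rate $\eta = \Theta(k\log N/N)$.
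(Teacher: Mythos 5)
Your proposal is correct in its essential structure and reaches the same conclusion, but it takes a genuinely different route from the paper. The paper works with the \emph{unsquared} error $e_t^i = \|\est[i]{t}-\true[i]\|$, establishes that the one-step increments $e^i_{t+1}-e^i_t$ have sub-Gaussian tails (the paper's Lemma~\ref{lem:subgaussian}), and then invokes an Azuma-type inequality for sub-Gaussian martingale differences from Shamir (2011). Crucially, because the super-martingale drift is only negative once $e_t^i$ exceeds $C\sigma/20$, the paper cannot iterate from $t=0$; it instead defines an excursion-type family of stopped processes $Z^i_\tau$ (one for each time the error crosses $C\sigma/20$ from below) and union-bounds over $Nk$ such excursions. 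You instead work with the \emph{squared} error $\widetilde{E}^i_t$, which lets you iterate the full geometric recursion $\widetilde{E}^i_{t+1}\leq (1-\tfrac{\eta}{2k})\widetilde{E}^i_t + \tfrac{\eta}{k^5}\widetilde V_t + \gamma + \xi^i_{t+1}$ from $t=0$ (the constant $+\gamma$ does not wreck monotonicity because it is absorbed into the stationary residual $b/\alpha$), use a single stopping time $\tau$, and union-bound only over $k$ clusters. To control fluctuations you replace sub-Gaussian Azuma with a clean-sample truncation plus Freedman. This is arguably cleaner — no excursion bookkeeping, fewer union bounds — and your closing observation about the variance-vs-jump scaling gap $R^2/V_1 \sim d\log N/C^2$ correctly identifies where Freedman is sharper than bounded-increment Azuma, though calling it ``essential'' overstates the case: bounded-increment Azuma under the same truncation would still give $1/\mathrm{poly}(N)$ with only a $\mathrm{polylog}(N)$-worse sample requirement, and the paper's sub-Gaussian Azuma on $e^i_t$ achieves a comparable rate without truncation.

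Two technical loose ends worth tightening. First, after you condition on the clean-sample event, $\xi^i_{t+1}$ (defined via the \emph{unconditional} $\E[\cdot\mid\mathcal F_t]$) is a genuine martingale difference, but its magnitude is bounded only on the clean event; Freedman's bounded-increment hypothesis therefore does not hold almost surely. The standard repair — redefining the process with $\bs{x}^t$ replaced by its radial truncation to a ball of radius $O(\sigma\sqrt{d\log N})$ around its generating center, noting the truncated process coincides with the real one on the clean event and that truncation perturbs the drift by only an exponentially small amount — should be stated explicitly. Second, your budget line ``$k^2 e^{-C^2/8}(C^2+k)\sigma^2 \ll C^2\sigma^2$ once $k^2 e^{-C^2/8}\leq 1/k$'' gives $(C^2+k)\sigma^2/k$, which is \emph{not} $\ll C^2\sigma^2/100$ for small $k$; you need the constant in $C=\Omega(\sqrt{\log k})$ to be large enough that $e^{-C^2/8}$ is small on an absolute (not merely $\mathrm{poly}(1/k)$) scale. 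And note that when bounding per cluster, the $\widetilde V_t$-coupling is handled by the deterministic bound $\widetilde V_t\leq (C\sigma/10)^2$ valid for $t<\tau$ (contributing an additional $O(C^2\sigma^2/k^4)$ to the stationary residual), not by weakening the contraction factor on $\widetilde E^i_t$ itself; the latter interpretation only works if you pass to the sum $\sum_i\widetilde E^i_t$, which you flag parenthetically.
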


Our argument proceeds as follows.
Suppose we track the behaviour of the actual error terms $\widetilde{E}_t^i$ over time, and stop the process (call it a failure) when any of these error terms exceeds $C^2 \sigma^2/100$ (recall that they are all initially smaller than $C^2 \sigma^2/400$). Assuming that the process has not stopped, we show that each of these error terms has a super-martingale behaviour using Lemma~\ref{lem:per-cluster}, which says that on average, the expected one-step error drops. Moreover, we also show that the actual one-step difference, while not bounded, has a sub-gaussian tail. Our theorem now follows by using Azuma-Hoeffding type inequality for super-martingale sequences. 
While the high-level idea is reasonably clean, the details are fairly involved and we defer them to Appendix~\ref{app:martin}.

\section{Initialization for streaming k-means}\label{sec:init}
In Section~\ref{sec:analysis} we saw that our proposed streaming algorithm
can lead to a good solution for any separation $C \sigma = O(\sqrt{\log k}) \sigma$
if we can initialize all centers such that $\| \kest{i}{0} - \ktrue{i} \| \leq \frac{C}{20}\sigma$. {\sf InitAlg} (Algorithm~\ref{alg:init}) describes one such procedure, where we first approximately compute top-$k$ eigenvectors $U$ of the data covariance \emph{using a streaming PCA} algorithm \cite{HardtP14,mitliagkasC013} on $O(k^3d^3\log d)$ samples. We then store $k\log k$ points and project them onto the subspace spanned by $U$.  We then perform a simple distance based clustering \cite{vempalawang} that correctly classifies each of the point, under the separation assumption. We then re-estimate the means to obtain an initial estimate of $\ktrue{i}$, $1\leq i\leq k$. 
\begin{proof}[Proof of Theorem~\ref{thm:initialization}]
Using an argument similar to \cite{HardtP14} (Theorem 3), we get that $U$ obtained by the online PCA algorithm (Steps 1:4 of Algorithm~\ref{alg:init}) satisfies (w.p. $\geq 1-1/poly(d)$):
\begin{equation}
\|UU^T \ktrue{i}-\ktrue{i}\|^2\leq .01\sigma^2,\ \forall 1\leq i\leq k. \label{eq:init0}
\end{equation}
Now, let $\prtrue{i}=U^T\ktrue{i}$. For any ${\bf{x}}$ sampled from mixture distribution \eqref{eq:xt}, $U^T {\bf{x}}\sim \sum_i w_i \sN(\prtrue{i}, \sigma^2I)$. Hence, if $U^T {\bf{x}}^t$, $U^T{\bf{x}}^{t'}$ both belong to cluster $i$, then (w.p. $\geq 1-1/k^{\alpha}$):
\begin{align}
\|U^T{\bf{x}}^{t'}-U^T{\bf{x}}^{t'}\|^2=\|U^T (\bs{z}^{t}-\bs{z}^{t'})\|_2^2\leq (k+8\alpha\sqrt{k\log k})\sigma^2,\label{eq:init1}
\end{align}
where $\bs{x}^{t}=\ktrue{i}+\bs{z}^t$ and $\bs{x}^{t'}=\ktrue{i}+\bs{z}^{t'}$. The last inequality above follows by using standard $\chi^2$ random variable tail bound. Similarly if $U^T \bs{x}^t$, $U^T\bs{x}^{t'}$ belong to cluster $i$ and $j$, i.e.,  $\bs{x}^{t}=\ktrue{i}+\bs{z}^t$ and $\bs{x}^{t'}=\ktrue{j}+\bs{z}^{t'}$ then (w.p. $\geq 1-1/k^{\alpha}$):
\begin{align}
\|U^T\bs{x}^{t'}-U^T\bs{x}^{t'}\|^2=\|\prtrue{i}-\prtrue{j}\|^2+\|U^T (\bs{z}^{t}-\bs{z}^{t'})\|_2^2+2(\prtrue{i}-\prtrue{j})^TU^T(\bs{z}^{t}-\bs{z}^{t'})\notag\\
\geq (C^2-.2C+8\alpha \sqrt{k\log k}-16\alpha C\sqrt{\log k})\sigma^2,\label{eq:init2}
\end{align}
where the above equation follows by using \eqref{eq:init0}, setting $\alpha=C/32$ and using $C=\Omega((k\log k)^{1/4})$. 

Using \eqref{eq:init1}, \eqref{eq:init2}, w.h.p. all the points from the same cluster are closer to each other than points from other clusters. Hence, connected components of nearest neighbor graph recover clusters accurately.

Now, we estimate $\widehat{\boldsymbol{\mu}}_i=\frac{1}{|Cluster(i)|}\sum_{t\in Cluster(i)} U^T\bs{x}^t$ for each $i$. Since, our clustering is completely accurate, we have w.p. $ \geq 1-2m^2/k^{C/32}$,
\begin{equation} \|\widehat{\boldsymbol{\mu}}_i-\prtrue{i}\|_2\leq \sigma \frac{\sqrt{\log k}}{\sqrt{|Cluster(i)|}}.\label{eq:init3}\end{equation}

As $w_i=1/k$ for all $i$, $|Cluster(i)|\geq \frac{m}{k}-C\sqrt{\frac{m}{k}}$ w.p. $\geq 1-1/k^{C/32}$. Theorem now follows by setting $m=O(k \log k)$ and by using \eqref{eq:init0}, \eqref{eq:init3} along with $C=\Omega((k\log k)^{1/4})$.
\end{proof}

\begin{remark}
We would like to emphasize that our analysis for the convergence of streaming algorithms
works even for smaller separations $C = O( \sqrt{\log k} )$,
as long as we can get a good enough initialization. Hence, if a better initialization algorithm with weaker dependence of $C$ on $k$ would lead to an improvement in overall algorithm. 
\end{remark}

\section{Soft thresholding EM based algorithm}\label{sec:soft}
In this section, we study a streaming version of the Expectation Maximization (EM) algorithm ~\cite{dempster1977maximum} which is also used extensively in practice. While the standard $k$-means or Lloyd's heuristic is known to be agnostic to the distribution, and the same procedure can solve the mixture problem for a variety of distributions \cite{kumar2010clustering}, EM algorithms are designed specifically for the input mixture distribution.
 In this section, we consider a streaming version of the EM algorithm when applied to the problem of mixture of two spherical Gaussians with known variances. In this case, the EM algorithm reduces to a softer version of the Lloyd's algorithm where a point can be {\em partially} assigned to the two clusters. 
 Recent results by \cite{daskalakis2016ten, balakrishnan2014statistical, xu2016global} show convergence of the EM algorithm in the offline setting for this simple setup.
In keeping with earlier notation, let $\ktrue{1} = \true$ and $\ktrue{2} = -\true$ and the center separation $C = \frac{2 \| \true \|}{\sigma}$. Hence, $\bs{x^t}\stackrel{i.i.d}{\sim} \frac{1}{2} \sN(\true, \sigma^2I) + \frac{1}{2} \sN(-\true, \sigma^2I)$.

\begin{algorithm}
\caption{{\sf StreamSoftUpdate}$(N, N_0)$}
\label{alg:soft-streaming}
\begin{algorithmic}
\STATE {\bf Set} $\eta = \frac{3 \log N}{N}$.
\STATE {\bf Set} $\est[i]{0} \leftarrow{\sf InitAlgo}(N_0)$.
\FOR{$t = 1$ to $N$}
\STATE Receive $\bs{x}^{t + N_0}$ as generated by the input stream.
\STATE $\bs{x}=\bs{x}^{t+N_0}$
\STATE {\bf Let} $w_t = \frac {  \exp \big ( \frac{-\| \bs{x} - \est{t} \|^2}{\sigma^2} \big )} {\exp \big ( \frac{-\| \bs{x} - \est{t} \|^2}{\sigma^2} \big ) + \exp \big ( \frac{-\| \bs{x} + \est{t} \|^2}{\sigma^2} \big ) }$  \\
\vspace{10pt}
\STATE {\bf Set} $\est{t+1} = (1 - \eta) \est{t} + \eta [2w_t - 1] \bs{x}.$
\ENDFOR
\end{algorithmic}
\end{algorithm}
In our algorithm, $w_t(\bs{x})$ is an estimate of the probability that $\bs{x}$ belongs to the cluster with $\est{t}$,
given that it is drawn from a balanced mixture of gaussians at $\est{t}$ and $-\est{t}$.
Calculating $w_t(\bs{x})$ is like the E step and updating the estimate of the centers
is like the M step of the EM algorithm. Similar to the streaming Lloyd's algorithm presented in Section~\ref{sec:algo}, our analysis of streaming soft updates can be separated into
streaming update analysis and analysis {\sf InitAlg} (which is already presented in Section~\ref{sec:init}). We now provide our main theorem, and the proof is presented in Appendix~\ref{app:soft}.
\begin{theorem}[Streaming Update]
	\label{thm:softthm}
	Let $\bs{x}^t$, $1\leq t\leq N+N_0$ be generated using a mixture two balanced spherical Gaussians with variance $\sigma^2$. Also, let the center-separation $C \geq 4$, and also suppose our initial estimate $\est{0}$ is such that $\| \est{0} - \true \| \leq \frac{C \sigma}{20}$. 	Then, the streaming update of {\sf StreamSoftUpdate}$(N, N_0)$ , i.e, Steps 3-8 of Algorithm~\ref{alg:soft-streaming} satisfies: \[\E \left[ \| \est{N} - \true \|^2 \right]\leq \underbrace{\frac{\| \true \|^2}{N^{\Omega(1)}}}_{{\rm bias}} + \underbrace{O(1)\frac{ \log N}{N} d \sigma^2}_{\rm variance}.\]
\end{theorem}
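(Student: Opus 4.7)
The proof will follow the same two-part architecture as the hard Lloyd's analysis (Theorem~\ref{thm:kthm}): a per-iteration error-reduction lemma that assumes the current iterate stays close to $\true$, together with a supermartingale argument that guarantees this confinement throughout the run. Because the soft weights $w_t$ implement \emph{exactly} the Bayes posterior of the balanced two-Gaussian mixture, the offline approximation error that appeared in the hard case will vanish here: the population EM map has $\true$ as a fixed point and is a strict contraction in a neighborhood of it, so the error splits cleanly into just bias and variance.

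The one-step analysis starts by writing $M(\est{t}) := \E_{\bs{x}}[(2w_t(\bs{x}) - 1)\bs{x}]$ for the population EM operator and expanding
\[
\E\!\left[\|\est{t+1} - \true\|^2 \,\big|\, \est{t}\right] = \big\|(1-\eta)(\est{t}-\true) + \eta(M(\est{t}) - \true)\big\|^2 + \eta^2\, \E\!\left[\|(2w_t-1)\bs{x} - M(\est{t})\|^2\right].
\]
For the signal term I plan to invoke the offline contraction property of the balanced two-Gaussian EM operator from \cite{balakrishnan2014statistical,daskalakis2016ten,xu2016global}: provided $\|\est{t} - \true\| \leq C\sigma/10$ and $C \geq 4$, one has $\|M(\est{t}) - \true\| \leq \kappa\,\|\est{t} - \true\|$ for a universal constant $\kappa < 1$, which gives a $(1-\eta(1-\kappa))^2$ shrinkage of the squared signal. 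For the variance term I will use that $|2w_t-1|\leq 1$ and that, whenever $\est{t}$ lies within $C\sigma/10$ of $\true$, the soft weight $2w_t-1$ agrees with the true latent label $Y \in \{\pm 1\}$ up to an exponentially small correction, except on an event of probability $\exp(-\Omega(C^2))$. Writing $\bs{x} = Y\true + \bs{z}$ with $\bs{z}\sim\sN(0,\sigma^2 I)$ independent of $Y$, the dominant contribution to $(2w_t-1)\bs{x} - M(\est{t})$ is $Y\bs{z}$, whose covariance is $\sigma^2 I$ and hence contributes exactly $d\sigma^2$ to the variance; the mismatch event contributes an additional $\exp(-\Omega(C^2))(\|\true\|^2 + d\sigma^2)$ that can be absorbed into the bias. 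Putting the two pieces together produces a clean recursion $\E[\|\est{t+1}-\true\|^2] \leq (1-\eta(1-\kappa))\,\E[\|\est{t}-\true\|^2] + O(\eta^2 d\sigma^2)$.

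Iterating this recursion with $\eta = 3\log N/N$ and summing the resulting geometric series yields the bias contribution $(1-\eta(1-\kappa))^N\|\est{0}-\true\|^2 = \|\true\|^2/N^{\Omega(1)}$ and the variance contribution $O(\eta/(1-\kappa))\cdot d\sigma^2 = O(\log N/N)\cdot d\sigma^2$, exactly matching the statement. The remaining, and main, technical obstacle is that the per-step contraction is conditional on $\est{t}$ lying in the good ball around $\true$, and this confinement is not automatic because stochastic updates can push the iterate outside. I will handle this exactly as in \refsec{mart}: before the iterate first exits a slightly enlarged safety ball, the sequence $\|\est{t}-\true\|^2$ is a supermartingale (its drift is non-positive by the one-step lemma), and the one-step increment $\eta|2w_t-1|\,\|\bs{x}\|$ is sub-Gaussian with scale $O(\eta(\|\true\| + \sigma\sqrt{d}))$, so a Freedman-type concentration inequality for supermartingales with sub-Gaussian increments guarantees that no exit occurs within $N$ steps with probability $1 - 1/\poly(N)$. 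The initialization hypothesis $\|\est{0}-\true\|\leq C\sigma/20$ (delivered by Theorem~\ref{thm:initialization}) provides exactly the slack needed for this confinement argument to close.
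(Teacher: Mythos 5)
Your proposal is correct and follows the same two-part architecture the paper uses in Appendix~\ref{app:soft}: a one-step contraction lemma conditional on the iterate remaining near $\true$ (the paper's Lemma~\ref{lem:soft-onestep}), paired with the supermartingale confinement argument carried over from the hard case. Your explicit bias--variance split through the population EM map $M(\est{t}) = \E[(2w_t-1)\bs{x}]$ is a cleaner way to organize the one-step calculation than the paper's direct quadratic expansion, but the two are algebraically equivalent. The paper's route to the contraction is also more quantitative: via Lemma~\ref{lem:yt} it shows $M(\est{t}) - \true = 2\gamma^t(\est{t}-\true)$ with $\gamma^t \leq 1/(8C^2)$ (citing Theorem~2 of \cite{daskalakis2016ten}), giving a contraction factor $\kappa = 2\gamma^t \leq 1/(4C^2) \ll 1$ rather than merely a universal $\kappa < 1$; your generic invocation still yields $N^{-\Omega(1)}$ bias after iterating with $\eta = 3\log N/N$, so it suffices. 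One small inaccuracy in your write-up: the mismatch-event correction $\exp(-\Omega(C^2))(\|\true\|^2 + d\sigma^2)$ is a per-step additive term and so is absorbed into the $O(\eta^2 d\sigma^2)$ \emph{variance} constant (valid since $\exp(-\Omega(C^2))\|\true\|^2 = \exp(-\Omega(C^2))\,C^2\sigma^2/4 = O(\sigma^2)$ for $C \geq 4$), not into the bias term; the paper avoids this wrinkle because the $\|\true\|^2$ contributions cancel exactly in its expansion of $\E[\|(2w_t-1)\bs{x}-\true\|^2]$. The order of magnitude is unaffected, so this is a labeling slip rather than a gap.
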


\begin{remark}
	Our bias and variance terms are similar to the ones in Theorem~\ref{thm:main} but the above bound does not have the additional approximation error term. Hence, in this case we can estimate $\true$ consistently but the algorithm applies only to a mixture of Gaussians while our algorithm and result in Section~\ref{sec:algo} can potentially be applied to arbitrary sub-Gaussian distributions. \end{remark}
\begin{remark}
	We note that for our streaming soft update algorithm, it is not critical to know the
	variance $\sigma^2$ beforehand. One could get a good estimate of $\sigma$ by taking the mean of a random projection of a small number of points. We omit the details to simplify exposition of our proofs.
\end{remark}

\vspace*{-5pt}
\section{Conclusions}\vspace*{-5pt}
In this paper, we studied the problem of clustering with streaming data where each data point is sampled from a mixture of spherical Gaussians. 
For this problem, we study two algorithms that uses appropriate initialization: a) a streaming version of 
Lloyd's method, b) a streaming EM method. For both the methods we show that we 
can accurately initialize the cluster centers using an online PCA based method. We then show 
that assuming $\Omega((k\log k)^{1/4}\sigma)$ separation between the cluster centers, the updates by both 
the methods lead to decrease in both the bias as well as the variance error terms. 
For Lloyd's method there is an additional estimation error term, which even the offline algorithm incurs, and which  
is avoided by the EM method. 
However, the streaming Lloyd's method is agnostic to the data distribution and can 
in fact be applied to any mixture of sub-Gaussians problem. 
For future work, it would be interesting to study the streaming 
data clustering problem under deterministic assumptions like \cite{kumar2010clustering,monteleoniaistats16}. Also, it is an important question 
to understand the optimal separation assumptions needed for even the offline gaussian mixture clustering problem.
\clearpage
\bibliography{example_paper}
\bibliographystyle{plain}

\clearpage
\appendix
\section{Proofs from Section~\ref{sec:analysis}}
\label{app:update}

\begin{figure}[!tbp]
	\label{fig:kerror}
	\begin{center}
		\includegraphics[width=0.4\textwidth, height=0.2\textwidth]{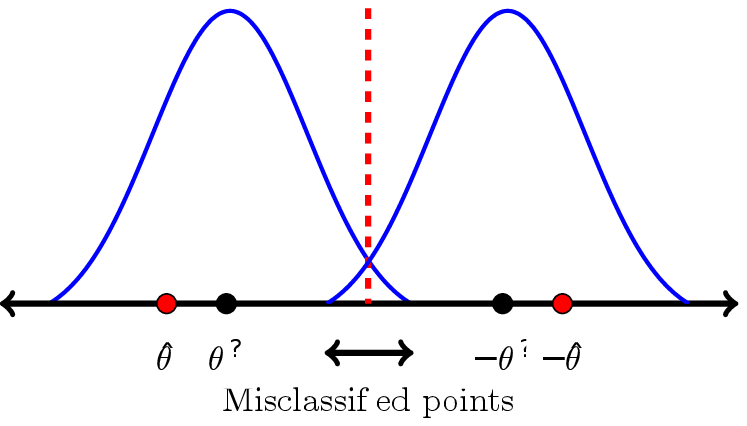}
		\caption{Illustration of optimal K-means error}
	\end{center}
\end{figure}

\begin{proof}[Proof of Lemma~\ref{lem:ibeatsj}]
Let $\bs{x} = \true[j] + \bs{z}$ where $\bs{z}$ is a mean-$\bs{0}$ spherical Gaussian with variance $\sigma^2 I$. Now, the condition $\|\bs{x} - \est[i]{t}\| < \|\bs{x} - \est[j]{t}\|$ is equivalent to:  $$	\|\true[j] + \bs{z} - \est[i]{t}\| < \|\true[j] + \bs{z} - \est[j]{t}\| ,i.e.,\ \|\true[j] - \true[i] + \bs{z} + \true[i] - \est[i]{t}\|^2 < \|\true[j] + \bs{z} - \est[j]{t}\|^2.$$ 
Now expanding the squared-norm, we get this to be equivalent to: $$\| \true[j] - \true[i]\|^2 + \|\true[i] - \est[i]{t}\|^2 + 2 \langle  \bs{z} , (\true[j] - \true[i]) + (\true[i] - \est[i]{t}) - (\true[j] - \est[j]{t}) \rangle + 2 \langle \true[j]-\true[i] , \true[i] - \est[i]{t} \rangle < \| \true[j] - \est[j]{t} \|^2.$$ 

Re-arranging terms, we get:
\begin{multline}
\|\bs{x} - \est[i]{t}\| < \|\bs{x} - \est[j]{t}\|\ \ \text{iff}\ \ 2\cdot \langle  \bs{z} , (\true[j] - \true[i]) + (\true[i] - \est[i]{t}) - (\true[j] - \est[j]{t}) \rangle \\< - \| \true[j] - \true[i]\|^2 - \|\true[i] - \est[i]{t}\|^2 - 2 \langle \true[j]-\true[i] , \true[i] - \est[i]{t} \rangle + \| \true[j] - \est[j]{t} \|^2.
\end{multline}
Using the Cauchy-Schwarz, we know that the above holds if the following is true: \begin{multline*}
2 \langle  \bs{z} , (\true[j] - \true[i]) + (\true[i] - \est[i]{t}) - (\true[j] - \est[j]{t}) \rangle < - \| \true[j] - \true[i]\|^2 - \|\true[i] - \est[i]{t}\|^2 + 2 \| \true[j]-\true[i] \| \| \true[i] - \est[i]{t} \| \\+ \| \true[j] - \est[j]{t} \|^2.\end{multline*}
Using $\sI_t$ and the fact that $\| \true[j] - \true[i] \|^2 = C_{ij}^2 \sigma^2$, the RHS above is at most $ -\frac{3}{4} C_{ij}^2 \sigma^2$. Moreover, the LHS is a Gaussian with mean $0$, and variance at most $ 4 \sigma^2 \| (\true[j] - \true[i]) + (\true[i] - \est[i]{t}) - (\true[j] - \est[j]{t}) \|^2 \leq  \sigma^2 ( C_{ij} \sigma + 2 (C\sigma/10) )^2$. So applying standard Gaussian concentration yields the desired result.
\end{proof}

\begin{proof}[Proof of Lemma~\ref{cor:ibeatsj}]
Following the proof of Lemma~\ref{lem:ibeatsj}, we get that using Cauchy-Schwarz inequality, the desired condition is stronger than the following condition: \begin{multline*}
2 \langle  \bs{z} , (\true[j] - \true[i]) + (\true[i] - \est[i]{t}) - (\true[j] - \est[j]{t}) \rangle < - \| \true[j] - \true[i]\|^2 - \|\true[i] - \est[i]{t}\|^2 \\+ 2 \| \true[j]-\true[i] \| \| \true[i] - \est[i]{t} \| + \| \true[j] - \est[j]{t} \|^2.
\end{multline*}
But now, since we have much better bounds on $\|\est[i]{t} - \true[i] \| < \sigma/C_{ij}$ and $\|\est[j]{t} - \true[j] \| < \sigma/C_{ij}$, we will get that the probability is almost equivalent to that of ${\cal N}(0, 4 C_{ij}^2 \sigma^4)$ random variable is smaller than $-C_{ij}^2 \sigma^2$, which in turn is $O(1) \exp(-C_{ij}^2/8)$.
\end{proof}

\begin{proof}[Proof of Lemma~\ref{lem:high-prob}]
	The proof follows directly from Lemma~\ref{lem:ibeatsj}. Consider the case when the next point $\bs{x}$ is sampled from cluster $i$. In this case, for every $j \neq i$, the probability that $g^t_{j}(\bs{x}) = 1$ is at most $1/2k$ by Lemma~\ref{lem:ibeatsj} since $C \geq \Omega(\sqrt{\log k})$. Then by the union bound, we get that that with probability at least $1/2$, $g^t_i(\bs{x}) = 1$. Now since the point $\bs{x}$ is sampled from cluster $i$ with probability $1/k$, the proof follows.
\end{proof}

\begin{proof}[Proof of Lemma~\ref{lem:per-cluster}]
	
	In all calculations in this proof, we first assume that the candidate centers satisfy $\sI_t$, and all expectations and probabilities are only over the new sample ${\bs{x}^t}$. For brevity in notation, we omit the $t$ superscript and simply refer to the random sample by $\bs{x}$.
	
	\begin{align*}
	\widehat{E}^{i}_{t+1} &= \E_{\bs{x}} \left[\|\est[i]{t+1} - \true[i] \|^2  \right] \\
	&= (1-\eta)^2  \|\est[i]{t} - \true[i] \|^2  + \eta^2  \E \left[ \| g_i^t(\bs{x}) (\bs{x} - \true[i]) + (1-g_i^t(\bs{x})) (\est[i]{t} -  \true[i]) \|^2 \right] \\
	&+ 2 \eta (1-\eta) \E \left[ \Big{\langle} \est[i]{t} - \true[i] , \left( g_i^t(\bs{x}) (\bs{x} - \true[i])  \right. \right.+ \left. \left. (1-g_i^t(\bs{x})) (\est[i]{t} - \true[i]) \right) \Big{\rangle} \right] \\
	&\leq (1-\frac{\eta}{2k})  \widetilde{E}^i_t +
	\eta^2  \underbrace{\E \left[ \| g_i^t(\bs{x}) (\bs{x} - \true[i])  \|^2 \right]}_{T_1} + 2 \eta (1-\eta) \underbrace{\E \left[ \Big{\langle} \est[i]{t} - \true[i] , \left( g_i^t(\bs{x}) (\bs{x} - \true[i]) \right) \Big{\rangle} \right]}_{T_2}
	\end{align*}
	The last inequality holds because of the following line of reasoning: (i) firstly, the cross term in the second squared norm evaluates to $0$ due to the product $g_i^t(\bs{x}) (1 - g_i^t(\bs{x}))$, (ii) $\eta^2 \E \left[ (1- g_i^t(\bs{x})) \| \est[i]{t} - \true[i] \| ^2 \right] \leq  \eta^2 \widetilde{E}^i_t$, (iii) $2 \eta  (1-\eta) \E \left[ \langle \est[i]{t} - \true[i] , (1 - g_i^t(\bs{x})) (\est[i]{t} - \true[i])  \rangle \right]$ $\leq 2 \eta (1-\eta) \widetilde{E}^i_t \Pr \left[ g_i^t(\bs{x}) = 0 \right]$ $\leq 2 \eta (1-\eta) \widetilde{E}^i_t (1 - 1/2k)$ by Lemma~\ref{lem:high-prob}, and finally (iv) by collecting terms with coefficient $\widetilde{E}^i_t$.
	The proof then follows from the below two lemmas.

	\begin{lemma}\label{lem:t1}
		If $\sI_t$ holds and $C = \Omega(\sqrt{\log k})$, then $T_1 \leq {O(d)} \sigma^2$.%if $C \geq \Omega(\sqrt{\log k})$.% at least $\Omega(\sqrt{\log k}) \sigma$ apart.
	\end{lemma}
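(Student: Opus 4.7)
The plan is to decompose the expectation in $T_1 = \E[\|g_i^t(\bs{x})(\bs{x}-\true[i])\|^2]$ according to which mixture component the fresh sample $\bs{x}$ comes from, and bound each piece separately. Write $T_1 = \sum_{j=1}^k w_j \, \E\bigl[\,\|\bs{x}-\true[i]\|^2 \cdot \mathbb{1}[g_i^t(\bs{x})=1] \,\big|\, \bs{x}\sim\mathrm{Cl}(j)\,\bigr]$, where $w_j=1/k$. The $j=i$ summand is the ``correctly assigned'' contribution, while the $j\neq i$ summands are cross-terms that should be tiny because Lemma~\ref{lem:ibeatsj} makes $g_i^t(\bs{x})=1$ unlikely when $\bs{x}$ comes from a different cluster.

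For $j=i$, simply drop the indicator and use that $\bs{x}=\true[i]+\bs{z}$ with $\bs{z}\sim\mathcal{N}(0,\sigma^2 I)$, giving a contribution of at most $\tfrac{1}{k}\,d\sigma^2$. For $j\neq i$, the subtlety is that $\|\bs{x}-\true[i]\|^2$ and the event $\{g_i^t(\bs{x})=1\}$ are correlated (samples from cluster $j$ that end up closer to $\est[i]{t}$ are typically the ones whose Gaussian noise pulls them towards cluster $i$, i.e.\ whose $\|\bs{x}-\true[i]\|$ is smaller than typical rather than larger). So rather than attempting a sharp joint analysis, I would decouple the two via Cauchy--Schwarz:
\[
\E\bigl[\|\bs{x}-\true[i]\|^2 \mathbb{1}[g_i^t(\bs{x})=1]\,\big|\,\bs{x}\sim\mathrm{Cl}(j)\bigr] \;\leq\; \sqrt{\E\bigl[\|\bs{x}-\true[i]\|^4\,\big|\,\bs{x}\sim\mathrm{Cl}(j)\bigr]}\cdot\sqrt{\Pr[g_i^t(\bs{x})=1\,|\,\bs{x}\sim\mathrm{Cl}(j)]}.
\]
The fourth-moment factor is easy: writing $\bs{x}-\true[i]=(\true[j]-\true[i])+\bs{z}$ and expanding, $\E[\|\bs{x}-\true[i]\|^4\mid\bs{x}\sim\mathrm{Cl}(j)]=O(C_{ij}^4\sigma^4 + d^2\sigma^4)$. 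The probability factor is bounded by $\exp(-\Omega(C_{ij}^2))$ using the same calculation as in the proof of Lemma~\ref{lem:ibeatsj} (indeed, the argument there produces a Gaussian tail of exactly this form). Putting the two together, each cross-term is at most $\tfrac{1}{k}(C_{ij}^2+d)\sigma^2\exp(-\Omega(C_{ij}^2))$.

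Now I would sum over $j\neq i$. Since $C_{ij}\geq C=\Omega(\sqrt{\log k})$, the factor $\exp(-\Omega(C_{ij}^2))$ can be made smaller than any fixed negative power of $k$ by choosing the hidden constant in $C=\Omega(\sqrt{\log k})$ large enough; in particular, it dominates $C_{ij}^2$ and the factor $k-1$ coming from summing over $j$. Hence the total cross-term contribution is at most $O(d\sigma^2/k)$, and combined with the $j=i$ piece we obtain $T_1 \leq O(d\sigma^2/k)\leq O(d)\sigma^2$, as claimed.

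The only genuinely delicate point is the correlation issue in the $j\neq i$ sum. Cauchy--Schwarz on the fourth moment handles it at the cost of a square root on the probability, but since Lemma~\ref{lem:ibeatsj} already gives $\exp(-\Omega(C_{ij}^2))$ (respectively $\exp(-\Omega(C_{ij}))$ as stated), the loss is absorbed by the separation assumption $C=\Omega(\sqrt{\log k})$. A slightly sharper approach would condition on the component of $\bs{z}$ along the direction $\true[i]-\true[j]$ and integrate explicitly, but this is not needed for the stated bound.
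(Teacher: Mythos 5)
Your decomposition of $T_1$ by source cluster and your handling of the $j=i$ term (dropping the indicator, giving $\tfrac{1}{k}d\sigma^2$) exactly match the paper. Where you diverge is in bounding the $j\neq i$ cross-terms: you decouple the indicator from the squared norm via Cauchy--Schwarz, then compute a fourth moment and pay a square root on the misclassification probability; the paper instead proves a dedicated auxiliary lemma (Lemma~\ref{lem:exp-if-iwins-on-j}) that bounds $\Pr[g_i^t(\bs{x})=1]\cdot\E[\|\bs{x}-\true[i]\|^2\mid g_i^t(\bs{x})=1]$ directly. The paper's argument observes that the conditioning event $\|\bs{x}-\est[i]{t}\|<\|\bs{x}-\est[j]{t}\|$ is a linear constraint $\langle\bs{z},\bs{a}\rangle<\tau$ on the Gaussian noise, so the components of $\bs{z}$ orthogonal to $\bs{a}$ are unaffected by conditioning (contributing $d\sigma^2$ times the probability), while the $\bs{a}$-component is a truncated Gaussian whose second moment times probability is $O(C_{ij}^2\exp(-\Omega(C_{ij}^2)))\sigma^2$. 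Your Cauchy--Schwarz route is simpler and avoids the truncated-Gaussian calculation entirely, at the cost of the square root on $\exp(-\Omega(C_{ij}^2))$; as you correctly note, the separation assumption $C=\Omega(\sqrt{\log k})$ (taken with a large enough implicit constant) absorbs that loss, and also handles summing over the $k-1$ values of $j$. You also correctly flag that the statement of Lemma~\ref{lem:ibeatsj} reads $\exp(-\Omega(C_{ij}))$ while its proof actually establishes the stronger $\exp(-\Omega(C_{ij}^2))$, which is what both your argument and the paper's subsequent lemmas implicitly use. One thing worth noting: the paper reuses Lemma~\ref{lem:exp-if-iwins-on-j} later in the $T_2$ analysis (Lemmas~\ref{lem:ip-if-iwins-on-j} and~\ref{lem:ip-if-iloses-on-i}), so the sharper conditional analysis earns its keep there, but within the scope of Lemma~\ref{lem:t1} alone your more elementary route is perfectly adequate.
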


	\begin{lemma}
		\label{lem:t2}
		If $\sI_t$ holds and $C = \Omega(\sqrt{\log k})$, then $T_2 \leq  \frac{\widetilde{V}_t}{k^5} + O(k) \exp(- C^2/8) ( C^2 + k) \sigma^2$.
	\end{lemma}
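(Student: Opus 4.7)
\textbf{Proof proposal for Lemma~\ref{lem:t2}.} The plan is to decompose $T_2$ according to which true component the sample $\bs{x}$ comes from, and to handle the ``native'' cluster $j=i$ and the ``foreign'' clusters $j\neq i$ in two qualitatively different ways. Conditioning on $\bs{x}\in \mathrm{Cl}(j)$ and writing $\bs{x}=\true[j]+\bs{z}$ with $\bs{z}\sim \sN(\bzero,\sigma^2 I)$ independent of $\est[i]{t}$ (recall $\est[i]{t}$ is fixed on the sample path $\omega$), one has
\[
T_2 \;=\; \frac{1}{k}\sum_{j}\E\!\left[\,g_i^t(\bs{x})\,\langle \est[i]{t}-\true[i],\, \true[j]-\true[i]+\bs{z}\rangle \,\big|\, \bs{x}\in \mathrm{Cl}(j)\right].
\]

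For the native term $j=i$, the $\true[j]-\true[i]$ piece vanishes. Writing $g_i^t=1-(1-g_i^t)$ and using $\E\langle\est[i]{t}-\true[i],\bs{z}\rangle=0$, what remains is $-\frac{1}{k}\E[(1-g_i^t(\bs{x}))\langle \est[i]{t}-\true[i],\bs{z}\rangle]$. Cauchy--Schwarz bounds this by $\frac{1}{k}\sqrt{\Pr[g_i^t(\bs{x})=0\mid \bs{x}\in \mathrm{Cl}(i)]}\cdot \|\est[i]{t}-\true[i]\|\cdot \sigma$. Lemma~\ref{lem:ibeatsj} together with a union bound over the $k-1$ competing centers bounds the inner probability by $k\exp(-\Omega(C^2))$ (invoking the sub-Gaussian form that actually comes out of its proof, rather than the weaker statement). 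The weighted AM--GM inequality $uv\leq u^2/(2\beta)+\beta v^2/2$ with $\beta=\Theta(k^4)$ then splits this cleanly into $\widetilde{E}^i_t/k^5\leq \widetilde{V}_t/k^5$ plus a $\mathrm{poly}(k)\exp(-\Omega(C^2))\sigma^2$ remainder that is absorbed into the target exponential term.

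For each foreign $j\neq i$ I split the inner product into a deterministic-offset piece $\E[g_i^t(\bs{x})]\cdot\langle \est[i]{t}-\true[i],\true[j]-\true[i]\rangle$ and a noise piece $\E[g_i^t(\bs{x})\langle \est[i]{t}-\true[i],\bs{z}\rangle]$. Lemma~\ref{lem:ibeatsj}, applied in the direction ``a $\mathrm{Cl}(j)$-sample is misclassified to $i$'', gives $\E[g_i^t(\bs{x})\mid \mathrm{Cl}(j)]\leq \exp(-C_{ij}^2/8)$. Under $\sI_t$ the offset piece is then at most $(C\sigma/10)\cdot C_{ij}\sigma\cdot \exp(-C_{ij}^2/8)$ and the noise piece, after Cauchy--Schwarz, is at most $(C\sigma/10)\cdot \sigma\cdot \sqrt{\exp(-C_{ij}^2/8)}$. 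Summing over the $k-1$ foreign clusters, using $C_{ij}\geq C$ and monotonicity of $x\mapsto x\exp(-x^2/8)$ for $x$ large, yields an overall contribution of $O(k)\exp(-C^2/8)(C^2+k)\sigma^2$, matching the target.

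The main obstacle is constant bookkeeping rather than a conceptual hurdle: landing exactly $C^2/8$ in the exponent requires extracting the sub-Gaussian concentration that the proof of Lemma~\ref{lem:ibeatsj} actually delivers (since $\sI_t$ alone does not give the stronger hypothesis of Lemma~\ref{cor:ibeatsj}), and the AM--GM parameter $\beta$ must be tuned so that the cross term lands precisely on $\widetilde{V}_t/k^5$ while the residual absorbs into the exponential. Neither step is deep, but both must be tracked carefully to obtain the exact form of the bound.
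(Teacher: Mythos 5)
Your starting decomposition of $T_2$ by the true source cluster of $\bs{x}$, and the further split of the foreign term into a deterministic offset piece plus a noise piece, matches the paper's route (Lemmas~\ref{lem:ip-if-iwins-on-j} and~\ref{lem:ip-if-iloses-on-i}). But there is a genuine gap in how you land on the exponent $C^2/8$, and a second, independent loss in the noise-piece bound.

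On the exponent: you assert that the proof of Lemma~\ref{lem:ibeatsj} ``actually delivers'' $\E[g_i^t(\bs{x})\mid \mathrm{Cl}(j)]\le \exp(-C_{ij}^2/8)$ under $\sI_t$ alone. It does not. Tracing that proof under $\sI_t$, the relevant Gaussian has variance bounded by $4\sigma^2\bigl(C_{ij}\sigma+2\cdot\tfrac{C\sigma}{10}\bigr)^2\le 5.76\,C_{ij}^2\sigma^4$ and must fall below $-\tfrac{3}{4}C_{ij}^2\sigma^2$, giving a tail of $\exp\bigl(-\tfrac{(3/4)^2}{2\cdot 5.76}C_{ij}^2\bigr)\approx\exp(-C_{ij}^2/20)$. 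The constant $1/8$ only appears in Lemma~\ref{cor:ibeatsj}, whose hypothesis $\|\est[i]{t}-\true[i]\|\le\sigma/C_{ij}$ is \emph{not} implied by $\sI_t$. This matters because $\exp(-C^2/20)$ \emph{dominates} $\exp(-C^2/8)$, so your remainder cannot be absorbed into the target term for arbitrarily large $C$; it is only small when $C^2=\Theta(\log k)$. The paper bridges this precisely with the two-case argument inside Lemma~\ref{lem:ip-if-iwins-on-j}: if $\max(\|\est[i]{t}-\true[i]\|,\|\est[j]{t}-\true[j]\|)>\sigma/C_{ij}$ then $\widetilde{V}_t\ge\sigma^2/C_{ij}^2$, and the whole contribution (with the weaker $\exp(-\Omega(C_{ij}^2))$) is charged to $\widetilde{V}_t/(2k^5)$ using $C=\Omega(\sqrt{\log k})$; otherwise Lemma~\ref{cor:ibeatsj} applies and gives the sharp $\exp(-C_{ij}^2/8)$. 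You have no replacement for this case split, and ``careful bookkeeping'' on Lemma~\ref{lem:ibeatsj} alone cannot manufacture the $1/8$.

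On the noise pieces: writing $\E[g_i^t\langle u,\bs{z}\rangle]\le\sqrt{\Pr[g_i^t=1]}\,\|u\|\sigma$ by Cauchy--Schwarz halves the exponent, so even granting $1/8$ you only get $\exp(-C_{ij}^2/16)$, which again cannot be absorbed into $\exp(-C^2/8)(C^2+k)\sigma^2$ for large $C$. The paper avoids this by keeping the full probability factor: it writes $\E[g_i^t\langle u,\bs{z}\rangle]=\Pr[g_i^t=1]\cdot\E[\langle u,\bs{z}\rangle\mid g_i^t=1]$, projects $\bs{z}$ onto the $O(k)$-dimensional span of the candidate centers and $\true[j]$ (so orthogonal components contribute zero), then bounds $\E[\langle u,\bs{z}_\Pi\rangle\mid g_i^t=1]\le\tfrac12\|u\|^2+\tfrac12\E[\|\bs{z}_\Pi\|^2\mid g_i^t=1]$ and controls the latter with the truncated-Gaussian estimate of Lemma~\ref{lem:exp-if-iwins-on-j}. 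The same applies to the native ($j=i$) term, where the paper also invokes the two-case split (``two cases similar to the above proof'' in Lemma~\ref{lem:ip-if-iloses-on-i}) rather than a direct Cauchy--Schwarz plus AM--GM.
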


\end{proof}
\begin{proof}[Proof of Lemma~\ref{lem:t1}]
		\begin{align*}
		T_1 &= \frac1k \sum_{j \neq i} \Pr \left[ g_i^t(\bs{x}) = 1 | \, x \sim {\rm Cl}(j) \right]{\E \left[ \| \bs{x} - \true[i] \|^2 \, \Big{|} g_t^t(\bs{x})=1 \, {\rm and }\, x \sim {\rm Cl}(j) \right]}, \\
		&+ \frac1k \Pr \left[ g_i^t(\bs{x}) = 1 | \, x \sim {\rm Cl}(i) \right] \E \left[ \| \bs{x} - \true[i] \|^2 \, \Big{|}  g_t^t(\bs{x})=1 \, {\rm and }\, x \sim {\rm Cl}(i) \right], \\
		&\leq \frac1k \sum_{j \neq i} O(1) \exp(-\Omega(C^2)) (C^2 + d) \sigma^2
		\\&\qquad\qquad+ \frac1k \Pr \left[ g_i^t(\bs{x}) = 1 | \, x \sim {\rm Cl}(i) \right]
		\E \left[ \| \bs{x} - \true[i] \|^2 \, \Big{|} \, g_t^t(\bs{x})=1 \, {\rm and }\, x \sim {\rm Cl}(i) \right], \\
		&\leq  O(1) \exp(-\Omega(C^2)) (C^2 + d) \sigma^2 + d \sigma^2.
		\end{align*}
		Above, the first inequality follows from Lemma~\ref{lem:exp-if-iwins-on-j} stated below, and the second inequality follows because we are summing a non-negative quantity over all $\bs{x} \sim {\rm Cl}(i)$ instead of only those where $g_i^t(\bs{x}) = 1$. %The proofs of these lemmas will follow in the next section.
\end{proof}

\begin{proof}[Proof of Lemma~\ref{lem:t2}]
		\begin{align*}
		T_2 =& \frac1k \sum_{j \neq i} \Pr \left[ g_i^t(\bs{x}) = 1 | \, x \sim {\rm Cl}(j) \right]
		{\E \left[ \Big{\langle} \est[i]{t} - \true[i] ,  (\bs{x} - \true[i])  \Big{\rangle} \, \Big{|} \,  g_t^t(\bs{x})=1 \, {\rm and }\, x \sim {\rm Cl}(j) \right]} \\
		&+ \frac1k \Pr \left[ g_i^t(\bs{x}) = 1 | \, x \sim {\rm Cl}(i) \right]
		\E \left[ \Big{\langle} \est[i]{t} - \true[i] ,  (\bs{x} - \true[i])  \Big{\rangle} \, \Big{|}  \, g_t^t(\bs{x})=1 \, {\rm and }\, x \sim {\rm Cl}(i) \right], \\
		=& \frac1k \sum_{j \neq i} \Pr \left[ g_i^t(\bs{x}) = 1 | \, x \sim {\rm Cl}(j) \right]
		{\E \left[ \Big{\langle} \est[i]{t} - \true[i] ,  (\bs{x} - \true[i])  \Big{\rangle} \, \Big{|} \,  g_t^t(\bs{x})=1 \, {\rm and }\, x \sim {\rm Cl}(j) \right]} \\
		&- \frac1k \Pr \left[ g_i^t(\bs{x}) = 0 | \, x \sim {\rm Cl}(i) \right]
		\E \left[ \Big{\langle} \est[i]{t} - \true[i] ,  (\bs{x} - \true[i])  \Big{\rangle} \, \Big{|} \, g_t^t(\bs{x})=0 \, {\rm and }\, x \sim {\rm Cl}(i) \right], \\
		\leq& \frac{\widetilde{V_t}}{k^5} + O(k)\exp(-C^2/8) (C^2 + k) \sigma^2.
		\end{align*}
		The inequality above follows from Lemmas~\ref{lem:ip-if-iwins-on-j}, and~\ref{lem:ip-if-iloses-on-i}. %, both of which are described in what follows.
	\end{proof}

	\begin{lemma}
		\label{lem:exp-if-iwins-on-j}
		Suppose $\sI_t$ holds, and fix any $i, j\neq i$. Then, if $\bs{x} \sim {\rm Cl}(j)$, we have $\Pr_{\bs{x}} \left[ g_i^t(\bs{x}) = 1 \right]  {\E \left[ \| \bs{x} - \true[i] \|^2 \, \Big{|} \, g_t^t(\bs{x})=1 \right]} \leq O(1) \exp(-\Omega(C^2)) (C^2 + d) \sigma^2$.
	\end{lemma}

\begin{proof}
Intuitively, we know from Lemma~\ref{lem:ibeatsj} that if $\bs{x} \sim {\rm Cl}(j)$, then $\Pr_{\bs{x}} \left[ g_i^t(\bs{x}) = 1 \right] \leq \Pr_{\bs{x}} \left[ \|\bs{x} - \est[i]{t} \| < \|\bs{x} - \est[j]{t} \| \right]$, which in turn is at most $\exp(-\Omega(C_{ij}^2))$. In this case, we incur a cost of  $\| \bs{x} - \true[i] \|^2$ which is roughly $O(1) ( \| \true[j] - \true[i] \|^2 + \| \bs{x} - \true[j] \|^2 )$. The first term is $C_{ij}^2 \sigma^2$, and the second term in expectation is $d \sigma^2$. Multiplying this with the probability would establish the result. Of course, the expectation calculated here is not precise due to the conditioning involved. We formally now formally show the details.

As $\| \bs{x} - \true[i]\|^2$ is a non-negative quantity and the condition $\| \bs{x} - \est[i]{t} \| <  \| \bs{x} - \est[j]{t} \|$ is weaker than $g_i^t(\bs{x}) = 1$, we have: 
We have: \begin{multline*}
\Pr_{\bs{x}} \left[ g_i^t(\bs{x}) = 1  \right]  \cdot {\E \left[ \| \bs{x} - \true[i] \|^2 \, \Big{|}   \, g_t^t(\bs{x})=1 \right]} \leq \\\Pr_{\bs{x}} \left[ \| \bs{x} - \est[i]{t} \| <  \| \bs{x} - \est[j]{t} \|  \right] \cdot {\E \left[ \| \bs{x} - \true[i] \|^2 \, \Big{|}  \, \| \bs{x} - \est[i]{t} \| <  \| \bs{x} - \est[j]{t} \| \right]}.
\end{multline*}
So we bound the RHS above to complete the proof.

Now, let $\bs{x} = \true[j] + \bs{z}$ where $\bs{z}$ is sampled from a spherical normal Gaussian with mean $0$ and variance $\sigma^2$ along each direction. Then note that: $$\| \bs{x} - \true[i] \|^2 = \| \bs{z} + \true[j] - \true[i] \|^2 \leq 2 \|\bs{z}\|^2 + 2 C_{ij}^2 \sigma^2.$$ Moreover, we know by Lemma~\ref{lem:ibeatsj} that $\Pr_{\bs{x}} \left[ \| \bs{x} - \est[i]{t} \| <  \| \bs{x} - \est[j]{t} \|  \right] \leq \exp(-\Omega(C_{ij}^2))$. So multiplying the conditional expectation with the probability gives: \begin{multline}
\label{eq:up1}
\Pr_{\bs{x}} \left[ \| \bs{x} - \est[i]{t} \| <  \| \bs{x} - \est[j]{t} \|  \right] \cdot{\E \left[ 2 C_{ij}^2 \sigma^2 \, \Big{|}  \, \| \bs{x} - \est[i]{t} \| <  \| \bs{x} - \est[j]{t} \| \right]} \\\leq 2 C_{ij}^2 \sigma^2 \exp(-\Omega(C_{ij}^2)) \leq 2 C^2 \sigma^2 \exp(-\Omega(C^2)).
\end{multline}

We now bound: $2 \Pr_{\bs{x}} \left[ \| \bs{x} - \est[i]{t} \| <  \| \bs{x} - \est[j]{t} \|  \, \right] $ $ {\E \left[ \| \bs{z}  \|^2 \, \Big{|} \| \bs{x} - \est[i]{t} \| <  \| \bs{x} - \est[j]{t} \| \right]}$.  The crux of the proof now lies in the fact that the condition $\|\bs{x} - \est[i]{t} \|^2 < \|\bs{x} - \est[j]{t} \|^2$ boils down to the following linear inequality on $\bs{z}$: $$2 \langle \bs{z},  \est[j]{t} -  \est[i]{t} \rangle < \| \est[j]{t} \|^2 - \|\est[i]{t}\|^2  - 2 \langle \true[j] ,  \est[j]{t} -  \est[i]{t} \rangle.$$ Inspired by this, let us define $\tau := \frac{1}{2 \|\est[j]{t} -  \est[i]{t}\|} \left( \| \est[j]{t} \|^2 - \|\est[i]{t}\|^2  - 2 \langle  \true[j] ,  \est[j]{t} -  \est[i]{t} \rangle \right)$, and also define ${\bf a} := \frac{ \est[j]{t} -  \est[i]{t}}{\| \est[j]{t} -  \est[i]{t}\|}$. So now note that the condition $\|\bs{x} - \est[i]{t} \|^2 < \|\bs{x} - \est[j]{t} \|^2$ is equivalent to $\langle \bs{z}, {\bf a} \rangle < \tau$.

Since $\bs{z}$ is a spherical Gaussian, imposing a condition on $\langle \bs{z}, {\bf a} \rangle < \tau$ results in a truncated Gaussian along the direction ${\bf a}$ and an independent Gaussian in all orthogonal directions. So the expected squared length of the projection of $\bs{z}$ along orthogonal directions is  $\sigma^2$. Since we know that $\Pr_{\bs{x}} \left[ \| \bs{x} - \est[i]{t} \| <  \| \bs{x} - \est[j]{t} \|  \, \right]$ is at most $\exp(-\Omega(C_{ij}^2))$, their overall contribution is at most $\exp(-\Omega(C_{ij}^2)) d \sigma^2$. That is, 
$$\Pr_{\bs{x}} \left[ \| \bs{x} - \est[i]{t} \| <  \| \bs{x} - \est[j]{t} \|  \, \right]\cdot \E \left[ \| (I-{\bf a}{\bf a}^T)\bs{z}  \|^2 \, \Big{|} \| \bs{x} - \est[i]{t} \| <  \| \bs{x} - \est[j]{t} \| \right]\leq \exp(-\Omega(C_{ij}^2)) d \sigma^2.$$

So finally it remains to bound $\E \left[ \langle \bs{z}, {\bf a} \rangle^2 \, | \, \langle \bs{z}, {\bf a} \rangle < \tau \right]$. To this end, define the random variable $\hat{z} \equiv \langle \bs{z}, {\bf a} \rangle$. So we simply need to to upper bound $Pr \left[ \hat{z} < \tau  \, \right] \E \left[ \hat{z}^2 | \hat{z} < \tau \right]$. But now note that $Pr \left[ \hat{z} < \tau  \, \right] \leq \exp(-\Omega(C_{ij})^2)$. So using standard calculus, we get that this quantity attains a maximum value of $O(C_{ij}^2 \exp(-\Omega(C_{ij}^2))) \sigma^2$. This completes the proof.
\end{proof}

	\begin{lemma}
		\label{lem:ip-if-iwins-on-j}
		Suppose $\sI_t$ holds, and fix any $i, j\neq i$. Then, if $\bs{x} \sim {\rm Cl}(j)$, we have $\Pr_{\bs{x}} \left[ g_i^t(\bs{x}) = 1  \right]  {\E \left[ \langle \est[i]{t} - \true[i] , \bs{x} - \true[i] \rangle \, \Big{|} \, \, g_i^t(\bs{x})=1 \right]} \leq \frac{\widetilde{V}_t}{2 k^5} + O(1) \exp(-C^2/8) (C^2 + k)  \sigma^2$.
	\end{lemma}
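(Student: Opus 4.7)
The plan is to write $\bs{x} = \true[j] + \bs{z}$ with $\bs{z} \sim \sN(\bzero, \sigma^2 I)$, and split the inner product into a deterministic and a stochastic piece:
\[
\langle \est[i]{t} - \true[i], \bs{x} - \true[i]\rangle = \underbrace{\langle \est[i]{t} - \true[i], \true[j] - \true[i]\rangle}_{A} + \underbrace{\langle \est[i]{t} - \true[i], \bs{z}\rangle}_{B(\bs{z})}.
\]
Since $g_i^t(\bs{x}) = 1$ implies the weaker pairwise event $\sE_{ij} := \{\|\bs{x} - \est[i]{t}\| \leq \|\bs{x} - \est[j]{t}\|\}$, I will upper-bound the target by $\E[\mathbf{1}[\sE_{ij}]\,|A + B(\bs{z})|]$ (passing to absolute values is harmless since the claimed bound is non-negative). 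Following the same reparametrization used in the proof of Lemma~\ref{lem:exp-if-iwins-on-j}, the event $\sE_{ij}$ is equivalent to a one-dimensional half-space constraint $\langle \bs{z}, \bs{a}\rangle < \tau$, where $\bs{a} = (\est[j]{t} - \est[i]{t})/\|\est[j]{t} - \est[i]{t}\|$ and $\tau < 0$ has magnitude $\Theta(C_{ij}\sigma)$ under condition $\sI_t$.

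The contribution from $A$ is straightforward: Cauchy--Schwarz yields $|A| \leq \sqrt{\widetilde{E}_t^i}\cdot C_{ij}\sigma$, and Lemma~\ref{lem:ibeatsj} gives $\Pr[\sE_{ij}] \leq \exp(-\Omega(C_{ij}^2))$, so its contribution is at most $\sqrt{\widetilde{E}_t^i}\cdot C_{ij}\sigma\cdot \exp(-\Omega(C_{ij}^2))$. For $B(\bs{z})$, I decompose $\bs{z} = Z\bs{a} + \bs{z}_\perp$ with $Z = \langle \bs{z}, \bs{a}\rangle$; the orthogonal component $\bs{z}_\perp$ is independent of $\mathbf{1}[\sE_{ij}]$ (which depends only on $Z$), so that piece contributes $0$ in expectation. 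The parallel piece contributes $\langle \est[i]{t} - \true[i], \bs{a}\rangle\cdot \E[\mathbf{1}[Z<\tau]\,Z]$, and the standard Gaussian tail identity $|\E[\mathbf{1}[Z<\tau]Z]| = O(|\tau|\cdot \Pr[Z<\tau]) = O(C_{ij}\sigma\,\exp(-\Omega(C_{ij}^2)))$, combined with $|\langle \est[i]{t} - \true[i], \bs{a}\rangle| \leq \sqrt{\widetilde{E}_t^i}$, gives the same order of bound as for $A$.

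Combining, the full quantity is at most $O(1)\sqrt{\widetilde{E}_t^i}\cdot C_{ij}\sigma\cdot \exp(-\Omega(C_{ij}^2))$. Applying the weighted AM--GM $\sqrt{a}\cdot b \leq \tfrac{a}{4k^5} + k^5 b^2$ with $a = \widetilde{E}_t^i \leq \widetilde{V}_t$ and $b = C_{ij}\sigma\,\exp(-\Omega(C_{ij}^2))$ produces the leading term $\widetilde{V}_t/(2k^5)$ demanded by the statement. The main obstacle is bookkeeping the constants so that the residual $k^5 C_{ij}^2\sigma^2\,\exp(-\Omega(C_{ij}^2))$ collapses into $O(1)\exp(-C^2/8)(C^2+k)\sigma^2$: here one uses $C_{ij} \geq C = \Omega(\sqrt{\log k})$ so that $k^5 = \exp(5\log k)$ is absorbed by a small fraction of the Gaussian exponent, leaving a rate of $\exp(-C^2/8)$. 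This in turn relies on showing that the constant hidden in $\Omega(C_{ij}^2)$ from Lemma~\ref{lem:ibeatsj} is strictly larger than $1/8$ under the proximity guarantee $\|\est[i]{t} - \true[i]\| \leq C\sigma/10$ from $\sI_t$; this is the delicate quantitative step of the argument, but it follows directly from the variance computation in the proof of Lemma~\ref{lem:ibeatsj}.
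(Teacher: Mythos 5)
Your decomposition into $A = \langle \est[i]{t} - \true[i], \true[j] - \true[i]\rangle$ and $B(\bs{z}) = \langle \est[i]{t} - \true[i], \bs{z}\rangle$, and the reduction of the indicator to a half-space event, are reasonable; the argument breaks on the last quantitative step, in a way that is not repairable inside your framework.

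First, the claim that the constant hidden in the $\exp(-\Omega(C_{ij}^2))$ tail of Lemma~\ref{lem:ibeatsj} is larger than $1/8$ under $\sI_t$ is false. Under $\sI_t$ alone one only has $\|\est[i]{t}-\true[i]\|, \|\est[j]{t}-\true[j]\| \leq C\sigma/10 \leq C_{ij}\sigma/10$. Tracking the constants in that proof, the threshold on the right-hand side is at most about $-0.79\, C_{ij}^2\sigma^2$, while the Gaussian on the left has standard deviation up to about $2.4\,\sigma^2 C_{ij}$, so the tail is roughly $\exp\!\big(-0.05\, C_{ij}^2\big)$, well \emph{below} the $\exp(-C_{ij}^2/8)$ rate you need. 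The $\exp(-C_{ij}^2/8)$ rate belongs to Lemma~\ref{cor:ibeatsj}, which requires the much stronger hypothesis $\max(\|\est[i]{t}-\true[i]\|,\|\est[j]{t}-\true[j]\|) \leq \sigma/C_{ij}$, not merely $\sI_t$. Second, even granting the $1/8$ exponent, the $k^5$ multiplier your AM--GM introduces is not absorbable: the residual $k^5 C_{ij}^2\sigma^2 \exp(-C_{ij}^2/8)$ at $C_{ij}=C$ would need $k^5 C^2 \lesssim C^2+k$, which fails for any $C = \Theta(\sqrt{\log k})$, regardless of the hidden constant in the separation assumption. These two failures are not coincidental; they are the reason the paper's proof runs a case split instead of a single AM--GM. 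When $\max(\|\est[i]{t}-\true[i]\|,\|\est[j]{t}-\true[j]\|) > \sigma/C_{ij}$, one has $\widetilde{V}_t \geq \sigma^2/C_{ij}^2$, so every term (including $\|\true[i]-\true[j]\|^2 = C_{ij}^2\sigma^2 \leq C_{ij}^4\widetilde{V}_t$) can be charged directly to $\widetilde{V}_t$, and the loose $\exp(-\Omega(C_{ij}^2))$ rate from Lemma~\ref{lem:ibeatsj} combined with $C = \Omega(\sqrt{\log k})$ absorbs all polynomial prefactors into the $\widetilde{V}_t/(2k^5)$ budget with no residual at all. In the complementary case Lemma~\ref{cor:ibeatsj} applies, the rate is genuinely $\exp(-C_{ij}^2/8)$, there is no $\widetilde{V}_t$ term and no $k^5$ factor, and one gets $O(1)\exp(-C_{ij}^2/8)(C_{ij}^2+k)\sigma^2$. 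The uniform AM--GM cannot decouple these two regimes and therefore cannot reach the claimed bound.

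As a separate, minor point: once you pass to absolute values, the orthogonal piece $\langle\est[i]{t}-\true[i],\bs{z}_\perp\rangle$ does not vanish in expectation; its contribution is $\Pr[\sE_{ij}]\cdot O\big(\sigma\sqrt{\widetilde{E}_t^i}\big)$, the same order as the parallel piece, so the conclusion of that sub-step still holds but the stated justification is wrong.
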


\begin{proof}
Again we provide an intuitive proof sketch before giving the formal proof.
We first upper bound the inner product inside the expectation by $O(1) ( \|\bs{x} - \true[j]\|^2 + \|\true[i] - \true[j] \|^2 + \| \est[i]{t} - \true[i]\|^2)$. On average, the first term is $d \sigma^2$, the second term is $C_{ij}^2 \sigma^2$, and the third is at most $\widetilde{V}_t$, the maximum squared cluster error at time $t$. But note now that this quantity is multiplied by the probability $\Pr_{\bs{x}} \left[ g_i^t(\bs{x}) = 1 \right]$, which is at most $\Pr_{\bs{x}} \left[ \|\bs{x} - \est[i]{t} \| < \|\bs{x} - \est[j]{t} \| \right] \leq \exp(-\Omega(C_{ij}^2))$. 

So now we consider two cases: in the first case, $\widetilde{V}_t \geq \sigma^2/C^2_{ij}$. Here, we can charge all the three terms in terms of $\widetilde{V}_t$, and since $C \geq \Omega(\sqrt{\log k})$, the overall expression can be bounded by $\frac{\widetilde{V}_t}{2k^5}$; in the second case, $\widetilde{V}_t < \sigma^2/C^2_{ij}$. Here, we can use Lemma~\ref{cor:ibeatsj} instead of Lemma~\ref{lem:ibeatsj} to get a much more accurate failure probability of $\exp(-C_{ij}^2/8)$. So we now get a bound of $O(1) \exp(-C^2/8) (C^2 + d) \sigma^2$. Combining the two cases yields the lemma. We can improve upon the $d \sigma^2$ term with a more careful analysis.

{\bf Detailed Proof}:  Again, let $\bs{x} = \true[j] + \bs{z}$ where $\bs{z}$ is sampled from a spherical normal Gaussian with mean $0$ and variance $\sigma^2$ along each direction.
Firstly, note that: \begin{multline*}{\E \left[ \Big{\langle} \est[i]{t} - \true[i] ,  (\true[j] +\bs{z} - \true[i])  \Big{\rangle} \, \Big{|} \sI_{t} \, {\rm and } \, g_i^t({\bf \true[j] + \bs{z}})=1 \right]}$ $\leq \frac12 \left( \| \est[i]{t} - \true[i] \|^2 + \| \true[j] - \true[i] \|^2 \right) \\+  {\E \left[ \Big{\langle} \est[i]{t} - \true[i] ,  \bs{z}  \Big{\rangle} \, \Big{|} \sI_{t} \, {\rm and } \, g_i^t({\bf \true[j] + \bs{z}})=1 \right]}.\end{multline*}
Now, note that $g_i^t(\bs{x})=1$ depends on only the projection of $\bs{x}$ onto the subspace spanned by the $k+1$ vectors $\true[j]$ and the candidate centers $\est[j']{t}$ for all $j'$. So the projection of $\bs{z}$ on directions orthogonal to this subspace remain normal variables with mean $0$ and variance $\sigma^2$, and hence their contribution to the inner product is $0$. So we effectively only need to bound: \begin{multline*}{\E \left[ \Big{\langle} \est[i]{t} - \true[i] ,  {\bs{z}_{\Pi}}  \Big{\rangle} \, \Big{|} \sI_{t} \, {\rm and } \, g_i^t({\bf \true[j] + \bs{z}})=1 \right]}\\\leq\frac12 \| \est[i]{t} - \true[i] \|^2 + \frac12 \E \left[  \|{\bs{z}_{\Pi}}\|^2  \, \Big{|} \sI_{t} \, {\rm and } \, g_i^t({\bf \true[j] + \bs{z}})=1 \right].\end{multline*}
 So overall, the quantity we are seeking to bound in the Lemma is at most: \begin{multline*}\Pr_{\bs{x} \sim {\rm Cl}(j)} \left[ g_i^t(\bs{x}) = 1 \, \Big{|} \, \sI_{t} \right] \\\cdot \left(\| \est[i]{t} - \true[i]\|^2 + \frac12 \| \true[i] - \true[j]\|^2 + \E \left[  \|{\bs{z}_{\Pi}}\|^2  \, \Big{|} \sI_{t} \, {\rm and } \, g_i^t({\bf \true[j] + \bs{z}})=1 \right] \right).\end{multline*} Finally, it is easy to see that this expression is at most: \begin{multline*}\Pr_{\bs{x} \sim {\rm Cl}(j)} \left[ \|\bs{x} - \est[i]{t} \| < \|\bs{x} - \est[j]{t} \| \, \Big{|} \, \sI_{t} \right]\\ \cdot \left(\| \est[i]{t} - \true[i]\|^2 + \frac12 \| \true[i] - \true[j]\|^2 + \E \left[  \|{\bs{z}_{\Pi}}\|^2  \, \Big{|} \sI_{t} \, {\rm and } \, \|\bs{x} - \est[i]{t} \| < \|\bs{x} - \est[j]{t} \| \right] \right).\end{multline*}

We now consider two cases, depending on whether $\max (\| \est[i]{t} - \true[i] \|, \| \est[j]{t} - \true[j] \| ) >  \sigma/C_{ij}$ or not.

\medskip \noindent {\bf Case (i): $\max (\| \est[i]{t} - \true[i] \|, \| \est[j]{t} - \true[j] \| ) > \sigma/C_{ij}$.} In this case we will show that the desired expression is at most $\frac{\widetilde{V}_t}{k^5}$. Indeed, to this end, firstly note that we have $\| \true[i] - \true[j] \|^2 = C_{ij}^2 \sigma^2 \leq C_{ij}^4 \widetilde{V}_t$. Moreover, note that from Lemma~\ref{lem:ibeatsj} we have: $$\Pr_{\bs{x} \sim {\rm Cl}(j)} \left[ \|\bs{x} - \est[i]{t} \| < \|\bs{x} - \est[j]{t} \| \, \Big{|} \, \sI_{t} \right] \leq \exp(-\Omega(C_{ij})^2).$$ So using this, we can show in a manner akin to Lemma~\ref{lem:exp-if-iwins-on-j} that $\Pr_{\bs{x} \sim {\rm Cl}(j)} \left[ \|\bs{x} - \est[i]{t} \| < \|\bs{x} - \est[j]{t} \| \, \Big{|} \, \sI_{t} \right] \E \left[  \|{\bs{z}_{\Pi}}\|^2  \, \Big{|} \sI_{t} \, {\rm and } \, \|\bs{x} - \est[i]{t} \| < \|\bs{x} - \est[j]{t} \| \right]$ is at most $O(1) \exp(-\Omega(C_{ij})^2) (C_{ij}^2 + k ) \sigma^2 \leq \exp(-\Omega(C_{ij})^2) (C_{ij}^2 + k ) C_{ij}^2 \widetilde{V}_t$. Putting everything together, we get that the desired quantity we need to bound is at most $O(1) \exp(-\Omega(C_{ij})^2) \widetilde{V}_t \left(1 + C_{ij}^4   +  + k C_{ij}^2 \right)$. Overall this is at most $\frac{\widetilde{V}_t}{2k^5}$ since $C_{ij} \geq C = \Omega(\sqrt{\log k})$.

\medskip \noindent {\bf Case (ii): $\max (\| \est[i]{t} - \true[i] \|, \| \est[j]{t} - \true[j] \| ) < \sigma/C_{ij}$.}
In this case, we want to replace the use of Lemma~\ref{lem:ibeatsj} with Lemma~\ref{cor:ibeatsj} in the above proof. Indeed, from Lemma~\ref{cor:ibeatsj} we have: $$\Pr_{\bs{x} \sim {\rm Cl}(j)} \left[ \|\bs{x} - \est[i]{t} \| < \|\bs{x} - \est[j]{t} \| \, \Big{|} \, \sI_{t} \right] \leq \exp(-C_{ij}^2/8).$$ 
So using this, we can show in a manner akin to Lemma~\ref{lem:exp-if-iwins-on-j} that: \begin{multline*}\Pr_{\bs{x} \sim {\rm Cl}(j)} \left[ \|\bs{x} - \est[i]{t} \| < \|\bs{x} - \est[j]{t} \| \, \Big{|} \, \sI_{t} \right] \E \left[  \|{\bs{z}_{\Pi}}\|^2  \, \Big{|} \sI_{t} \, {\rm and } \, \|\bs{x} - \est[i]{t} \| < \|\bs{x} - \est[j]{t} \| \right]\\\leq O(1) \exp(-C_{ij}^2/8) (C_{ij}^2 + k ) \sigma^2.\end{multline*} 
Putting everything together, we get that the desired quantity we need to bound in the Lemma statement is at most $O(1) \exp(-C_{ij}/8^2) \sigma^2 \left( C_{ij}^2  + k \right)$.
\end{proof}

\begin{lemma}
		\label{lem:ip-if-iloses-on-i}
		Suppose $\sI_t$ holds. For any $i$ let $\bs{x} \sim {\rm Cl}(i)$. Then we have: $$\Pr_{\bs{x}} \left[ g_i^t(\bs{x}) = 0   \right]  {\E \left[ \langle \est[i]{t} - \true[i] , \bs{x} - \true[i] \rangle \, \Big{|} \, g_i^t(\bs{x})=0 \right]} \geq - \frac{\widetilde{V}_t}{2 k^5} -  O(k)  \exp(-C^2/8) (C^2 + k)  \sigma^2.$$
	\end{lemma}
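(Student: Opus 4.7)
\textbf{Proof plan for Lemma~\ref{lem:ip-if-iloses-on-i}.}

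The plan is to write $\bs{x} = \true[i] + \bs{z}$ with $\bs{z} \sim \sN(\mathbf{0}, \sigma^2 I)$ and convert the desired signed lower bound into an upper bound on the absolute value of the same quantity, attacked via a subspace-projection trick combined with Young's inequality. Let $\Pi := \mathrm{span}(\{\est[j']{t} : j'=1,\dots,k\} \cup \{\true[i]\})$, so $\dim \Pi \le k+1$. Two structural observations drive the argument: (a) after cancelling $\|\bs{z}\|^2$, the comparisons $\|\bs{x}-\est[j']{t}\|$ vs.\ $\|\bs{x}-\est[i]{t}\|$ reduce to linear inequalities in $\bs{z}_\Pi$, so the event $\{g_i^t(\bs{x})=0\}$ depends on $\bs{z}$ only through its projection $\bs{z}_\Pi$ onto $\Pi$; and (b) $\est[i]{t}-\true[i]\in\Pi$ gives $\langle\est[i]{t}-\true[i],\bs{z}\rangle = \langle\est[i]{t}-\true[i],\bs{z}_\Pi\rangle$, so the orthogonal part of $\bs{z}$ never enters.

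With these facts, I apply Young's inequality $|\langle a,b\rangle| \le \tfrac{\alpha}{2}\|a\|^2 + \tfrac{1}{2\alpha}\|b\|^2$ to reduce the task to bounding $\tfrac{\alpha}{2}T_a + \tfrac{1}{2\alpha}T_b$, where $T_a := \Pr[g_i^t(\bs{x})=0]\cdot\|\est[i]{t}-\true[i]\|^2$ and $T_b := \E[(1-g_i^t(\bs{x}))\|\bs{z}_\Pi\|^2]$; the constant $\alpha$ will be chosen as an absolute constant at the end. The pairwise union bound $(1-g_i^t(\bs{x})) \le \sum_{j\neq i}\mathbf{1}[\|\bs{x}-\est[j]{t}\|\le\|\bs{x}-\est[i]{t}\|]$ reduces both $T_a$ and $T_b$ to $j$-by-$j$ terms. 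Each pairwise failure probability is controlled by the $i\leftrightarrow j$-mirror of Lemma~\ref{lem:ibeatsj} (or by the sharper Lemma~\ref{cor:ibeatsj} when the candidate centers lie near the truth), since for $\bs{x}\sim{\rm Cl}(i)$ the event $\|\bs{x}-\est[j]{t}\|<\|\bs{x}-\est[i]{t}\|$ is structurally identical to the one analysed there after swapping indices.

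For the sum over $j$ I then execute the same two-case split used in the proof of Lemma~\ref{lem:ip-if-iwins-on-j}. In the case $\max(\|\est[i]{t}-\true[i]\|, \|\est[j]{t}-\true[j]\|) > \sigma/C_{ij}$, the hypothesis $\widetilde{V}_t > \sigma^2/C_{ij}^2$ lets me convert the $\sigma^2$ factor appearing in the $T_b$ computation into $C_{ij}^2\widetilde{V}_t$; together with the loose failure probability $\exp(-\Omega(C_{ij}^2))$ and $C = \Omega(\sqrt{\log k})$, both $T_a$ and this piece of $T_b$ contribute $\widetilde{V}_t / k^{\Omega(1)}$ after summing over $j$, and choosing $\alpha$ a suitable absolute constant produces the stated $\widetilde{V}_t/(2k^5)$ term. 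In the complementary case, Lemma~\ref{cor:ibeatsj} yields the sharper probability $O(1)\exp(-C_{ij}^2/8)$, and I run the truncated-Gaussian calculation from Lemma~\ref{lem:exp-if-iwins-on-j} verbatim but with $\bs{z}_\Pi$ in place of $\bs{z}$; because $\dim \Pi = O(k)$, the orthogonal-direction contribution to $\E[\|\bs{z}_\Pi\|^2 \mid \cdot]$ is $O(k)\sigma^2$ rather than $O(d)\sigma^2$, giving a per-$j$ bound $O(1)\exp(-C_{ij}^2/8)(C_{ij}^2+k)\sigma^2$; summing over $j\neq i$ produces the claimed $O(k)\exp(-C^2/8)(C^2+k)\sigma^2$ term.

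The main obstacle, mirroring the companion Lemma~\ref{lem:ip-if-iwins-on-j}, is avoiding a factor of $d$ in the bound on $T_b$: the projection-onto-$\Pi$ observation is precisely what replaces $d$ by $k$ in the final bound. Beyond that, the argument is a careful case-split and bookkeeping exercise that closely mirrors the proof already spelled out for Lemma~\ref{lem:ip-if-iwins-on-j}.
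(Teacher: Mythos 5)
Your proposal matches the paper's proof in all essentials: the same decomposition $\bs{x}=\true[i]+\bs{z}$, the same projection onto the $O(k)$-dimensional subspace $\Pi$ to replace $d$ by $k$, the same Young's/AM-GM step, the same union bound over $j\neq i$ for $\Pr[g_i^t(\bs{x})=0]$, and the same two-case split via Lemmas~\ref{lem:ibeatsj} and~\ref{cor:ibeatsj}. The only (cosmetic) refinement is that you carry the per-$j$ decomposition through both $T_a$ and $T_b$ and leave a free Young's parameter $\alpha$, whereas the paper fixes $\alpha=1$ and bounds the total failure probability by $k\exp(-\Omega(C^2))$ before invoking the case split.
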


\begin{proof}
Note that the expectation above is exactly: $${\E \left[ \Big{\langle} \est[i]{t} - \true[i] ,  (\true[i] +\bs{z} - \true[i])  \Big{\rangle} \, \Big{|} \, g_t^t({\bf \true[i] + \bs{z}})=0 \right]}= {\E \left[ \Big{\langle} \est[i]{t} - \true[i] ,  \bs{z}  \Big{\rangle} \, \Big{|} \, g_i^t({\bf \true[i] + \bs{z}})=0 \right]}.$$
Now, note that the condition $g_i^t(\bs{x})=0$ depends on only the projection of $\bs{x}$ onto the subspace spanned by the $k+1$ vectors $\true[i]$ and the candidate centers $\est[j']{t}$ for all $j'$. So the projection of $\bs{z}$ on directions orthogonal to this subspace remain normal variables with mean $0$ and variance $\sigma^2$, and hence their contribution to the inner product is $0$. So we effectively only need to bound ${\E \left[ \Big{\langle} \est[i]{t} - \true[i] ,  {\bs{z}_{\Pi}}  \Big{\rangle} \, \Big{|} \sI_{t} \, {\rm and } \, g_i^t({\bf \true[i] + \bs{z}})=0 \right]}$.
Here, ${\bs{z}_{\Pi}}$ denotes the projection of $\bs{z}$ onto the subspace spanned by the $k$ candidate centers and the true mean $\true[j]$. So overall, we get the following: 
$${\E \left[ \Big{\langle} \est[i]{t} - \true[i] ,  \bs{z} \Big{\rangle} \, \Big{|} \, g_t^t({\bf \true[i] + \bs{z}})=1 \right]}\geq -\frac12 \left( \| \est[i]{t} - \true[i] \|^2 + {\E \left[ \| {\bs{z}_{\Pi}}  \|^2 \, \Big{|} \, g_i^t({\bf \true[i] + \bs{z}})=0 \right]} \right).$$ Again, ignoring the effect of conditioning --- we deal with this in a manner identical to that in the proof of Lemma~\ref{lem:exp-if-iwins-on-j}---the LHS above is at least $ - \frac12 \left( \widetilde{V}_t + k \sigma^2 \right)$. Finally we note that $\Pr_{\bs{x} \sim {\rm Cl}(i)} \left[ g_i^t(\bs{x}) = 0 \right]$ is at most $\sum_{j \neq i} \Pr_{\bs{x} \sim {\rm Cl}(i)} \left[ \| \bs{x} - \est[i]{t} \| \geq \| \bs{x} - \est[j]{t} \| \right]$, which in turn is at most $k \exp(-\Omega(C^2))$ by Lemma~\ref{lem:ibeatsj}.
So multipyling these, and using two cases similar to the above proof, we get that the overall expression is at least $-\frac{\widetilde{V}_t}{2k^5} - O(k) \exp(-C^2/8) \left( C^2 + k \right) \sigma^2 $ as long as $C = \Omega(\sqrt{\log k})$.
\iffalse Like the earlier lemma, we now consider two cases, depending on whether $\exists j {\rm s.t} \max (\| \est[i]{t} - \true[i] \|, \| \est[j]{t} - \true[j] \| ) >  \|\true[i]-\true[j]\|/C$ or not.

\medskip \noindent {\bf Case (i): $\exists j {\rm s.t} \max (\| \est[i]{t} - \true[i] \|, \| \est[j]{t} - \true[j] \| ) > \sigma/C$.} In this case, note that the expectation calculated above is at least $-\frac12 (k+1) \widetilde{V}_t$. But note that the probability multiplier is at most $k \exp(-\Omega(C^2))$, so our overall bound when $C \geq \Omega(\sqrt{\log k})$ is at least $- \frac{1}{k^5} \widetilde{V}_t$.

\medskip \noindent {\bf Case (ii): $\forall j, \max (\| \est[i]{t} - \true[i] \|, \| \est[j]{t} - \true[j] \| ) < \sigma/C$.} In this case, each of the probability multipliers in the union bound sum in this case is at most $O(1) \exp(-\|\true[i]-\true[j]\|^2) \leq O(1) \exp(-C^2)$ by Lemma~\ref{cor:ibeatsj}. So we get that the overall expression we seek to bound is at least $-O(k) \exp(-C^2) k \sigma^2$. Combining the two cases yields the lemma. Note that the lemma statement has an additional term -- we remark that this arises from the rigorous calculations taking into account the conditioning much like those in the proof of Lemma~\ref{lem:ip-if-iwins-on-j}.
\fi
\end{proof}

\section{Complete Details: Ensuring Proximity Condition Via Super-Martingales} \label{app:martin}
Our next key result is to show that a sample path through $N$ steps
satisfies $\sI_N = 1$ w.p $\geq 1 - \frac{1}{{\rm poly(N)}}$,
for suitable initialization. Recall that $\sI_N = 1$ if
that $\max_i \| \kest{i}{0} - \ktrue{i} \| \leq \frac{C \sigma}{10}$ for all $0 \leq t \leq N$.
In the rest of this section, we assume that the center separation $C \geq \Omega(\sqrt{\log k})$.

\begin{theorem}
\label{thm:mart}
Suppose our initial estimates $\kest{i}{0}$ satisfy
$\max_i \| \kest{i}{0} - \ktrue{i} \|\leq \frac{C \sigma }{20}$, then w.p $\geq 1 - \frac{1}{{\rm poly(N)}}, \sI_t = 1~\forall~ 1 \leq t \leq N$.
\end{theorem}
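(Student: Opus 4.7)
The plan is to reduce \refthm{mart} to a martingale concentration bound via a truncation-plus-stopping argument. For each cluster $i$, define the stopping time
$$\tau_i := \inf\{t \geq 0 : \widetilde{E}_t^i > C^2\sigma^2/100\}, \qquad \tau := \min_i \tau_i,$$
and note that the event $\{\sI_N = 1\}$ coincides with $\{\tau > N\}$. It therefore suffices, after a union bound over the $k$ clusters, to show that $\Pr[\tau_i \leq N] \leq 1/N^6$ for each fixed $i$. Throughout, we work with the stopped process $\bar E_t^i := \widetilde{E}_{t \wedge \tau_i}^i$ and its Doob decomposition $\bar E_t^i = \bar E_0^i + \sum_{s<t} D_s^i + M_t^i$, where $D_s^i = \E[\bar E_{s+1}^i - \bar E_s^i\mid \sF_s]$ is the predictable drift and $M_t^i$ is a mean-zero martingale.

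On $\{\tau_i > s\}$ the condition $\sI_s$ holds, so \reflem{per-cluster} applies and gives $D_s^i \leq -\tfrac{\eta}{4k}\bar E_s^i + \gamma$ with $\gamma := O(\eta^2 d\sigma^2) + O(k\eta)\exp(-C^2/8)(C^2+k)\sigma^2$; here the cross-cluster term $\tfrac{\eta}{k^5}\widetilde{V}_s$ is harmless because on $\{\tau > s\}$ we have $\widetilde{V}_s \leq C^2\sigma^2/100$, which the factor $k^{-5}$ crushes into the contraction. Telescoping yields $\sum_{s<N} D_s^i \leq 4k\gamma/\eta$, and under the sample-complexity hypothesis $N = \Omega(k^3 d^3\log d)$ together with $\eta = \Theta(k \log N/N)$ this quantity is $o(C^2\sigma^2)$, well below $C^2\sigma^2/400$.

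The crux is controlling $\max_{t\leq N}|M_t^i|$. Expanding the update,
$$\bar E_{s+1}^i - \bar E_s^i \;=\; \mathbb{1}[\tau_i>s]\cdot \eta\, g_i^s(\bs{x})\bigl(2\langle \est[i]{s} - \true[i],\, \bs{x} - \est[i]{s}\rangle + \eta\,\|\bs{x} - \est[i]{s}\|^2\bigr),$$
and since $\|\est[i]{s} - \true[i]\| \leq C\sigma/10$ on $\{\tau_i>s\}$ while $\|\bs{x} - \est[i]{s}\| \leq \|\bs{z}^s\| + O(C\sigma)$ with $\bs{z}^s$ Gaussian, the increments are sub-exponential rather than bounded. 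To apply a standard Azuma-Hoeffding bound, I would truncate: let $\sG := \{\|\bs{z}^s\| \leq R_0\ \forall s\leq N\}$ with $R_0 = \Theta(\sigma\sqrt{d\log N})$, so that $\Pr[\sG^c] \leq 1/N^7$ by a Gaussian tail bound and a union bound. Define $\tilde M_t^i$ to be $M_t^i$ stopped at the first time $s\leq N$ such that $\|\bs{z}^s\| > R_0$; this is a martingale with increments bounded by $R := O(\eta C\sigma^2\sqrt{d\log N})$ and it agrees with $M_t^i$ on $\sG$.

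Applying Azuma-Hoeffding to $\tilde M_t^i$,
$$\Pr\!\left[\max_{t\leq N}|\tilde M_t^i| \geq \tfrac{C^2\sigma^2}{200}\right] \;\leq\; 2\exp\!\left(-\frac{(C^2\sigma^2/200)^2}{8 N R^2}\right) \;\leq\; 2\exp\!\left(-\Omega\!\left(\frac{C^2 N}{k^2 d \log^3 N}\right)\right),$$
which is $\leq 1/N^7$ under the assumed sample complexity. Combining the martingale bound with the drift bound: on the intersection of $\sG$ and the high-probability event just described,
$$\bar E_t^i \;\leq\; \bar E_0^i + \sum_{s<t}D_s^i + \max_{t\leq N}|M_t^i| \;<\; \tfrac{C^2\sigma^2}{400} + o(C^2\sigma^2) + \tfrac{C^2\sigma^2}{200} \;<\; \tfrac{C^2\sigma^2}{100},$$
forcing $\tau_i > N$; a final union bound over $i$ completes the proof. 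The main obstacle is the unbounded Gaussian tails of the per-step increments, which I handle by the stopping-on-truncation trick described above so that the martingale structure is preserved exactly while the truncation cost is only polynomially small; a secondary technical point is making sure that the $\widetilde{V}_t/k^5$ coupling across clusters in \reflem{per-cluster} is neutralized by the uniform cap enforced by $\tau$ (since $\tau = \min_i \tau_i$, whenever one cluster is still running, \emph{all} clusters satisfy the proximity condition).
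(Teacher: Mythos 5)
Your overall plan---stopping time, Doob decomposition, a concentration inequality for the martingale part, union bound---is sound in spirit, but there is a concrete gap in how you bound the predictable drift, and this is precisely the step the paper's proof is designed to avoid.

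You write that ``telescoping yields $\sum_{s<N} D_s^i \leq 4k\gamma/\eta$.'' This is not a consequence of the inequality $D_s^i \leq -\tfrac{\eta}{4k}\bar E_s^i + \gamma$. What \emph{is} a consequence is the expectation recursion $\E[\bar E_{t+1}^i] \leq (1-\eta/4k)\E[\bar E_t^i] + \gamma$, whose fixed point is $4k\gamma/\eta$; but that is a statement about $\E[\bar E_t^i]$ at a single time, not a pathwise bound on the accumulated drift $A_t = \sum_{s<t} D_s^i$. Pathwise, all you can conclude from $D_s^i \leq \gamma$ is $A_N \leq N\gamma$, and with $\eta = \Theta(k\log N/N)$ this carries an extra factor of $N\eta/k = \Theta(\log N)$: $N\gamma = \Theta(k^2\log^2 N/N)\,d\sigma^2 + \Theta(k^2\log N)\exp(-C^2/8)(C^2+k)\sigma^2$. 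That second term is not controlled by the separation condition $C = \Omega(\sqrt{\log k})$ alone when $\log N$ can grow (\emph{e.g.}\ when $d$ is super-polynomial in $k$), so the bound as written needs a stronger hypothesis than the theorem assumes, and the claimed ``$4k\gamma/\eta$'' figure is simply missing the $\log N$ factor. The deeper issue is that the Doob decomposition tangles $A_t$ and $M_t$ together (since $\bar E_t^i = \bar E_0^i + A_t + M_t$, the two pieces are anti-correlated along any sample path), and trying to bound $A_t$ independently of $M_t$ is where your argument stalls.

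The paper sidesteps this entirely by (i) working with the \emph{unsquared} error $e_t^i = \|\est[i]{t} - \true[i]\|$, whose one-step increments are genuinely sub-Gaussian with no truncation needed (the dangerous $\eta^2\|\bs{x}-\est[i]{t}\|^2$ chi-square term in your squared-error increment disappears), and then applying a sub-Gaussian Azuma inequality (Shamir, 2011); and (ii) defining the supermartingale only on \emph{episodes} where $e_t^i$ first crosses the threshold $C\sigma/20$ from below and before it reaches $C\sigma/10$. On such an episode, Lemma~\ref{lem:supermartingale} shows the drift is \emph{non-positive at every step} (because the contraction $-\tfrac{\eta}{3k}\tilde E_t^i$ dominates the additive term $\gamma$ once $\tilde E_t^i \geq C^2\sigma^2/400$), so there is no accumulated drift term to bound at all: the entire burden falls on the martingale deviation, which is handled cleanly. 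A short one-step lemma (Lemma~\ref{lem:m-onestep}) ensures the episode starts safely at or below $3C\sigma/40$, and a union bound over the at most $Nk$ episodes finishes the argument. If you want to keep your Doob-decomposition framing, the fix is essentially to import the same episode structure: only activate your (super)martingale when $\bar E_t^i \geq C^2\sigma^2/400$, on which event the drift is $\leq 0$ pointwise, eliminating the $N\gamma$ bookkeeping entirely. Your truncation-at-$R_0$ trick for the increments would then be unnecessary if you also switch to the unsquared error, though it is a workable if clumsier alternative if you prefer to stay with squared error.
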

\begin{proof}
We first recall and define some useful quantities.
Firstly, $\tilde{E}_t^i = \| \kest{i}{t} - \ktrue{i} \|^2$ denotes the current squared error for cluster $i$ after the $t^{th}$ streaming update. It's also useful to define the quantity $e_t^i = \| \kest{i}{t} - \ktrue{i} \|$. Another important quantity is $\tilde{V}_t = \max_i \tilde{E}_t^i$,
and analogously $v^t = \max_i e_t^i$.
We will repeatedly use that $\tilde{V}^t = (v^t)^2$ and $\tilde{E}_t^i = (e_t^i)^2$.

Our argument proceeds as follows. Intuitively, if we start at $\frac{C \sigma}{20}$
within the true clusters, we need a lot of \textit{bad events} to keep moving the estimates
away and out of the $\frac{C \sigma}{20}$ radius ball. Since the samples ${\bf{x^t}}$ are independent,
we can use a concentration inequality to bound the probability with which a lot of bad
events occur together. At a high level, we can think of $Z(t) = e_{t+1}^i - e_t^i$ as
a martingale difference sequence on which we want to apply a concentration inequality. In order to do so, we perform the following steps.
\begin{itemize}
\item Bound $\E[ Z(t) \mid {\bf{x^1}}, {\bf{x^2}}, \hdots {\bf{x^t}}]$, and show conditions under which
this quantity is negative (since we want to show decrease). This is discussed in Lemma~\ref{lem:supermartingale}. At a high level, we show that the error term decreases on average in one step if the current error is sufficiently large (i.e., at least $C \sigma/20$). So we start a super-martingale series whenever the error term exceeds this value, and show that the probability of this series exceeding $C \sigma/10$ is negligible. We stop this series if the error falls below $C \sigma/20$, and start a new series when the error next exceeds this lower threshold of $C \sigma/20$. Since there can be at most $N k$ such series' (across clusters), a simple union bound would then suffice.
\item While the differences $Z(t)$ are not bounded, they have sub-gaussian tails. When a point is correctly
clustered, the error is roughly the norm of a Gaussian variable and is hence sub-gaussian. When a point is
incorrectly classified, the error is sub-gaussian. However the the mean is appproximately $C_{ij} \sigma$, which can be arbitrarily large. We deal with this by using Lemma~\ref{lem:ibeatsj} which says that the probability of misclassification is small. There are some more technical details
which are covered in Lemma~\ref{lem:subgaussian}.
\item As a next step, we use Azuma Hoeffding style inequality for sub-gaussians \citep{shamir2011variant}
to complete the proof (Lemma~\ref{lem:azuma}). However, we need to be careful while defining the martingale
sequences on which we apply this concentration inequality in order to satisfy the conditions required in Lemma~\ref{lem:supermartingale}
and Lemma~\ref{lem:subgaussian}. This forms the final part of the proof.
\end{itemize}

\begin{lemma}
\label{lem:supermartingale}
When $\frac{C \sigma}{20} \leq e_t^i$ and $v^t \leq \frac{C \sigma}{10}$, we have
$\E_{\bf{x^{t+1}}} [ e_{t+1}^i ] \leq e_t^i$.
\end{lemma}

\begin{proof}
We first show that $\E_{\bf{x^{t+1}}} [\tilde{E}_{t+1}^i] \leq \tilde{E}_t^i$.
The proof follows directly from Lemma~\ref{lem:per-cluster} (reproduced below)
\begin{align*}
\E_{\bf{x^{t+1}}} [ \tilde{E}_{t+1}^i] \leq& (1-\frac{\eta}{3k}) \widetilde{E}^i_t + \underbrace{\frac{\eta}{k^5} \widetilde{V}_t +
 O(1) \eta^2  d \sigma^2}_{ f(C, \eta, k)} + \underbrace{O(k) \eta (1-\eta) \exp(-C^2) C^2 \sigma^2}_{g(C, \eta, k)}  \, .
\end{align*}
Setting $\eta = \frac{ 3k \log 3N}{N}$.The term $f(C, \eta, k) + g(C, \eta, k) \leq \frac{\eta}{3k} \tilde{E}_t^i$
when $\tilde{V^t} \leq \frac{C^2 \sigma^2}{100}$ and $\tilde{E}_t^i \geq \frac{C^2 \sigma^2}{400}$.

Therefore, $\E_{\bf x} \left[ \tilde{E}_{t+1}^t \right] \leq \tilde{E}_t^i$.
By Jensen's inequality, we have $( \E_{\bf{x^{t+1}}}[e_{t+1}^i])^2 \leq \E_{\bf{x^{t+1}}} [\tilde{E}_{t+1}^i]$,
which in turn we showed is $\leq \tilde{E}_t^i$.
Taking square-roots, we get the required result.
\end{proof}

%%%%%%%%%%%%

In order to bound the deviation from the mean, we appeal to Azuma style inequality for Subgaussians \cite{shamir2011variant}. We show next that $e^i_{t+1} - e_t^i$ has sub-gassian behaviour under
some conditions.
Since $\E_{\bf{x^{t+1}}}[ e_i^{t+1}]  - e_i^t$ is not zero, we'd
need the following lemma to deal with tail behaviour for non-zero mean variables.

%%%%%%%%%%%% Helper
\begin{lemma}
\label{lem:helper}
Suppose $X$ is a random variable that satisfies
$\Pr[X \geq a] \leq b_0 \exp \Big ( \frac{- a^2}{\sigma_0^2} \Big)$ for some $b_0 \geq 1$.
Then for any $\delta > 0$,
\begin{align*}
\Pr[X + \delta \geq a] \leq b_0 \exp \Big (\frac{\delta^2}{4 C^2 \sigma_0^2} \Big ) \exp \Big ( \frac{- a^2}{4 C^2 \sigma_0^2} \Big).
\end{align*}
Similarly, suppose we have $\Pr ( X \leq -a) \leq b_0 \exp \Big ( \frac{- a^2}{\sigma_0^2} \Big)$ for some $b_0 \geq 1$, then for any $\delta > 0$,
\begin{align*}
\Pr [X - \delta \leq -a] \leq b_0 \exp \Big (\frac{\delta^2}{4 C^2 \sigma_0^2}\Big)\exp \Big ( \frac{- a^2}{4 C^2 \sigma_0^2} \Big).
\end{align*}
\end{lemma}

\begin{proof}
For $ a \geq \delta$, we have $\Pr[X + \delta \geq a] = \Pr [ X \geq (a - \delta) ] = \exp \Big (\frac{ -(a - \delta)^2}{\sigma_0^2} \Big)$.

Hence we need to essentially show that
$(a - \delta)^2 \geq \frac{a^2}{4 C^2} - \frac{\delta^2}{4 C^2}.$

When $a \geq \frac{2 \delta c}{2c - 1}$, we have $(a - \delta)^2 \geq \frac{a^2}{4 C^2}$.

In the range, $\delta \leq a \leq \frac{2\delta c}{2c-1}$
the quadratic $(a - \delta)^2 + \frac{\delta^2}{4 C^2} - \frac{a^2}{4 C^2} \geq 0$.
This can be verified by noting that $a = \delta$ is the only root in the interval
and the condition is true at the end points of the interval.

An identical proof holds for bounding $\Pr [X - \delta \leq -a]$ when $a \geq \delta$.

When $ a \leq \delta$, the quantity $b_0 \exp \Big (\frac{ \delta^2 - a^2}{4 C^2 \sigma_0^2} \Big)$
is greater than $1$ and hence the result is trivially true in this case.
\end{proof}

We now study the tail behaviour of the quantity $e_{t+1}^i - e_t^i$.

\begin{lemma}
\label{lem:subgaussian}
The random variable $d_t^{i} = e_{t+1}^i - e_t^i$ has sub-gaussian tails,
when $v^t \leq \frac{C \sigma}{10}$. More precisely,
\begin{align*}
\forall a \geq 0, &\Pr [ d_t^{i} \geq a] \leq \exp \Big (- \frac{a^2}{\eta^2 \sigma^2 d} \Big), ~\text{and} \\
\forall a \geq 0, &\Pr [ d_t^i \leq - a] \leq \exp \Big (- \frac{a^2}{\eta^2 C^2 \sigma^2 d} \Big) \exp \Big(\frac{1}{100}\Big).
\end{align*}
\end{lemma}

\begin{proof}
For brevity in notation, we denote the random variable ${\bf x^{t+1}}$ as ${\bf x}$.
Also, let ${\bf w} = g_i^t({\bf{x}}){\bf{x}} + (1 - g_i^t({\bf{x}}))\kest{i}{t}$,
and let $\bf{z}$ denote a zero mean Gaussian in $d$ dimensions with variance $\sigma^2 I$.

We bound the probability that $d_t^i \geq a$, for some $a > 0$. We work through this in cases.
\paragraph{Case i.} Let ${\bf{x}} \sim Cluster(j), j \neq i$.
Recall from the update rule (Equation~\ref{eqn:update_g}),
\begin{align*}
e_{t+1}^i  - e_t^i &= \| (1 - \eta) \kest{i}{t} + \eta \bf{w} - \ktrue{i} \| - \| \kest{i}{t} - \ktrue{i} \| \\
                   &\leq  \eta \| \bf{w} - \kest{i}{t} \|.
\end{align*}
For convenience, let's consider the random variable $B = \| \bf{w} - \kest{i}{t} \|$
and bound $p = \Pr[ B \geq a' ]$.

Clearly, this term is $0$ when $g_i^t({\bf{x}}) = 0$, and $\| {\bf{x}} - \kest{i}{t} \|$
otherwise. Therefore, $\Pr[B \geq a' ] \leq \underbrace{\Pr[ g_i^t({\bf{x}}) = 1]}_{p_1}$
and $\Pr[ B \geq a' ] \leq \underbrace{\Pr [ \| {\bf{x}} - \kest{i}{t} \| \geq a']}_{p_2} $.

We can bound $p_2$ using Standard Gaussian concentration. Let ${\bf{x}} = \ktrue{j} + {\bf{z}}$.
Then $\| {\bf{x}} - \kest{i}{t} \| \leq \| {\bf{z}} \| + \frac{C \sigma}{10} + C_{ij} \sigma \leq \| {\bf{z}} \| + \frac{11}{10} C_{ij}\sigma$.

Therefore, $p_2 \leq \exp \big ( - \frac{(a' - \frac{11}{10}C_{ij}\sigma )^2}{\sigma^2 d} \big )$.

Hence, when $a' \geq \Omega(C_{ij}) \sigma$, we have $p \leq p_2 \leq \exp ( - \frac{a'^2}{2 \sigma^2 d})$.
From Lemma~\ref{lem:ibeatsj}, when $a' \leq \Omega(C_{ij}) \sigma$, $p \leq p_1 = \exp( -\Omega (C_{ij}^2) ) \leq \exp ( - \frac{a'^2}{\sigma^2})$.

The random variable of interest, $d_i^t = \eta B$.
Therefore, $\Pr[ d_i^t \geq a] \leq \exp \big ( - \frac{a^2}{ \eta^2 \sigma^2 d} \big)$.
\paragraph{Case ii.} Let ${\bf{x}} \sim Cluster(i)$.
Just as before, if $g_i^t({\bf{x}}) = 0$, $A = 0$.
However in the case that $g_i^t({\bf{x}}) = 1$, we bound $d_i^t$ slightly differently.
slightly differently.
\begin{align*}
d_i^t &=  \| (1 - \eta) \kest{i}{t} + \eta {\bf x} - \ktrue{i} \| - \|\kest{i}{t} - \ktrue{i} \| \\
&\leq - \eta \| \kest{i}{t} - \ktrue{i}  \|  + \eta \| \bf{x} - \ktrue{i} \| \\
&\leq \eta \| \bf{z} \|,
\end{align*}
where the last inequality follows from ${\bf{x}} = \ktrue{i} + {\bf{z}}$ in this case.
$\Pr [ A \geq a ] \leq \Pr [ \| {\bf{z}} \| \geq \frac{a}{\eta} ] \leq \exp \big ( - \frac{a^2}{ \eta^2 \sigma^2 d} \big)$ by standard Gaussian concentration.

We now look at bounding the negative tails. Just as before, we consider two cases.
\paragraph{Case i.}
Let ${\bf{x}} \sim Cluster(j), j \neq i$.
Recall from the update rule (Equation~\ref{eqn:update_g}),
\begin{align*}
e_{t+1}^i  - e_t^i &= \| (1 - \eta) \kest{i}{t} + \eta {\bf{w}} - \ktrue{i} \| - \| \kest{i}{t} - \ktrue{i} \| \\
                   &\geq  - \eta \| {\bf{w}} - \kest{i}{t} \|.
\end{align*}
Therefore, $\Pr [ A \leq  - a]  = \Pr [ B \geq \frac{a}{\eta} ]$.
This is the same quantity that we bounded for the positive tails.

\paragraph{Case ii.} Let ${\bf{x}} \sim Cluster(i)$.
In this case when $g_i^t({\bf{x}}) = 1$, we get
\begin{align*}
e_{t+1}^i - e_t^i &= \| (1 - \eta) \kest{i}{t} + \eta {\bf{z}}  \| - \| \kest{i}{t} - \ktrue{i} \|\\
                  &\geq - \eta \| \kest{i}{t} - \ktrue{i} \| - \eta \| {\bf{z}} \| \\
                  &\geq -\eta \frac{C \sigma}{10} - \eta \| {\bf{z}} \|,
\end{align*}
where the last inequality follows from the assumptions of the theorem.
$\Pr [ - \eta \| {\bf{z}} \| \leq -a ] \leq \exp \Big ( \frac{a^2}{\eta^2 \sigma^2 d} \Big)$.
Using Lemma~\ref{lem:helper} with $\delta = \frac{\eta C \sigma}{10}$,
for $a \geq 0$, we get $\Pr [ A \leq -a] \leq \exp \Big( \frac{1}{400 d} \Big) \exp \Big( - \frac{a^2}{4 \eta^2 C^2 \sigma^2 d} \Big).$
The expected error can decrease a lot when the point is correctly classified,
so we lose a factor in the sub-gaussian parameter while bounding the negative tails.
\end{proof}

From the above lemma, we obtain a useful corollary which is stated below.
\begin{lemma}
\label{lem:m-onestep}
If $e_t^i \leq \frac{C \sigma}{20}$, then with probability $1 - \frac{1}{\poly (N)}$
$e_{t+1}^i \leq \frac{3 C \sigma}{40}$.
\end{lemma}
\begin{proof}
Result follows from Lemma~\ref{lem:subgaussian} with setting $a = \frac{C \sigma}{40}$ and $\eta = \frac{ 3 k \log N}{N}$.
\end{proof}
Equipped with lemmas~\ref{lem:supermartingale} and~\ref{lem:subgaussian},
we can now put things together. We need to carefully handle the conditions under
which the above lemmas hold. This motivates the definition of the following random processes.

$A^i(t) = 1$ if $e_i^t \geq \frac{C \sigma}{20}$ and $e_i^{t-1} < \frac{C \sigma}{20}$.
In other words, $A^i(t)$ is set to $1$  when $e_i^t$ crosses the threshold of $\frac{C \sigma}{20}$
from below.

We also define $Z_\tau^i$ as follows.
\[
    Z^i_\tau(t) = \begin{cases}
      e_{t + 1}^i - e_t^i ~ \text{if} \sum\limits_{t'=1}^t A(t') = \tau \text{ and } v^t < \frac{C \sigma}{10}, \\
      0, \text{ Otherwise}
      \end{cases}
\]
In other words, $Z_\tau^i$ is ``active'' and tracks the difference in error with time, when $\tau$ is the first time that error crosses $\frac{C \sigma}{20}$ from below and has not crossed $\frac{C \sigma}{10}$.
It takes the value $0$ at all other times.

\begin{lemma}
\label{lem:mainsubgaussian}
For the sequence $Z_\tau^i(t) - \E_{\bf{x^{t+1}}}[Z_\tau^i(t)]$, there are constants $b > 1, c> 0$ such that
for all $\tau, i, t$ and any $a \geq 0$, it holds that
\begin{align*}
\Pr ( Z_\tau^i(t) - \E_{\bf{x^{t+1}}}[Z_\tau^i(t)] > a \mid \bf{x^1}, \hdots \bf{x^t}) &\leq b \exp (-c a^2),  \\
\Pr ( Z_\tau^i(t) - \E_{\bf{x^{t+1}}}[Z_\tau^i(t)] < -a \mid \bf{x^1}, \hdots \bf{x^t} )& \leq b \exp (-c a^2).
\end{align*}
\end{lemma}

\begin{proof}
The proof is mainly based on Lemma~\ref{lem:subgaussian}.
Recall from the definition of $Z_\tau^i$ that $Z_\tau^i(t) = 0$ if $e_t^i < \frac{C \sigma}{20}$
or $v^t > \frac{C \sigma}{10}$. The lemma is trivially true in this case.
Therefore, in the rest of the proof we assume that $\frac{C \sigma}{20} \leq e_i^t$
and $v^t \leq \frac{C \sigma}{10}$.

We know that $\E_{\bf{x^{t+1}}}[Z_\tau^i(t)] <0$ (From Lemma~\ref{lem:supermartingale}).

However, in order to study the tail behaviour of $Z_\tau^i(t) - \E_{\bf{x^{t+1}}}[Z_\tau^i(t)]$
we need to also lower bound the quantity $\E_{\bf{x^{t+1}}}[Z_\tau^i(t)]$.
Once again, we are only interested in the case that $\frac{C \sigma}{20} \leq e_i^t$
and $v^t \leq \frac{C \sigma}{10}$ and $Z_\tau^i(t) = e_{t+1}^i - e_t^i = d_t^i$.
From the update rule, we get
\begin{align*}
\E [ d_t^i] &= \| (\kest{i}{t} - \ktrue{i}) + \eta g(\bf{x}) [ \bf{x} - \kest{i}{t}] \| - \| \kest{i}{t} - \ktrue{i} \| \\
  &\geq -\eta \E [ g(\bf{x})\| \bf{x} - \ktrue{i} \|]  - \eta  \| \ktrue{i} - \kest{i}{t}\|  \\
  & \geq -\eta O(\sqrt{d}) \sigma - \eta \frac{ C \sigma}{10},
\end{align*}
where the last inequality follows from Lemma~\ref{lem:t1} and Jensen's inequality.

For ease of notation, let $\Delta = - \E[ d_t^i] \leq \eta O(\sqrt{d}) \sigma + \eta \frac{ C \sigma}{10}$.
Since we've now bounded the mean, we can use Lemma~\ref{lem:subgaussian} and
Lemma~\ref{lem:helper} to complete the proof as presented below.
\paragraph{Positive tail.}
Lemma~\ref{lem:helper} with $\delta = \Delta$, we get
$\Pr [ Z_\tau^i(t) - \E [ Z_\tau^i(t)] \geq a] \leq O(1) \exp \Big ( - \frac{a^2}{4 C^2 \eta^2 \sigma^2 d} \Big)$,
for $a \geq 0$.
\paragraph{Negative tail.}
$\Pr [ Z_\tau^i(t) - \E [ Z_\tau^i(t)] \leq -a] \leq \Pr [ Z_\tau^i(t) \leq - (a + \Delta)]  \leq \Pr [ Z_\tau^i(t) \leq - a] $.
From Lemma~\ref{lem:subgaussian}, $\forall a \geq 0$,
we have $\Pr [ Z_\tau^i(t) - \E [ Z_\tau^i(t)] \geq a] \leq O(1) \exp \Big ( - \frac{a^2}{4 C^2 \eta^2 \sigma^2 d} \Big)$.
\end{proof}

\begin{lemma}
\label{lem:azuma}
For all $\tau=1, 2, \hdots N$ and $i=1, 2, \hdots k$,
with probability $1 - \delta$,
\begin{align}
\label{eqn:azuma}
\sum\limits_{t=0}^{N-1} Z^i_\tau(t) \leq \sqrt {\frac{ \alpha C^2 \sigma^2 k^2 \log^2 N \log (\frac{1}{\delta})}{N} },
\end{align}
where $\alpha$ is some constant.
\end{lemma}
In particular, this gives us that $\Pr\Big[\sum\limits_{t=0}^{N-1} Z^i_\tau(t) \geq \frac{C \sigma}{40} \Big] \leq \frac{1}{\poly(N)}$.
\begin{proof}
We apply Azuma style concentration inequality to $Z^i_\tau(t) - \E_{\bf{x^{t+1}}}[Z^i_\tau(t)]$.

By definition, when $Z^i_\tau \neq 0$, the conditions for Lemma~\ref{lem:supermartingale}
and~\ref{lem:subgaussian} apply. Hence we can apply Theorem 2 of
\cite{shamir2011variant} with parameters $b = O(1)$ and $c = \frac{1}{C^2 d \sigma^4 \eta^2}$.
From Lemma~\ref{lem:supermartingale}, $\E[Z^i_\tau] \leq 0$ and hence the result.
\end{proof}

We say that a process $Z^i_\tau$ fails if there is a time $T$ such that $e_i^t > \frac{C \sigma}{10}$
and $Z^i_\tau(T) = e_t^{T+1} - e_i^T$, $\ie$ $Z_\tau^i$ was active at the instant $T$.
By definition of $Z^i_\tau$, $\sum \limits_{i=1}^N Z^i_\tau(t) = e^i_{T+1} - e^i_\tau$.

Recall that $\tau$ is the first time that $Z^i_\tau$ crosses $\frac{C \sigma}{20}$
from below. Lemma~\ref{lem:m-onestep} gives us that with probability $1 - \frac{1}{\poly(N)}$, $e^i_{\tau} \leq \frac{3 C \sigma }{40}$.
Taking union bound over both the bad events of Lemma~\ref{lem:m-onestep} and~\ref{lem:azuma},
we get that the probability that $Z^i_\tau$ fails is $\leq \frac{1}{\poly(N)}$.

For $\sI_t = 0$, atleast one of the process $Z^i_\tau$ has to fail (since we start with initialization
below $\frac{C \sigma}{20}$ and can't fail in a single step by Lemma~\ref{lem:m-onestep}). Taking union bound over the $k N$ such processes gives us the required result.
\end{proof}

\section{Complete Details: Soft streaming updates} \label{app:soft}
We first restate our main result here. 
\begin{theorem}[Streaming Update]
	Let $\bf{x^t}$, $1\leq t\leq N+N_0$ be generated using a mixture two balanced spherical Gaussians. Also, let the center-separation $C \geq 4$, and also suppose our initial estimate $\est{0}$ is such that $\| \est{0} - \true \| \leq \frac{C \sigma}{20}$. 
	
	Then, the streaming update of {\sf StreamSoftUpdate}$(N, N_0)$ , i.e, Steps 3-8 of Algorithm~\ref{alg:soft-streaming} satisfies: \[E_N \leq \underbrace{\frac{E_0}{\rm N^{\Omega(1)}}}_{{\rm bias}} + \underbrace{O(1)\frac{ \log N}{N} d \sigma^2}_{\rm variance} ,\]
	where $E_t = \E \left[ \| \est{t} - \true \|^2 \right]$. 
\end{theorem}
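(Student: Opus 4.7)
The plan is to mirror the two-part structure used in the hard-assignment analysis of Section~\ref{sec:analysis}: first establish a one-step expected contraction analogous to Lemma~\ref{lem:per-cluster}, and then use a super-martingale argument analogous to Theorem~\ref{thm:martingale} to ensure that $\|\est{t} - \true\| \leq C\sigma/10$ holds for every $t \leq N$ with probability $1 - 1/\mathrm{poly}(N)$. Crucially, unlike the hard case the soft update has $\true$ itself as a fixed point of the population map $M(\mu) := \mathbb{E}_{\bs{x}}[(2w(\mu,\bs{x}) - 1)\bs{x}]$, which is precisely why the offline approximation error disappears from the statement.

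For the one-step bound, I would write
\[
\est{t+1} - \true \;=\; (1-\eta)\,(\est{t} - \true) \;+\; \eta\,\bigl((2w_t(\bs{x})-1)\bs{x} - \true\bigr),
\]
and expand the squared norm after taking conditional expectation over the fresh sample $\bs{x}$. The cross term produces the inner product $\langle \est{t} - \true,\, M(\est{t}) - \true\rangle$, which I would control using a population EM contraction of the form $\|M(\est{t}) - \true\| \leq \kappa\,\|\est{t} - \true\|$ with $\kappa$ bounded away from $1$ on the ball $\|\est{t} - \true\| \leq C\sigma/10$ when $C \geq 4$, as established by the offline analyses of Balakrishnan--Wainwright--Yu and Daskalakis--Tzamos--Zampetakis. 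For the quadratic term I would use $|2w_t(\bs{x}) - 1| \leq 1$ together with the observation that when $\est{t}$ is near $\true$ the signed sample $(2w_t-1)\bs{x}$ concentrates around $\pm\true$ depending on which component produced $\bs{x}$, both of which equal $\true$ in expectation up to terms already charged to the cross term; this delivers $\mathbb{E}\|(2w_t - 1)\bs{x} - \true\|^2 = O(d\sigma^2 + \|\est{t} - \true\|^2)$. Combining the two terms yields
\[
\mathbb{E}\bigl[\|\est{t+1} - \true\|^2 \,\bigm|\, \est{t}\bigr] \;\leq\; (1 - c\eta)\,\|\est{t} - \true\|^2 \;+\; O(\eta^2\, d\sigma^2)
\]
for an absolute constant $c > 0$, whenever $\est{t}$ lies in the proximity region.

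For the martingale step I would parallel Section~\ref{sec:mart}: defining $e_t = \|\est{t} - \true\|$, the Jensen form of the above recursion gives $\mathbb{E}[e_{t+1} - e_t \mid \est{t}] \leq 0$ whenever $e_t \in [C\sigma/20,\, C\sigma/10]$, while the one-step perturbation is controlled by $\eta\,\|\bs{x} - \est{t}\|$ and hence inherits sub-Gaussian tails of scale $O(\eta\sigma\sqrt{d})$ from the underlying Gaussian components. A sub-Gaussian Azuma inequality then gives the proximity condition with $1/\mathrm{poly}(N)$ failure probability, exactly as in Theorem~\ref{thm:martingale}. Finally, unrolling the per-step contraction with $\eta = 3\log N/N$ produces the bias term $E_0/N^{\Omega(1)}$ from $(1 - c\eta)^N E_0$ and the variance term $O((d\sigma^2 \log N)/N)$ from the geometric sum of the $O(\eta^2 d\sigma^2)$ noise contributions; the low-probability event where proximity fails is dealt with as in the proof of Theorem~\ref{thm:kthm}. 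The main obstacle is obtaining the population contraction $\|M(\est{t}) - \true\| \leq \kappa\,\|\est{t} - \true\|$ uniformly on the entire initialization ball $\|\est{t} - \true\| \leq C\sigma/10$ rather than only in a tiny neighborhood of $\true$; this requires exploiting the symmetry of the balanced mixture and a careful Gaussian-tail analysis of $w_t$, and is what forces the assumption $C \geq 4$.
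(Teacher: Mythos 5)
Your proposal is correct and follows essentially the same two-part plan as the paper's proof: a one-step contraction $\E\bigl[\|\est{t+1}-\true\|^2 \mid \est{t}\bigr] \le (1-\Theta(\eta))\|\est{t}-\true\|^2 + O(\eta^2 d\sigma^2)$ obtained from the population EM contraction of Daskalakis et al.\ (the paper's Lemma~\ref{lem:yt} proves the slightly stronger colinearity $M(\mu)-\true = 2\gamma(\mu-\true)$ with $\gamma\le 1/(8C^2)$ via a symmetry argument, but only the norm version you invoke is needed), together with a sub-Gaussian super-martingale argument to maintain the proximity condition $\|\est{t}-\true\|\le C\sigma/10$. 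Unrolling with $\eta = 3\log N/N$ and absorbing the low-probability escape event as in the proof of Theorem~\ref{thm:kthm} then yields the stated bias--variance bound, exactly as done in Appendix~\ref{app:soft}.
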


\begin{proof}
The proof for the soft streaming updates follows the same skeleton as that of 
streaming hard updates which is detailed in Section~\ref{sec:analysis}. 
We first bound the quantity $\widehat{E}_{t+1} = \E_{\bf{x^{t+1}}} \left[\| \est{t+1} - \true \|^2 \right]$ (analogous to Lemma~\ref{lem:per-cluster}). 

\begin{lemma}\label{lem:soft-onestep}
Suppose $\sI_t$ is satisfied, \ie $\| \est{t} - \true \| \leq \frac{C \sigma}{10}$ then,
\begin{align*}
\widehat{E}_{t+1} \leq \big (1 - \frac{3 \eta}{8} \big)\tilde{E_t} + O(1) \eta^2 d\sigma^2.
\end{align*}
\end{lemma}

\begin{proof}
For brevity of notation, we refer to $\bf{x^{t+1}}$ as $\bf{x}$. 
All expectations are taken with respect to $\bs{x}$. 
Recall the update rule for soft streaming. 
\begin{align*}
\est{t+1} &= (1 - \eta) \est{t} + \eta [2w_t({\bf x}) - 1] \bf {x}. 
\end{align*}
Expanding the expression for $\widehat{E}_{t+1}$, we get
\begin{align}
\widehat{E}_{t+1} &= \E \left[ \|\est{t+1} - \true \|^2 \right] \nonumber \\ 
                  &= (1 - \eta)^2 \tilde{E}_t  + 2 \eta(1 - \eta) \Big{\langle}\est{t} \true, \underbrace{\E \left[ (2 w^t({\bf{x}}) - 1) \bf{x} \right]}_{\bf{y^t}} - \true \Big{\rangle} ~+  \nonumber \\ \label{eqn:soft-analysis}
                  &\eta^2 \|\true\|^2 + \eta^2 \E \left[(2 w^t({\bf{x}}) - 1)\| \bf{x} \|^2\right] - 2 \eta^2 \Big{\langle} \true, \underbrace{\E\left[(2 w^t({\bf{x}}) - 1)\bf{x} \right]}_{\bf{y^t}} \Big{\rangle} 
\end{align}
From the above expression, we can see that the key quantity to bound is $\E \left[ (2 w^t ({\bf{x}}) - 1) \bf{x} \right] \eqdef \bf{y^t}$. 
The next lemma provides an expression for $\bf{y^t}$. 
\begin{lemma}
\label{lem:yt}
Suppose $\sI_t = 1$. We have the following expression for ${\bf{y^t}} = \E \left[ (2 w^t({\bf{x}}) - 1) \bf{x} \right]$. 
\begin{align*}
\bf{y^t} &= 2 \gamma^t (\est{t} - \true) + \true, ~~\text{where}  \\
\gamma^t &\leq \frac{1}{8C^2},
\end{align*}
\end{lemma}

\begin{proof}
For simplicity, let's define the following two terms. 
\begin{align*}
{\bf{y_1}^t} &\eqdef \E \left[ w^t({\bf{x}}) {\bf{x}} \mid {\bf{x}} \sim \sN(\true, \sigma^2I) \right]\\
{\bf{y_2}^t} &\eqdef \E \left[ w^t({\bf{x}}) {\bf{x}} \mid {\bf{x}} \sim \sN(-\true, \sigma^2I) \right]
\end{align*}
Note that ${\bf y^t} = \frac{1}{2} \Big ({\bf y_1^t} + {\bf y_2^t} \Big)$. 
Our first observation is that $\bf{y_2}^t$ takes the form $\gamma (\est{t} - \true)$ 
for some $\gamma \in \R$. 
\begin{align*}
\bf{y_2^t} &= \int\limits_{-\infty}^{\infty} p({\bf x}) w^t ({\bf x}) {\bf x} dx, \\
p({\bf x})w({\bf x}) &=  
\frac{1}{\sqrt{2 \pi \sigma^2}} \frac {  \exp \big ( \frac{-\| \bf{x} - \est{t} \|^2 - \| \bf{x} + \true \|^2}{\sigma^2} \big )} {\exp \big ( \frac{-\| \bf{x} - \est{t} \|^2}{\sigma^2} \big ) + \exp \big ( \frac{-\| \bf{x} + \est{t} \|^2}{\sigma^2} \big ) }.
\end{align*}
When $\langle \est{t} - \true, x \rangle = 0$, we have $p({\bf x}) w({\bf x}) = p(-{\bf x}) w(-{\bf x})$. 
Therefore, the terms cancel in the integration and we get $T_2 = \gamma (\est{t} - \true)$ for some $\gamma \in \R$. 

Our next observation is that $\bf{y_1^t} - \bf{y_2^t}  = \true$. 
This can be verified by seeing that for every term corresponding to $\true + \bf{z}$
in $\bf{y_1^t}$, there is a term corresponding to $- \true - \bf{z}$ in $\bf{y_2^t}$.
These two terms have the same multiplier $p({\bf x})$ but weight multipliers summing to one. 
Hence taking the difference of each term in ${\bf y_1^t}$ and ${\bf y_2^t}$ and integrating gives the required result. 
Since ${\bf y^t} = \frac{1}{2} \Big ({\bf y_1^t} + {\bf y_2^t} \Big)$, we get that
${\bf y^t} = 2 \gamma^t (\est{t} - \true) + \true$ where $\gamma^t$ is some constant. 

We now bound the value of this constant $\gamma^t$. 
Theorem 2 of \cite{daskalakis2016ten} gives the following bound on $\gamma^t$. 
(Our quantity ${\bf {y}}$ is the same as $\lambda^{t+1}$ and $\est{t}$ is $\lambda^t$ as per their notation). 
\begin{align}
\gamma^t &\leq \max \Bigg \{ \exp \Big ( \frac{-\| \est{t} \|^2}{2 \sigma^2} \Big), \exp \Big (\frac{- \langle \est{t}, \true \rangle^2}{2 \| \true \|^2 \sigma^2} \Big) \Bigg \}.
\end{align}
Suppose $\sI_t = 1$, then we have $\| \est{t} - \true \| \leq \frac{C \sigma}{10}$. 
That gives us that $\| \est{t} \| \geq \frac{9}{10} C \sigma$ and $\langle \est{t}, \true \rangle \geq \frac{99}{50}C^2 \sigma^2$. 
Together, this gives us that $\gamma^t \leq \frac{1}{8 C^2}$. 
\end{proof}
Now that we have an expression for $\bf{y^t}$, we can plug it back into \eqref{eqn:soft-analysis} 
to complete the proof. 

\begin{align*}
\widehat{E}_{t+1} &\leq (1 - \eta)^2 \tilde{E}_t + 4 \lambda^t \eta (1 - \eta)\tilde{E}_t \\
                  &+ \eta^2 \Big ( \sigma^2 d  + 2 \gamma^t \langle \true - \est{t}, \true \rangle \Big).
\end{align*}
The inequality follows from the fact that $(2 w^t({\bf x}) - 1) \leq 1 $ and 
$\E [ \| {\bf x} \|^2]  = \| \true \|^2 + d \sigma^2$. 
Since $2 \langle \true - \est{t}, \true \rangle \leq \| \est{t} - \true \|^2 + \| \true \|^2$, we get 
\begin{align*}
\widehat{E}_{t+1} &\leq \Big ( (1 - \eta)^2 + 4 \lambda^t \eta + \lambda^t \eta^2 \Big) \tilde{E}_t + \eta^2 \Big (\sigma^2 d + 2 \lambda^t \| \true \|^2 \Big) \\
                  &\leq \Big ( 1 - \frac{3 \eta}{8} \Big) \tilde{E}_t + O(1) \eta^2 \sigma^2 d, 
\end{align*}
where the last inequality follows from $\lambda^t \leq \frac{1}{8 C^2}$ (Lemma~\ref{lem:yt}).
\end{proof}
We have now shown error reduction in single iteration at step $t+1$, 
assuming that $\sI_t$ holds.

In order to complete the proof of the main theorem, we require the following martingale lemma. 
\begin{lemma}
Suppose our initial estimates $\est{0}$ satisfy 
$ \| \est{0} - \true \|\leq \frac{C \sigma }{20}$, then $\sI_t = 1$ w.p $1 - O (\frac{1}{{\rm poly }(N)} ), ~\forall~ 1 \leq t \leq N$.
\end{lemma}
\begin{proof}
This proof is identical to that of Theorem~\ref{thm:mart}, 
where we use Azuma Hoeffding inequality to bound the sum of independent sub-Gaussian random variables.  
The only change is that the random variable $g^t(\bf{x})$ is replaced by $2w^t({\bf{x}}) - 1$, 
while obtaining the sub-gaussian parameters. 
We omit the details from this presentation. 
\end{proof}

Using this martingale lemma, we can relate the quantity $\widehat{E}_{t+1}$ and $E_{t+1}$
similar to what we did for hard updates. Summing over $N$ steps, setting $\eta = \frac{3 \log N}{N}$ 
and observing that maximum error when $\sI_N =0 $ is $\| \true \|^2$ gives the final result. 
\end{proof}

\end{document}